\newcommand{\mD}{\mathcal{D}}
\newcommand{\Mean}{{\mathbb{E}}}
\newcommand{\prob}{\mathbb{P}}
\DeclareMathOperator*{\argmin}{arg\;\!min}
\DeclareMathOperator*{\argmax}{arg\;\!max}
\newcommand{\data}{\mathcal{D}}
\newcommand{\median}{\operatorname{Median}}
\newcommand{\name}[1]{\texttt{#1}}
\newtheorem{theorem}{Theorem}
\newtheorem{proposition}{Proposition}
\newtheorem{lemma}{Lemma}
\newtheorem{definition}{Definition}
\begin{document}

% If your paper is accepted and the title of your paper is very long,
% the style will print as headings an error message. Use the following
% command to supply a shorter title of your paper so that it can be
% used as headings.
%
\runningtitle{Robust Offline Policy Evaluation and Optimization with Heavy-Tailed Rewards}

% If your paper is accepted and the number of authors is large, the
% style will print as headings an error message. Use the following
% command to supply a shorter version of the authors names so that
% they can be used as headings (for example, use only the surnames)
%
%\runningauthor{Surname 1, Surname 2, Surname 3, ...., Surname n}

\twocolumn[

\aistatstitle{Robust Offline Reinforcement Learning with Heavy-Tailed Rewards}

\aistatsauthor{ Jin Zhu \And Runzhe Wan \And Zhengling Qi \And Shikai Luo \And Chengchun Shi}

\aistatsaddress{ LSE \And  Amazon \And George Washington University \And Bytedance \And LSE} ]

\begin{abstract}
    This paper endeavors to augment the robustness of offline reinforcement learning (RL) in scenarios laden with heavy-tailed rewards, a prevalent circumstance in real-world applications. We propose two algorithmic frameworks, \name{ROAM} and \name{ROOM}, for robust off-policy evaluation and offline policy optimization (OPO), respectively. Central to our frameworks is the strategic incorporation of the median-of-means method with offline RL, enabling straightforward uncertainty estimation for the value function estimator. This not only adheres to the principle of pessimism in OPO but also adeptly manages heavy-tailed rewards. Theoretical results and extensive experiments demonstrate that our two frameworks outperform existing methods on the logged dataset exhibits heavy-tailed reward distributions. The implementation of the proposal is available at \url{https://github.com/Mamba413/ROOM}.
\end{abstract}

\graphicspath{{./figure/}{./figure/opo-result/}{./figure/ope-result/}}

\section{INTRODUCTION}

In reinforcement learning \citep[RL,][]{sutton2018reinforcement}, evaluating and optimizing policies without accessing the environment becomes crucial nowadays, because frequently interacting with the environment could be prohibitively expensive or even impractical in many real-world applications such as robotics, healthcare, education, autonomous driving, and so on. This leads to a surge of interest in offline RL \citep{levine2020offline,uehara2022review}, which aims to leverage only logged data for policy evaluation and optimization. 

% \red{I guess you want to say: (1) Heavy-tailed reward is commonly seen in practice such as finance; (2) The existence of heavy-tailed significantly degrades the performance of offline RL; Can you make these two points more concisely?}

The success of offline RL so far crucially relies on that the reward distribution is well-behaved. However, in a number of real-world applications, the reward distribution is usually heavy-tailed\footnote{A random variable is called heavy-tailed when its tail distribution is heavier than the exponential distribution, and sometimes even its variance is not well defined.}. Heavy-tailed rewards can be generated by various real-world decision-making systems, such as the stock market, networking routing, scheduling, hydrology, image, audio, and localization errors, etc \citep{georgiou1999alpha, hamza2001image, huang2017new, ruotsalainen2018error}. 

The heavy-tailedness pose great challenges to existing offline RL methods. 
% We illustrate this via two fundamental problems in offline RL: off-policy evaluation (OPE) and offline policy optimization (OPO). OPE/OPO aims to evaluate/maximize the value function of policies using only historical data without further interactions with the environment. OPE and OPO can be handled by fitted Q-evaluation (FQE) and fitted Q-iteration (FQI) algorithm, both of them are one of the most popular iterative algorithms in RL.\red{(to smooth.)} Yet, since the heavy-tailed rewards have a huge variance, the estimation for the value of Q-function would becomes inaccuracy, accompanying with an algorithmic instability. And thus, the heavy-tailed rewards definitely degrade the performance of FQE and FQI. 
We first illustrate this via a fundamental problem in offline RL: off-policy evaluation (OPE). OPE aims to evaluate the value of policies using only logged data. One classic algorithm is fitted-Q evaluation (FQE), where each step is solving a regression problem with the response variable being the observed reward plus some estimated long-term values. Yet, it is well-known that the performance of standard regression methods is very sensitive to heavy-tailed responses \citep{lugosi2019mean} and will have a much slower convergence rate. Consequently, this issue will degrade the performance of policy evaluation. 
% since the heavy-tailed rewards have a huge variance, the estimation for temporal difference would become inaccuracy. And thus, the heavy-tailed rewards definitely degrade the performance of FQE. 

As for offline policy optimization (OPO), the heavy-tailed rewards pose even more challenges because the issue of overestimation in standard RL algorithms could be aggravated. We elaborate this with a bandit example shown in Figure~\ref{fig:bandit}, a special case of RL. In this example, the large variance in estimating the expected reward causes a non-negligible probability of selecting the sub-optimal arm. In settings with heavy-tailed rewards, the empirical mean of the sub-optimal arm is subject to an even larger variance, leading to a higher probability of selecting the sub-optimal arm.

\begin{figure}[htbp]
\begin{center}
\includegraphics[width=1.05\linewidth]{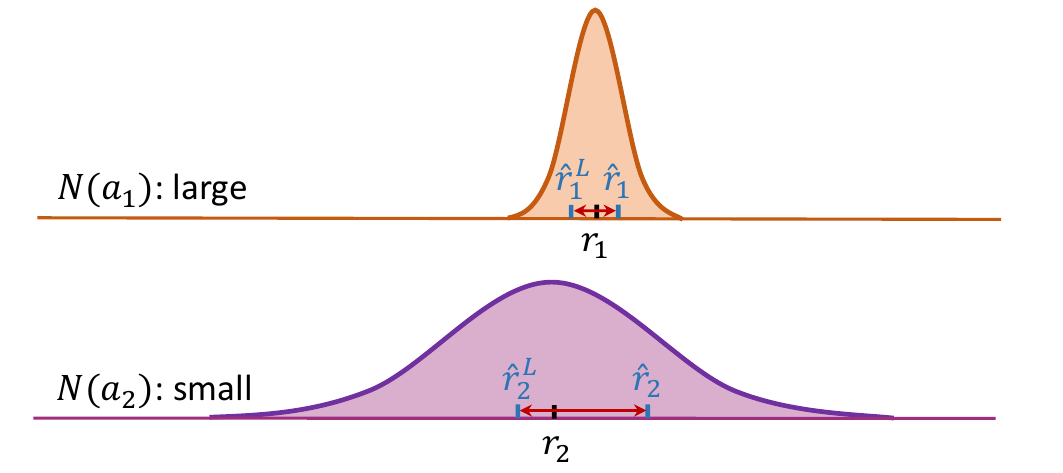}
\end{center}
\caption{Reward distributions in a two-armed bandit example. The oracle expected rewards for the two arms $a_k$ are given by $r_k$ (for $k =1, 2$). $N(a_k)$ denotes the number of reward observations for the $k$-arm. The expected rewards estimator is given by $\widehat{r}_k$. Due to the limited sample size for the second sub-optimal arm, its estimated expected reward $\widehat{r}_2$ suffers from a large variance. Consequently, there's a non-negligible probability of $\widehat{r}_2 > \widehat{r}_1$. By penalizing the uncertainty of reward estimation, a pessimistic estimation $\widehat{r}^L_k$ lowers bound the reward, leading to $\widehat{r}_2^L<\widehat{r}_1^L$, yielding the optimal action.}
\label{fig:bandit}
\end{figure}

To accommodate the heavy-tailed rewards in offline RL, we propose new frameworks for both OPE and OPO by leveraging the median-of-means (MM) estimator in robust statistics \citep{nemirovskij1983problem, alon1996space}. Specifically, we design frameworks that can effectively robustify existing RL algorithms against the heavy-tailed rewards. The frameworks are simple and easy-to-implement. Moreover, the proposed approach also provides a natural way for qualifying the uncertainty of value estimation, which is crucial in both OPE and OPO. 

% \textbf{Contribution.}
\subsection{Contribution}
The contribution of this paper is three-fold. First, we propose a general and unified framework to improve the robustness of existing OPE and OPO methods against heavy-tailed rewards. By leveraging MM, our approach naturally allows uncertainty quantification of the estimated values. This is critical for OPE in high-risk applications (e.g., healthcare) and also for OPO to incorporate the principle of pessimism \citep{jin2021pessimism, bai2022pessimistic} to address insufficent data coverage. 

Second, we provide rigorous theoretical analyses on our OPE and OPO algorithms and clearly demonstrate their advantages over existing solutions that overlook the heavy-tailed issue. In particular, our analysis only requires the reward to have finite $(1+\alpha)$-th moment. On the contrary, most of the existing methods require the rewards to be bounded (or sub-Gaussian) to achieve similar performance. 

Finally, on a couple of benchmark OpenAI environments, we observe the superiority of the proposed algorithms against existing ones when the rewards are heavy-tailed. 
In particular, for OPE, our methods are 1.5 to 30 times more accurate than the non-robust algorithms in terms of rooted MSE; on several D4RL benchmarks for OPO, the score of the robust version is about 1.3 to 3 times higher than those of the vanilla version of state-of-the-art (SOTA) algorithm. 

% to study the performance of our proposal on offline datasets with  rewards, in 
%\item First, we propose general OPE and OPO algorithms for addressing the heavy-tailed rewards observed in the offline dataset. 
% To the best of our knowledge, the two algorithms fill the blank of robust .
%\item Second

\section{RELATED WORKS}

\textbf{Off-Policy Evaluation.}
% \subsection{Off-Policy Evaluation}
In the literature, there are three commonly-used approaches for OPE. 
The first one is the direct method (DM), which evaluates the target policy by estimating its Q-function \citep{bertsekas2012dynamic, farajtabar2018more, le2019batch, duan2020minimax, luckett2020estimating, liao2021off,hao2021bootstrapping,shi2022statistical}. 
Importance sampling (IS) is another popular OPE approach \citep{precup2000eligibility, thomas2015high, liu2018breaking, pmlr-v97-hanna19a, nachum2019dualdice, xie2019towards, dai2020coindice,wang2021projected}, motivated by the change of measure theorem. 
Sequential IS gives an unbiased estimator but has an exponentially large variance with respect to the horizon. 
\citet{liu2018breaking, xie2019towards} developed marginal IS estimators to break this curse of horizon. 
The last approach aims to exploit the advantages of both DM and IS, by combining them to derive a doubly robust (DR) estimator \citep{thomas2016data,jiang2016doubly, farajtabar2018more, tang2019doubly, kallus2020double, liao2022batch}. 
We refer to \citet{uehara2022review} for a comprehensive review for OPE. 
However, to our knowledge, most existing methods cannot handle the heavy-tailed rewards. 

\textbf{Offline Policy Optimization.}
% \subsection{Offline Policy Optimization}
It is well-known that standard OPO methods \citep[e.g.,][]{ernst2005tree} may fail to converge and produce unstable solutions due to the distributional mismatch in the offline setting \citep{wang2021instabilities}. To address this limitation, one possible approach is to force the learned policy to choose actions close to the observed ones in the offline data  \citep{wu2020behavior, brandfonbrener2021offline, fujimoto2021minimalist, kostrikov2021offline, dadashi2021offline}. Recently, there is a streamline of research utilizing the principle of pessimism to address the insufficient data coverage issue \citep[e.g.,][]{kumar2020conservative, an2021uncertainty, jin2021pessimism, xie2021bellman, yin2021nearoptimal, yu2021combo, bai2022pessimistic, fu2022offline, kostrikov2022offline, guo2022model, shi2022pessimistic, uehara2022pessimistic, lyu2022mildly, fu2022closer, zhou2022optimizing, xu2023offline, zhang2023insample, zhou2023bilevel, chen2023steel}. We refer interested readers to \citet{prudencio2022survey} for a recent survey on OPO. However, existing OPO methods cannot handle heavy-tailed rewards. In addition, recent OPO methods proposed to use confidence intervals (CIs) to quantify the uncertainty of the estimated Q-function \citep{jin2021pessimism,bai2022pessimistic}. These CIs could be unreasonably wide due to the heavy-tailed rewards.

% \subsection{Robust RL}
\textbf{Robust RL.}
Most existing works on handling heavy-tailed rewards are only designed for bandits, a special case of RL. Various robust mean estimators are proposed for designing algorithms in finding an optimal arm in the online setting \citep[e.g., ][]{bubeck2013bandits,shao2018almost,lu2019optimal,zhong2021breaking}. However, less attention has been paid to heavy-tailed rewards when there has a state transition. To the best of our knowledge, \citet{zhuang2021no, pmlr-v202-rowland23a, liu2023online} are the most related papers studying this issue. They focus on an online setting which is substantially different from our offline setting. 

We remark that there has another line of research on robust offline RL \citep{chen2021improved, lykouris2021corruption, mo2021learning, si2020distributionally, zhang2021robust, kallus2022doubly, xu2022quantile, zhang2022corruption}, which mainly focuses on robust decision making under the uncertainty of the changing environment. Another stream studies OPE/OPO under the robust Markov decision process \citep{nilim2005robust} by exploiting prior distributional information allow uncertainty quantification  \citep{mannor2016robust, wiesemann2013robust, wang2022reliable, goyal2023robust}. In summary, the goals of these research are different from ours. 

\section{PRELIMINARIES}
% \subsection{Markov decision process}
\textbf{Markov decision process.}
We consider an infinite-horizon discounted stationary Markov Decision Process \citep[MDP,][]{sutton2018reinforcement}, which is defined by a tuple $\mathcal{M} \coloneqq (\mathcal{S}, \mathcal{A}, \mathcal{P}, \mathcal{R}, \gamma)$ where $\mathcal{S}$ is the state space, $\mathcal{A}$ is the action space, 
the transition kernel $\mathcal{P}(\bullet| S_{t-1}, A_{t-1})$ specifies the probability mass (or density) function of $S_t$ by taking action $A_{t-1}$ at a state $S_{t-1}$, and similarly $\mathcal{R}$ specifies the reward. The constant $\gamma \in [0,1)$ is the discount factor. 
We denote the initial state distribution as $\mathbb{G}$. 
For simplicity of notations, we assume $\mathbb{G}$ is pre-specified in this paper.  
$\mathbb{G}$ can be estimated from the empirical initial state distribution in practice. 

In the existing literature, the reward is  assumed to be uniformly bounded or at least sub-Gaussian \citep{thomas2016data,fan2020theoretical,chen2022well,shi2023value}. 
However, as discussed in the introduction, such an assumption could be violated in many real applications. 
In this paper, we consider a much milder assumption, that is, the reward distribution  $\mathcal{R}$ has finite $(1+\alpha)$-th moments for some $\alpha \in (0, 1]$. 
Then, the mean reward function $r(s,a) = \mathbb{E}(R_t | S_t = s, A_t = a)$ exists. 
%But other than that, the reward distribution can be arbitrarily heavy-tailed. 
No other assumptions are imposed and the reward distribution can be arbitrarily heavy-tailed. 
Let $\pi(a|s): \mathcal{S} \rightarrow \mathcal{A}$ be a given policy that specifies the conditional distribution of the action given the state. We next the value function  $V^\pi(s)\coloneqq \mathbb{E}^\pi \left[\sum_{t=0}^{\infty} \gamma^tR_t \mid S_0 = s\right]$ and the Q-function $Q^\pi(s, a)\coloneqq \mathbb{E}^\pi \left[\sum_{t=0}^{\infty} \gamma^tR_t \mid S_0 = s, A_0=  a \right]$. 
Let $\mathbb{E}_{\data}[\cdot]$ denote the expectation taken with respect to the empirical measure over the offline data  $\data$.

% Denote the $i$th trajectory as $\tau_i$. 

% For a policy $\pi$, the Bellman operator $\mathcal{T}^\pi$ is defined as $\left(\mathcal{T}^\pi f\right)(s, a):=R(s, a)+\gamma \mathbb{E}_{s^{\prime} \mid s, a}\left[f\left(s^{\prime}, \pi\right)\right]$, where $f\left(s^{\prime}, \pi\right):=\sum_a \pi\left(a^{\prime} \mid s^{\prime}\right) f\left(s^{\prime}, a^{\prime}\right)$. In addition, we use $d^\pi$ to denote the normalized and discounted state-action occupancy measure of the policy $\pi$. That is, $d^\pi(s, a):=$ $(1-\gamma) \mathbb{E}\left[\sum_{t=0}^{\infty} \gamma^t \mathbb{1}\left(s_t=s, a_t=a\right) \mid a_t \sim \pi\left(\cdot \mid s_t\right)\right]$. We also use $\mathbb{E}_\pi$ to denote expectations with respect to $d^\pi$.

% and $J(\pi):=\mathbb{E}\left[\sum_{t=0}^{\infty} \gamma^t r_t \mid a_t \sim \pi\left(\cdot \mid s_t\right)\right]$ to denote the expected discounted return of $\pi$, with $r_t=R\left(s_t, a_t\right)$. The goal of RL is to find a policy that maximizes $J(\cdot)$. For any policy $\pi$, 

% \subsection{Problem Formulation}
\textbf{Problem Formulation.}
We assume that an agent interacts with the environment $\mathcal M$ and collects a series of random tuples in the form of $(S, A, R, S')$ using one behavior policy. The offline dataset $\mD$ consists of all tuples with form $(S,A,R,S')$. There are two main tasks in offline RL as follows.
\begin{itemize}
    \item Off-policy evaluation (OPE): given the offline dataset $\mD$ and a given target policy $\pi$, OPE estimates its value $J^{\pi} \coloneqq \Mean_{s \sim \mathbb{G}, a \sim \pi(\bullet|s)} Q^{\pi}(s, a)$.
    % \begin{eqnarray*}
    % % \label{eqn:def_value}
    %     J^{\pi} = \Mean_{s \sim \mathbb{G}} V^{\pi}(s). 
    % \end{eqnarray*} 
    \item Offline policy optimization (OPO): given the offline dataset $\mD$, OPO aims to learn an optimal  policy ${\pi}^* = \arg \max_{\pi} J^{\pi}$.  
    % based on $\mathcal{D}$ without deploy $\widehat{\pi}^*$ into the MDP $\mathcal{M}$.
\end{itemize}
Most existing methods for the two tasks crucially rely on the assumption that the rewards are uniformly bounded, yet simply employing them cannot address the challenges posed by heavy-tailed rewards, as illustrated in the example below.

\textbf{Failure of standard direct methods.} To illustrate,
we mainly focus on DM for OPE, which has shown promising performances from  theory and empirical studies \citep{duan2020minimax, voloshin2019empirical}. A DM-type OPE algorithm first estimates the Q-function as $\widehat{Q}$ and then estimate the value of $\eta^\pi$ by constructing a plug-in estimator for $\widehat{J}^\pi = \mathbb{E}_{s\sim\mathbb{G}, a \sim \pi(\bullet|s)}\widehat{Q}^\pi(s, a)$. 

To see the failure of DM, we first present the connection between $\mathrm{Q}$-function estimation and population mean estimation. Define the conditional discounted state-action visitation distribution of the tuple $(s, a)$ following policy $\pi$ starting from $(s_0, a_0)$ as $d^\pi\left(a, s | a_0, s_0\right)=(1-\gamma)\{\mathbb{I}(a=a_0, s=s_0)+\sum_{t=1}^{\infty} \gamma^t p_t^\pi\left(a, s \mid a_0, s_0\right)\}$. Then,
\begin{align*}
\begin{aligned}
& Q^\pi(s_0, a_0) \\
% & =\mathbb{E}^\pi\left[\sum_{t=0}^{\infty} \gamma^t R_t \mid S_0=s, A_0=a\right] \\
= &(1-\gamma)^{-1} \mathbb{E}_{\left(S_t, A_t\right) \sim d^\pi\left(a, s \mid a_0, s_0\right), R_t \sim \mathcal{R}\left(S_t, A_t\right)} R_t .
\end{aligned}
\end{align*}
% \begin{align*}
% \begin{aligned}
% Q^\pi\left(s_0, a_0\right) =\mathbb{E}^\pi\left[\sum_{t=0}^{\infty} \gamma^t R_t \mid S_0=s, A_0=a\right]
% =(1-\gamma)^{-1} \mathbb{E}_{\left(S_t, A_t\right) \sim d^\pi\left(a, s \mid a_0, s_0\right), R_t \sim \mathcal{R}\left(S_t, A_t\right)} R_t.
% \end{aligned}
% \end{align*}
In other words, the Q-value $Q^\pi\left(s_0, a_0\right)$ is the population mean of the stochastic rewards under the corresponding state-action visitation distribution induced by policy $\pi$ starting from $\left(s_0, a_0\right)$. The heavy tailedness\footnote{The heavy-tailedness can be caused by either the heavy-tailedness of $\mathcal{R}$ (i.e., $R_t-$ $r\left(S_t, A_t\right)$, the randomness of the stochastic rewards) or that of $r\left(S_t, A_t\right)$ (i.e., the distribution of the mean reward following some policy). Almost all of our discussions can accommodate both sources simultaneously.} of $R_t$ typically will carry over to the distribution of $\sum_{t=0}^{\infty} \gamma^t R_t$ conditioned on $\left\{S_0=s, A_0=a\right\}$ and following $\pi$. In this sense, the estimation of $Q^\pi\left(s_0, a_0\right)$ will face the same challenge as the population mean estimation with heavy-tailed noises\footnote{One may also refer to Theorem~4 in \citet{gerstenberg2022solutions} for a sufficient condition for the cumulative reward to be heavy-tailed.}. At this case, the estimation error of the sample mean $\bar{R}$ can be upper bounded by $|\bar{R}-\mu| \leq C^{\prime} \delta^{-\frac{1}{1+\alpha}} n^{-\frac{\alpha}{1+\alpha}}$ with probability at least $1-\delta$, for a constant $C^{\prime}>0$. To ensure a high-probability result, $\delta$ shall inversely scale polynomially in $n$, causing the error bound may scale polynomially in $n$, which dominate the (constant) reward means.
% For example, when estimating the Q-function via the well-known least-square temporal-difference (LSTD, \citet{bradtke1996linear, boyan1999least}) algorithm, the convergence rate will have a polynomial order dependency on the confidence level instead of a desired logarithmic order (See Section~\ref{sec:theory} for more details).

% \begin{remark}
% The heavy-tailedness of $\sum_{t=0}^{\infty} \gamma^t R_t$ can be caused by either the heavy-tailedness of $\mathcal{R}$ (i.e., $R_t-$ $r\left(S_t, A_t\right)$, the randomness of the stochastic rewards) or that of $r\left(S_t, A_t\right)$ (i.e., the distribution of the mean reward following some policy). Almost all of our discussions can accommodate both sources simultaneously. 
% \end{remark}

\textbf{Median-of-mean method.}
The key tool in our algorithms is the MM estimator \citep{nemirovskij1983problem, alon1996space} in robust statistics. Due to its flexibility and that it is straightforward to produce uncertainty quantification, MM has also been employed in robust linear regressions as well \citep{zhang2021median, minsker2015geometric}. We present its form in population mean estimation and related property below.

\begin{definition}[Population mean estimation via MM]\label{prop:mm-estimator}
    Let $R_1, \ldots, R_n$ be $n$ i.i.d. real-valued heavy-tailed observations under a distribution $F$. To estimate the population mean, we first partition $[n]=\{1, \ldots, n\}$ into $K \in \mathbb{N}^{+}$blocks $B_1, \ldots, B_K$, each of size $\left|B_i\right| \geq\lfloor n / K\rfloor \geq 2$. We compute the sample mean in each block as $Z_k=\frac{1}{\left|B_k\right|} \sum_{i \in B_k} R_i$. The MM estimator for the mean value of $F$ is defined as $\operatorname{Median}\left(\left\{Z_1, \ldots, Z_K\right\}\right)$.
\end{definition}

\begin{proposition}[\citet{lugosi2019mean}, Theorem 3]
    Suppose $R_1, \ldots, R_n$ are i.i.d. with mean $\mu$ and the $(1+\alpha)$ th moment. For any $\delta \in(0,1)$, by setting $K=\lceil 8 \log (2 / \delta)\rceil$, we have with probability at least $1-\delta$ that
    \begin{align*}
    \left|\widehat{\mu}_n-\mu\right| \leq C[\log (1 / \delta)]^{\frac{\alpha}{1+\alpha}} n^{-\frac{\alpha}{1+\alpha}}
    \end{align*}
    for some constant $C>0$.
\end{proposition}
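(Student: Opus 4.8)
The plan is to follow the standard two-stage analysis of the median-of-means estimator.

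\emph{Stage one: block-level control.} Fix a block $B_k$ of size $m = |B_k|$ and write $Z_k - \mu = m^{-1}\sum_{i \in B_k}(R_i - \mu)$, a normalized sum of i.i.d., mean-zero terms whose $(1+\alpha)$-th absolute central moment $v \coloneqq \E|R_1 - \mu|^{1+\alpha}$ is finite (this follows from the finite $(1+\alpha)$-th moment assumption on $\mathcal{R}$ via the $c_r$-inequality). The key inequality I would invoke is the von Bahr--Esseen moment bound: for independent mean-zero $X_1,\dots,X_m$ and $1 \le p \le 2$, $\E\bigl|\sum_i X_i\bigr|^p \le 2\sum_i \E|X_i|^p$. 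Applying it with $p = 1+\alpha$ gives $\E|Z_k - \mu|^{1+\alpha} \le 2 v\, m^{-\alpha}$, and Markov's inequality then yields $\prob(|Z_k - \mu| > t) \le 2 v\, m^{-\alpha}/t^{1+\alpha}$ for every $t > 0$; choosing $t_m \coloneqq (8v)^{1/(1+\alpha)} m^{-\alpha/(1+\alpha)}$ makes this probability at most $1/4$.

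\emph{Stage two: median boosting and rate assembly.} Set $W_k = \I(|Z_k - \mu| > t_m)$, so the $W_k$ are i.i.d.\ Bernoulli with parameter $p \le 1/4$. If $\widehat{\mu}_n = \median(\{Z_1,\dots,Z_K\})$ were more than $t_m$ from $\mu$, at least half of the $Z_k$ would be, i.e.\ $\sum_k W_k \ge K/2$; since $\E\sum_k W_k = Kp \le K/4$, a Chernoff/Hoeffding bound for the binomial tail gives $\prob(\sum_k W_k \ge K/2) \le \exp(-K/8)$, which is at most $\delta/2 \le \delta$ for $K = \lceil 8\log(2/\delta)\rceil$. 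Hence $|\widehat{\mu}_n - \mu| \le t_m$ with probability at least $1-\delta$. Finally, since each block obeys $m \ge \lfloor n/K\rfloor \ge n/(2K)$ (using $\lfloor n/K\rfloor \ge 2$, hence $n \ge 2K$), we get $t_m \le (8v)^{1/(1+\alpha)}(2K)^{\alpha/(1+\alpha)} n^{-\alpha/(1+\alpha)}$, and substituting $K \le 8\log(2/\delta)+1 = O(\log(1/\delta))$ collapses this into $C[\log(1/\delta)]^{\alpha/(1+\alpha)} n^{-\alpha/(1+\alpha)}$ with $C$ depending only on $v$ and $\alpha$.

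The only genuinely non-routine ingredient is the per-block moment bound: a naive triangle inequality in $L^{1+\alpha}$ gives $\|Z_k - \mu\|_{1+\alpha} \le \|R_1 - \mu\|_{1+\alpha}$, i.e.\ \emph{no} decay in $m$, so one must exploit independence, and the von Bahr--Esseen inequality is the cleanest device that converts a finite $(1+\alpha)$-th moment into the $m^{-\alpha/(1+\alpha)}$ per-block rate. Everything downstream (Markov, the median-to-binomial reduction, Hoeffding) is routine. A minor point to handle with care is the regime where $\delta$ is close to $1$, where $\log(2/\delta)$ and $\log(1/\delta)$ are not comparable up to an absolute constant; this is absorbed into $C$ (or by noting that only $\delta \le 1/2$ is of interest).
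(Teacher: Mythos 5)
Your proof is correct and follows essentially the same route as the cited source (and as the paper's own Lemma~2/Theorem~3 argument): a per-block tail bound from a von Bahr--Esseen-type moment inequality plus Markov, followed by a binomial/Hoeffding boost for the median. The only caveat is the one you already flag -- for $\delta$ near $1$ the stated $[\log(1/\delta)]^{\alpha/(1+\alpha)}$ form degenerates and the result should be read for, say, $\delta\le 1/2$ -- which is a defect of the statement as quoted, not of your argument.
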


Comparing sample mean and the MM, we easily see that sample mean's dependence on the confidence parameter $\delta$ is exponentially worse than that of MM. Indeed, a sub-Gaussian assumption is typically required for sample mean to enjoy the same property as $\mathrm{MM}$ estimator. Based on the aforementioned discussion, we will borrow ideas from the MM to improve the robustness of OPE and OPO.

% \vspace*{-8pt}
\section{MM FOR ROBUST OFFLINE RL}
In this section, we start by presenting our proposal for OPE to illustrate the main idea of utilizing MM in offline RL to address the heavy-tailed rewards. We then extend the idea to OPO in Section~\ref{sec:mm-room}. 

% {\color{blue}Our objective is to provide a generally applicable recipe to robustify the existing OPE and OPO methods in the presence of heavy-tailed rewards.}

\subsection{MM for OPE}\label{sec:mm-roam}
This section introduces the Robust Off-policy evaluAtion via Median-of-means (\name{ROAM}) framework. For ease of exposition, we first focus on the DM in this paper. Specifically, the discussions above motivate us to consider leveraging the MM scheme for robust estimation of $Q^\pi$ (see an illustration in Figure~\ref{fig:procedure}). 
% We will cover extensions to IS and DR as well. 
% We summarize the first proposed algorithm framework in Algorithm~\ref{alg:mm-final}, which we refer to as \name{ROAM}-based Direct Method (\name{ROAM-DM}) and detail as follows. 
We first split all trajectories $\mathcal{D}$ into $K$ partitions $\left\{\mathcal{D}_k\right\}_{k=1}^K$. %We make sure that one trajectory only belongs to one partition, so that the 
Notice that data subsets across the $K$ splits are \textit{i.i.d}. Next, with any given DM-type OPE algorithm \name{BaseOPE}, we obtain $K$ \textit{i.i.d.} estimates $\{\widehat{Q}_k^\pi\}_{k=1}^K$ for $Q^\pi$. However, with heavy-tailed rewards, these estimates may also have large errors and distributed with heavy tails around $Q^\pi$. As discussed above, this is similar to the sample average in every split for population mean estimation. Therefore, we propose to extend MM to OPE by first applying the median operator to the $K$ Q-functions and then calculate the integrated value estimate as $\widehat{J}^\pi=\mathbb{E}_{s \sim \mathbb{G}, a \sim \pi(\bullet|s)} \operatorname{Median}(\{\widehat{Q}_k^\pi(s, a)\}_{k=1}^K)$. We summarize the procedure in Algorithm~\ref{alg:mm-final}. 
Notably, our approach employs a split-and-aggregation strategy to estimate a robust Q function, which is markedly different from the standard MM method for estimating a scalar. As such, verifying the robustness of the estimated Q function necessitates a non-trivial analysis of the proposed procedure.

\begin{figure}[!t]
    \centering
    \includegraphics[width=1.0\linewidth]{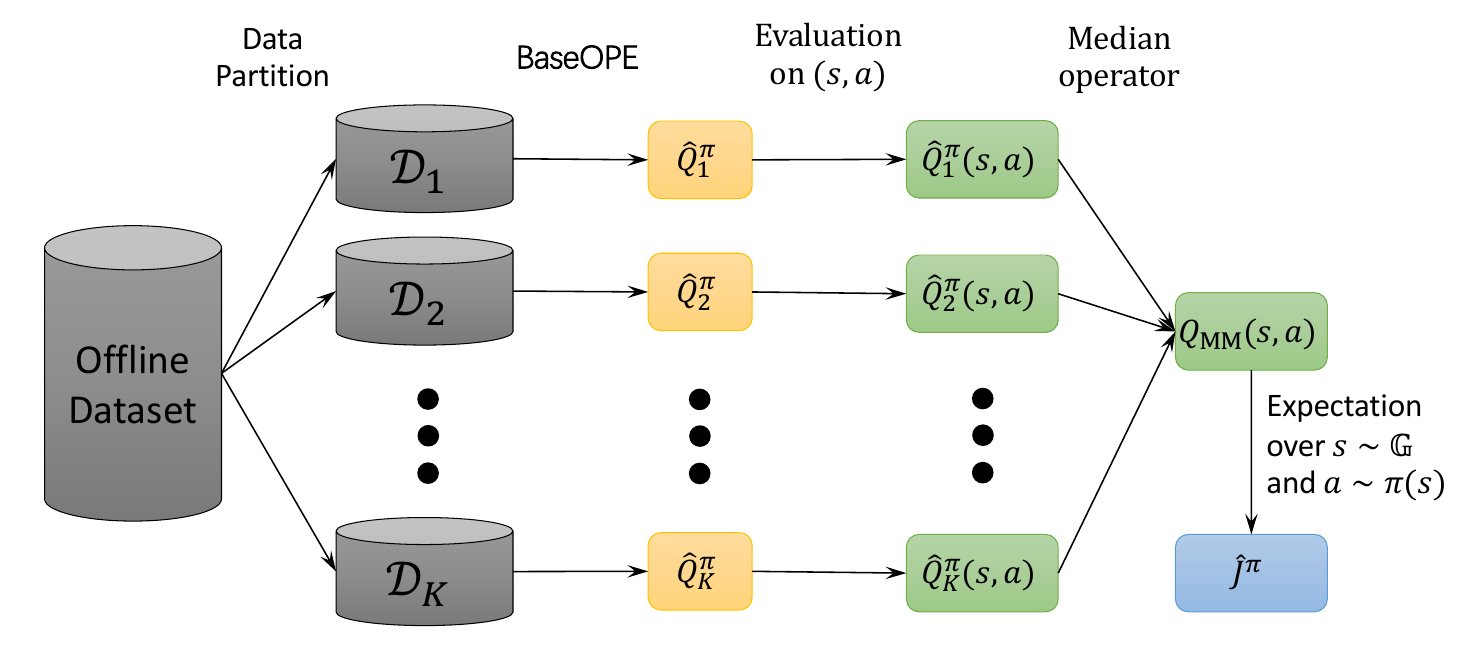}
    \caption{Graphical illustration for \name{ROAM}. $Q_{\mathrm{MM}}(s, a)$ is equal to $\operatorname{Median}(\{\widehat{Q}_k^\pi(s, a)\}_{k=1}^K)$.}
    \label{fig:procedure}
\end{figure}
% \begin{wrapfigure}{r}{0.55\textwidth}
%     \vspace*{-10pt}
%     \centering
%     \includegraphics[width=1.0\linewidth]{procedure.pdf}
%     \vspace*{-13pt}
%     \caption{Graphical illustration for \name{ROAM}. $Q_{\mathrm{MM}}(s, a)$ is equal to $\operatorname{Median}(\{\widehat{Q}_k^\pi(s, a)\}_{k=1}^K)$.}
%     \label{fig:procedure}
% \end{wrapfigure}
% \begin{figure}
%     \centering
%     \includegraphics[width=0.5\linewidth]{procedure.pdf}
%     \caption{Graphical illustration for \name{ROAM}. $Q_{\mathrm{MM}}(s, a)$ is equal to $\operatorname{Median}(\{\widehat{Q}_k^\pi(s, a)\}_{k=1}^K)$.}
%     \label{fig:procedure}
% \end{figure}
% \vspace*{-15pt}

\begin{algorithm}
\caption{\underline{R}obust \underline{O}ff-policy Ev\underline{a}luation via \underline{M}edian-of means based Direct Method (\name{ROAM-DM})}\label{alg:mm-final}
\begin{algorithmic}[1]
\INPUT Policy $\pi$, data $\mathcal{D}$, data partitions number $K$, decay rate $\gamma$, base DM-type OPE algorithm \name{BaseOPE} 
\STATE Partition trajectories $\mathcal{D}$ into $K$ disjoint parts:  $\mathcal{D}_1, \ldots, \mathcal{D}_K$.
\FOR{$k=1, \ldots, K$}
\STATE $\widehat{Q}_k^\pi \leftarrow \name{BaseOPE}\left(\pi, \mathcal{D}_k, \gamma\right)$
\ENDFOR
\STATE $\widehat{J}^\pi \leftarrow \mathbb{E}_{s \sim \mathbb{G}, a \sim \pi(\bullet|s)} \text{Median}(\{\widehat{Q}_k^\pi(s, a)\}_{k=1}^K).$
\OUTPUT $\widehat{J}^\pi.$
\end{algorithmic}
\end{algorithm}

\textbf{Uncertainty quantification.} In many high-risk applications such as mobile health studies, in addition to a point estimate on a target policy’s value, it is crucial to quantify the uncertainty of the value estimates, which has attracted increasing attention in recent years \citep{dai2020coindice, shi2021deeply, liao2021off, liao2022batch, kallus2022doubly}. One prominent advantage of leveraging MM in OPE is that it is straightforward to produce uncertainty quantification. Specifically, with $\{\widehat{Q}_k^\pi\}_{k=1}^K$, we can have $K$ integrated value estimators as $\{\widehat{J}_k^\pi\}_{k=1}^K$. Notice that $\{\widehat{J}_k^\pi\}_{k=1}^K$ are \textit{i.i.d.}. When each $\widehat{J}_k^\pi$ is unbiased, %Then, 
$\mathcal{Q}_q(\{\widehat{J}_k^\pi\}_{k=1}^K)$ is a natural $1-q$ lower confidence bound for $\eta^\pi$, where $\mathcal{Q}_q(\cdot)$ returns the $q$-th lower quantile value among a set. In contrast, it is nontrivial to obtain uncertainty quantification with other robust estimators like the truncated mean.

\textbf{Variants.} Our proposal is general and has a few theoretical guaranteed variants. First, instead of applying the median operator to the Q-values in Step 5 in Algorithm~\ref{alg:mm-final}, we can apply the median operator to 
% (i) 
% the state-value $\widehat{V}_k^\pi(s)=\mathbb{E}_{a \sim \pi(\bullet|s)} \widehat{Q}_k^\pi(s, a)$ to obtain $\widehat{J}^\pi=\mathbb{E}_{s \sim \mathbb{G}} \operatorname{Median}(\{\widehat{V}_k^\pi(s)\}_{k=1}^K)$; 
% or (ii) 
the estimated integral value $\widehat{J}_k^\pi=\mathbb{E}_{s \sim \mathbb{G}, a \sim \pi(\bullet|s)} \widehat{Q}_k^\pi(s, a)$ to obtain $\widehat{J}^\pi=\operatorname{Median}(\{\mathbb{E}_{s \sim \mathbb{G}, a \sim \pi(\bullet|s)} \widehat{Q}_k^\pi(s, a)\}_{k=1}^K)$. We study this variant, called \name{ROAM-Variant}, by empirical studies.

% Second, notice that many DM-type algorithms are iterative, such as Fitted Q-Evaluation \citep[FQE,][]{le2019batch} and MRDR \citep{farajtabar2018more}. It is natural to consider applying MM to each iteration of these algorithms %so that the final estimates are also robustified. 
% to robustify the final estimator. 
% We formulate this idea for FQE in Algorithm~\ref{alg:fqe-mm-internal} (\name{ROAM-FQE}, see complete algorithm in Appendix~). Algorithm~\ref{alg:fqe-mm-internal} derives robust intermediate estimators by replacing the heavy-tailed target $Y=R+\gamma \mathbb{E}_{a \sim \pi(S^{\prime})} Q\left(S^{\prime}, a\right)$ with a MM-type target $Y=R+\gamma \mathbb{E}_{a \sim \pi (S^{\prime})} \operatorname{Median}(\{\widehat{Q}^\pi_{k}\left(S^{\prime}, a\right)\}_{k=1}^K)$. However, one issue is that, these estimators $\{\widehat{Q}_{k}^\pi\}_{k=1}^K$ (and all estimators after this update including the final ones) in Algorithm~\ref{alg:fqe-mm-internal} are not independent any longer. Therefore, it is not clear whether or not MM can still have theoretical benefits. Thus we only study this variant empirically.

% \textbf{Extension to IS estimators.} 
Next, we can extend the framework of MM to give a robust IS estimator. To illustrate this extension, we take the marginal important sampling (MIS) estimator \citep{liu2018breaking, xie2019towards} as our example. The MIS estimator first estimates the state-action density ratio $\omega^\pi(s, a):=d^\pi(s) \pi(a | s) / b(s, a)$ as $\widehat{\omega}^\pi(s, a)$. Here, $b(s, a)$ is the state-action density of behavior policy, $d^\pi(s)$ is the average visitation distribution, defined as $d^\pi(s)=$ $(1-\gamma) \sum_{t=0}^{\infty} \gamma^t d_t^\pi(s)$ where $d_t^\pi(s)$ is the distribution of state $s_t$ when we execute policy $\pi$. Then, the MIS estimates the value of $\pi$ as $\widehat{J}^\pi:=\mathbb{E}_{\mathcal{D}}\left[\widehat{\omega}^\pi(S, A) R\right]$. To apply the MM procedure, we partition $\mathcal{D}$ into $K$ disjoint parts $\mathcal{D}_1, \ldots, \mathcal{D}_K$; then for each $\mathcal{D}_k$, we estimate ratios $\widehat{\omega}_k^\pi(s, a)$ and compute $\widehat{J}_k^\pi:=\mathbb{E}_{\mathcal{D}_k}\left[\widehat{\omega}_k^\pi(S, A) R\right]$. Finally, we define the robust IS estimator as $\widehat{J}^\pi=\operatorname{Median}(\{\widehat{J}_k^\pi\}_{k=1}^K)$. We summarize our method in Algorithm~\ref{alg:roam-mis} in Appendix~\ref{sec:mis}, which we refer to as \name{ROAM-MIS}. Our method can be similarly extended to accommodate doubly robust methods \citep{thomas2016data, kallus2020double}.

% \begin{algorithm}
% \caption{\underline{R}obust \underline{O}ff-policy Ev\underline{a}luation via \underline{M}edian-of-means based \underline{F}itted \underline{Q}-\underline{E}valuation (\name{ROAM-FQE})}\label{alg:fqe-mm-internal}
% \begin{algorithmic}[1]
% \INPUT Policy $\pi$, Data $\mathcal{D}$, decay rate $\gamma$, number of iterations $M$, number of partitions $K$.
% \STATE Partition data $\mathcal{D}$ into $K$ disjoint parts: $\mathcal{D}_1, \ldots, \mathcal{D}_K$
% \STATE Initialize $K$ Q-functions with parameters $w_1, \ldots, w_K$
% \FOR {$m=1, \ldots, M$}
% \FOR {$k=1, \ldots, K$}
% \STATE $\quad$ For each $\left(S, A, R, S^{\prime}\right) \in \mathcal{D}_k$, compute:
% \begin{align*}
% Y \leftarrow R+\gamma \operatorname{Median}(\{\mathbb{E}_{a \sim \pi\left(S^{\prime}\right)} \widehat{Q}_{w_k}^\pi\left(S^{\prime}, a\right)\}_{k=1}^K)
% \end{align*}
% % \STATE  $\quad$ Update $Q_{w_k}:$
% % \begin{align*}
% % w_k \leftarrow \underset{w_k}{\arg \min } \mathbb{E}_{\mathcal{D}_k}(Y-\widehat{Q}_{w_k}^\pi(S, A))^2
% % \end{align*}
% \STATE  $\quad$ Update $Q_{w_k}:$
% $w_k \leftarrow \underset{w_k}{\arg \min } \mathbb{E}_{\mathcal{D}_k}(Y-\widehat{Q}_{w_k}^\pi(S, A))^2$
% \ENDFOR
% \ENDFOR
% \STATE $\widehat{J}^\pi \leftarrow \mathbb{E}_{s \sim \mathbb{G}, a \sim \pi(\bullet|s)} \operatorname{Median}(\{\widehat{Q}_{w_k}^\pi(s, a)\}_{k=1}^K)$
% \OUTPUT $\widehat{J}^\pi$
% \end{algorithmic}
% \end{algorithm}

\subsection{MM for OPO with Pessimism}\label{sec:mm-room}
In this section, we introduce the extension of our proposal to OPO, called Robust OPO via Median-of-means (\name{ROOM}). To illustrate, we focus on value-based OPO algorithms in this section. A value-based OPO algorithm typically first estimates the optimal Q-function as $\widehat{Q}^*$, and then derives the corresponding optimal policy as either $\widehat{\pi}^*(s)=\arg\max_{a} \widehat{Q}^*(s, a)$ or $\widehat{\pi}^*(s)= \arg\max_{\pi \in \Pi} \mathbb{E}_{s \sim \mathbb{G}, a \sim \pi(\bullet|s)} \widehat{Q}^*(s, a)$ (when a policy class $\Pi$ is prespecified). Popular methods include fitted Q-iteration \cite[FQI,][]{ernst2005tree}, LSTD Q-learning \cite[LSTD-Q,][]{lagoudakis2003least}, etc.

To design a robust value-based OPO algorithm, we can follow a similar procedure for OPE as in Section~\ref{sec:mm-roam}. Specifically, we can first split $\mathcal{D}$ into $K$ folds to estimate $K$ independent optimal Q-functions $\{\widehat{Q}_k^*\}_{k=1}^K$, then output a policy $\widehat{\pi}^*(s)=\arg \max _a \operatorname{Median}(\{\widehat{Q}_k^*(s, a)\})$. We present this algorithm in Algorithm~\ref{alg:fqi-mm-final}, which we call \name{ROOM} for Value-based Method (\name{ROOM-VM}).

\begin{algorithm}
\caption{\underline{R}obust \underline{O}P\underline{O} via \underline{M}edian-of-means for \underline{V}alue-based \underline{M}ethod (\name{ROOM-VM})}\label{alg:fqi-mm-final}
\begin{algorithmic}[1]
\INPUT Data $\mathcal{D}$, decay rate $\gamma$, number of data partitions $K$, base value-based OPO algorithm \texttt{BaseOPO}
\STATE Partition data $\mathcal{D}$ into $K$ parts: $\mathcal{D}_1, \ldots, \mathcal{D}_K$.
\FOR {$k=1, \ldots, K$}
\STATE $\widehat{Q}_k^* \leftarrow \texttt{BaseOPO}\left(\mathcal{D}_k, \gamma\right)$
\ENDFOR
\STATE $\widehat{\pi}^*(s) \leftarrow \arg\max\limits_{a} \operatorname{Median}(\{\widehat{Q}_k^*(s, a)\}_{k=1}^K)$ for any $s$.
\OUTPUT Policy $\widehat{\pi}^*$.
\end{algorithmic}
\end{algorithm}

\textbf{Pessimism for robust OPO.} Insufficient data coverage is known as a critical issue to offline RL \citep{levine2020offline,xu2023offline}. When some state-action pairs are less explored, the related value estimation tends to have high variance and hence classical RL algorithms may produce a sub-optimal policy. The pessimistic principle effectively mitigates this issue, by taking the uncertainty of the value function estimation into consideration; see Figure~\ref{fig:bandit} for an illustration.

Employing the pessimism principle relies crucially on the construction of the uncertain set for the value function. However, as pointed out by \citet{zhou2022optimizing}, it is often challenging to derive a credible lower bound in general, and the tuning is typically sensitive. The issue becomes more serious when there exist heavy-tailed rewards. One prominent advantage of our MM procedure is that the pessimism mechanism can be naturally and efficiently incorporated. Specifically, we can replace the Median($\cdot$) operator in the Step 5 of Algorithm~\ref{alg:fqi-mm-final} by:
\begin{align*}
\widehat{\pi}^*(s) \leftarrow \arg\max_a \mathcal{Q}_q(\{\widehat{Q}_k^*(s, a)\}_{k=1}^K) \text { for any } s .
\end{align*}
By choosing $q<0.5$, we naturally obtain a pessimistic estimation for addressing insufficient data coverage. Moreover, quantile optimization is known to be robust \citep{wang2018exponentially} against heavy-tailed rewards as well (notice that the median operator is just a special case of quantile operators). Therefore, by this design, one can expect our method can address both insufficient data coverage and heavy-tailed rewards.  

We conclude this section with the comparison the standard methods. Standard methods construct the lower confidence bound of $Q^*(s, a)$ rely on subtracting the sample standard deviation (multiplied by a factor) from the sample mean, both are obtained from bootstrapping or concentration inequalities (see, e.g., \citet{kumar2019stabilizing, kumar2020conservative, bai2022pessimistic}). However, as \citet{hall1990asymptotic} points out, non-parametric bootstrap of the sample mean for heavy-tailed variables may not lead to a Gaussian asymptotic distribution. 
% Moreover, due to the overlapping between bootstrapped samples, the bootstrapped sample standard deviation is biased. 
Therefore, pessimistic RL algorithms based on bootstrapping may not work in heavy-tailed environments. Similar challenges apply to concentration inequality-based methods, especially when the variance does not exist. 

% Analogous to Algorithm~\ref{alg:fqe-mm-internal}, for iterative OPO algorithms, we can apply MM in every iteration. Take FQI as an example, we replace the definition of $Y$ in the Step 5 of Algorithm~\ref{alg:fqe-mm-internal} by:
% % \begin{align*}
% % Y \leftarrow R+\gamma \operatorname{Median}(\{\max _a \widehat{Q}_{w_k}^*\left(S^{\prime}, a\right)\}_{k=1}^K),
% % \end{align*}
% $Y \leftarrow R+\gamma \operatorname{Median}(\{\max_a \widehat{Q}_{w_k}^*\left(S^{\prime}, a\right)\}_{k=1}^K)$,
% then we can obtain a robust FQI algorithm. We refer to the resulting algorithm as $\name{ROOM-FQI}$ and defer its details to Algorithm~\ref{alg:mm-fqi-internal} in Appendix~\ref{sec:room-fqi-algorithm}. Moreover, we also consider its pessimistic variant by using a quantile operator $\mathcal{Q}_q(\cdot)$ rather than the median operator (denoted as \name{P-ROOM-FQI}). Similar as \name{ROAM-FQE}, we mainly study two variants empirically. {\color{red}It's noteworthy that SAC-$N$ \citep{an2021uncertainty} bears a strong resemblance to \name{P-ROOM-FQI}, as it assesses uncertainty by taking the pointwise minimum (i.e., setting $q=0.0$) of multiple Q-functions trained on the entire dataset. Hence, it can be regarded as a heuristic implementation of our approach within the SAC framework.}

% \vspace*{-8pt}
\section{THEORY}\label{sec:theory}
In this section, we focus on deriving the statistical properties of \name{ROAM}, designed for OPE. Meanwhile, our analysis can be easily extended to obtain the upper error bound of the estimated Q-function via \name{ROOM-VM} for OPO.
% , and hence the regret bound of the subsequently estimated optimal policy. We omit these results to save space. 
We begin with a set of technical assumptions. 
%summary of our theoretical results. To establish these theories, we need the following assumptions. 

\textbf{Assumption 1} (Independent transitions). $\mathcal{D}$ contains $n$ i.i.d. copies of $(S,A,R,S')$.

\textbf{Assumption 2} (Heavy-tailed rewards) There exists some $\alpha\in (0,1]$ such that $\mathbb{E} [|R|^{1+\alpha}]<\infty$.  %th absolute moment of the reward is bounded. 

\textbf{Assumption 3} (Sequential overlap). $\omega^{\pi}$ is bounded away from 0. 

We make a few remarks. First, the independence condition in Assumption~1 is commonly imposed in the literature to simplify the theoretical analysis \citep[see e.g.,][]{sutton2008convergent,chen2019information,fan2020theoretical,uehara2020minimax}. It can be relaxed by imposing certain mixing conditions on the underlying MDP \citep{kallus2019efficiently,chen2022well,bhandari2021finite}. 
Second, as we have commented earlier, nearly most existing works require the rewards to be uniformly bounded. On the contrary, Assumption~2 requires a very mild moment condition. 
When $\alpha<1$, this assumption even allows the variance of the reward to be infinity.  
Therefore, it is commonly used in the robust learning for bandits/RL literature \citep{bubeck2013bandits,zhuang2021no}. Assumption~3 corresponds to the sequential overlap condition that is widely imposed in the OPE literature \citep[see e.g.,][]{kallus2020double,shi2021deeply}. It essentially requires the support of the stationary state-action distribution under the behavior policy to cover that of the discounted state-action visitation distribution under $\pi$. 

We first study the theoretical properties of \name{ROAM-MIS}. 

\begin{theorem}\label{thm:MIS}
    Assume Assumptions 1-3 hold. Then for any $\delta>0$, by setting $K=\lceil 8 \log (2 / \delta)\rceil$, we have with probability $1-\delta$ that $|\widehat{J}^{\pi}_{\textrm{MIS}}-J^{\pi}|$ is of the order of magnitude 
    \begin{eqnarray*}
        M^{(1+\alpha)}_{R} \Big[ \|\widehat{\omega}^{\pi}-\omega^{\pi}\|_{\infty}
        +\|\omega^{\pi}\|_{\infty}[\log (1 / \delta)]^{\frac{\alpha}{1+\alpha}} n^{-\frac{\alpha}{1+\alpha}}\Big],
    \end{eqnarray*}
    where $M^{(1+\alpha)}_{R} = (\mathbb{E} |R|^{1+\alpha})^{\frac{1}{1+\alpha}}$ and  $\ell_{\infty}$-norm of any function $\omega$ is defined as $\|\omega\|_\infty \coloneqq \sup_{x} |\omega(x)|$. 
\end{theorem}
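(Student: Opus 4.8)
The plan is to decompose the error $\widehat{J}^{\pi}_{\textrm{MIS}} - J^{\pi}$ into a ratio-estimation bias term and a "robust mean-of-rewards" fluctuation term, then control the latter using the median-of-means property (Proposition~2) applied carefully across the $K$ folds. First I would introduce the oracle fold-wise estimator $\bar{J}_k^{\pi} := \mathbb{E}_{\mathcal{D}_k}[\omega^{\pi}(S,A)R]$ that uses the \emph{true} density ratio, and write $\widehat{J}_k^{\pi} - J^{\pi} = (\widehat{J}_k^{\pi} - \bar{J}_k^{\pi}) + (\bar{J}_k^{\pi} - J^{\pi})$. The first piece is bounded by $\|\widehat{\omega}_k^{\pi} - \omega^{\pi}\|_{\infty} \cdot \mathbb{E}_{\mathcal{D}_k}|R| \le \|\widehat{\omega}_k^{\pi}-\omega^{\pi}\|_{\infty} M_R^{(1+\alpha)}$ after a H\"older/Jensen step bounding the empirical first absolute moment of $R$ by its $(1+\alpha)$-th moment (up to constants, on a high-probability event or in expectation). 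The second piece has mean zero (by the change-of-measure identity $\mathbb{E}[\omega^{\pi}(S,A)R] = J^{\pi}$), and its summands $\omega^{\pi}(S,A)R$ have finite $(1+\alpha)$-th moment bounded by $\|\omega^{\pi}\|_{\infty}^{1+\alpha}\,\mathbb{E}|R|^{1+\alpha}$, so it is exactly in the regime of the scalar MM analysis.

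Next I would handle the median-over-folds step. The key structural fact is that taking $\operatorname{Median}$ of the $K$ i.i.d. quantities $\widehat{J}_k^{\pi}$ behaves like the scalar MM estimator applied to the "super-sample" of the $\bar{J}_k^{\pi}$'s, up to the uniform ratio-estimation error: for any threshold $t$, if $\operatorname{Median}(\{\widehat{J}_k^{\pi}\}) > J^{\pi} + t + \varepsilon_{\omega}$ where $\varepsilon_{\omega} := \max_k \|\widehat{\omega}_k^{\pi}-\omega^{\pi}\|_{\infty} M_R^{(1+\alpha)}$, then at least $K/2$ of the fold means $\bar{J}_k^{\pi}$ exceed $J^{\pi} + t$; a binomial/Chebyshev tail bound on this count (each $\bar{J}_k^{\pi}$ deviates from $J^{\pi}$ by more than $t$ with probability controlled via the $(1+\alpha)$-moment and a Markov-type inequality on block means) then gives the deviation probability. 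Choosing $t \asymp \|\omega^{\pi}\|_{\infty} M_R^{(1+\alpha)} (n/K)^{-\alpha/(1+\alpha)}$ per block and $K \asymp \log(1/\delta)$ yields the stated rate $\|\omega^{\pi}\|_{\infty} M_R^{(1+\alpha)} [\log(1/\delta)]^{\alpha/(1+\alpha)} n^{-\alpha/(1+\alpha)}$ after substituting $|B_k| \asymp n/K$. A symmetric argument handles the lower tail, and a union bound over the two sides (absorbing the factor of $2$ into $\delta$, consistent with $K = \lceil 8\log(2/\delta)\rceil$) completes it.

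I would then assemble the two contributions: the bias term contributes $M_R^{(1+\alpha)} \max_k\|\widehat{\omega}_k^{\pi}-\omega^{\pi}\|_{\infty}$, which I rewrite as $M_R^{(1+\alpha)}\|\widehat{\omega}^{\pi}-\omega^{\pi}\|_{\infty}$ (identifying the per-fold estimators with the generic ratio estimator in the theorem statement, or taking a max), and the fluctuation term contributes $M_R^{(1+\alpha)}\|\omega^{\pi}\|_{\infty}[\log(1/\delta)]^{\alpha/(1+\alpha)} n^{-\alpha/(1+\alpha)}$; their sum is the claimed bound. Here Assumption~3 (sequential overlap) is what guarantees $\|\omega^{\pi}\|_{\infty} < \infty$ so that the moment of $\omega^{\pi}(S,A)R$ is finite, and Assumption~1 gives the independence needed both within blocks and across folds.

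The main obstacle I anticipate is the interaction between the median-over-folds operation and the per-fold ratio-estimation error: unlike the clean scalar MM setting, here the quantities being medianed are themselves random estimators with a bias that is not mean-zero, so the "at least half the blocks are good" argument must be run on the \emph{oracle} fold means while carrying the uniform $\ell_{\infty}$ slack through every inequality. Getting the constants and the block-size bookkeeping ($|B_k| \ge \lfloor n/K\rfloor$, $K \asymp \log(1/\delta)$) to line up so that the final exponent is exactly $-\alpha/(1+\alpha)$ in $n$ — rather than in $n/\log(1/\delta)$ — and confirming that the $\log(1/\delta)$ factor enters only as $[\log(1/\delta)]^{\alpha/(1+\alpha)}$ and not with a worse power, is the delicate quantitative step; everything else is a routine H\"older estimate plus the cited Proposition~2.
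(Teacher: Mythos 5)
Your proposal is correct and follows essentially the same route as the paper: the same decomposition of each fold estimate into an oracle term $\mathbb{E}_{\mathcal{D}_k}[\omega^{\pi}(S,A)R]-J^{\pi}$ (controlled via the $(1+\alpha)$-moment bound on $\omega^{\pi}(S,A)R$ and the Bubeck et al./Devroye et al. block-mean deviation inequality) plus a ratio-error term bounded by $\|\widehat{\omega}^{\pi}-\omega^{\pi}\|_{\infty}$ times the empirical mean of $|R|$, which is reduced to $M_R^{(1+\alpha)}$ by H\"older on a high-probability event. The median-over-folds step you spell out (at least half the folds are good, plus a binomial tail with $K\asymp\log(1/\delta)$) is exactly the Lugosi--Mendelson argument the paper invokes by citation.
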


According to Theorem \ref{thm:MIS}, the estimation error of the proposed \name{ROAM-MIS} estimator can be decomposed into the sum of two terms. The first term depends crucially on the estimation error of the MIS ratio. By definition, the MIS ratio is independent of the reward distribution. As such, existing solutions are applicable to compute $\widehat{\omega}^{\pi}$ to obtain a tight estimation error bound. The second term depends on $\delta$ only through $(\log(1/\delta))^{\frac{\alpha}{1+\alpha}}$, which demonstrates the advantage of MM. It also relies on $\|\omega^{\pi}\|_{\infty}$, which measures the 
degree of distribution shift due to the discrepancy between the behavior and target policies. Finally, notice that both terms are proportional to $(\mathbb{E} [|R|^{1+\alpha}])^{\frac{1}{1+\alpha}}$. 
Compared with Proposition~\ref{prop:mm-estimator} for the classical MM estimation, Theorem~\ref{thm:MIS} shows \name{ROAM-MIS} presents a much greater challenge. Although this may seem intuitive, our quantitative analysis reveals two key insights: (i) the error of the estimated ratio has an additive effect on the OPE error, and (ii) the moment of reward introduces an additional scaling effect on the OPE error.

We next study \name{ROAM-DM}. For illustration purposes, we focus on a particular \name{BaseOPE} algorithm, the LSTD algorithm \citep{bradtke1996linear,boyan1999least}. 
In particular, let $\phi(S, A)$ denote a set of uniformly bounded basis functions, we parameterize the Q-function $Q^{\pi}(s,a)\approx \phi^\top(s,a) \theta^*$ for some $\theta^*$ and estimate this parameter by solving the following estimating equation with respect to $\theta$,
\begin{eqnarray*}
    \mathbb{E}_{\data} \Big\{\sum_a \phi(S, A) [R+\phi^\top(S', a) \theta-\phi^\top(s,a) \theta]\Big\}=0.
\end{eqnarray*}
Denote the resulting estimator as $\widehat \theta$.
Let $\xi_\pi(S, A)=\sum_{a}\Mean [\pi(a|S')\phi(S', a)|S, A]$. We impose some additional assumptions. 

\textbf{Assumption 4} (Realizability) There exists some $\theta^*$ such that $Q^{\pi}(s,a)=\phi^\top(s,a) \theta^*$ for any $s$ and $a$.  

\textbf{Assumption 5} (Invertibility) The minimum eigenvalue of 
$[\Mean \phi(S, A)\phi^\top(s,a)-\gamma \Mean \xi_\pi(S, A)\xi_\pi^\top(S, A)]$, denoted by $\lambda_{\min}$, is strictly positive. 

We again, make some remarks. First, Assumption 4 is a widely-used condition in the OPO literature to simplify the theoretical analysis \citep{duan2020minimax, jiang2020minimax, min2021variance, zhan2022offline}. It can be relaxed by allowing the Q-function to be misspecified, i.e., $\inf_{\theta}\mathbb{E} |Q^{\pi}(s,a)-\phi^\top(s,a) \theta|^2>0$ \citep[see, e.g.,][]{chen2019information}. Second, Assumption 5 is commonly imposed in the literature \citep{luckett2020estimating,perdomo2022sharp,shi2022statistical}. It can be viewed as a version of the Bellman completeness assumption when specialized to linear models \citep{munos2008finite}. Moreover, according to Theorem 3 in \citet{perdomo2022sharp}, invertibility is a necessary condition for solving any OPE problem using a broad class of linear estimators such as LSTD.
%parameterizes the Q-function using linear function approximation and 

\begin{theorem}\label{thm:DM}
   Assume Assumptions 1, 2, 4 and 5 hold. Then for any $\delta>0$ such that $\lambda_{\min}\gg (n/\log \delta^{-1} )^{-1/2}$, by setting $K=\lceil 8 \log (2 / \delta)\rceil$, we have with probability $1-\delta$ that $|\widehat{J}^{\pi}_{\textrm{DM}}-J^{\pi}|$ is of the order of magnitude 
    \begin{eqnarray*}
        \lambda_{\min}^{-1}(\mathbb{E} |R|^{1+\alpha})^{\frac{1}{1+\alpha}}\left(\log (1/\delta)\right)^{\frac{\alpha}{1+\alpha}}n^{\frac{-\alpha}{1+\alpha}}.
    \end{eqnarray*}
\end{theorem}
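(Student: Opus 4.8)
The plan is to reduce the analysis of $\widehat{J}^\pi_{\textrm{DM}}$ to a median-of-means estimate applied to the per-block plug-in values $\widehat{J}_k^\pi = \mathbb{E}_{s\sim\mathbb{G},a\sim\pi(\bullet|s)}\phi^\top(s,a)\widehat\theta_k$, and then control the error of each $\widehat J_k^\pi$ through the error of the block-wise LSTD parameter $\widehat\theta_k$. First I would fix a block $k$ with $n_k = |\mathcal{D}_k| \asymp n/K \asymp n/\log\delta^{-1}$ observations, write the estimating equation in the matrix form $\widehat{A}_k\widehat\theta_k = \widehat{b}_k$, where $\widehat{A}_k = \mathbb{E}_{\mathcal{D}_k}[\sum_a\phi(S,A)(\phi(s,a)-\gamma\phi(S',a))^\top]$ (using $\pi$ appropriately in the $S'$ term, so that the population version is $A := \mathbb{E}\phi(S,A)\phi^\top(s,a) - \gamma\mathbb{E}\phi(S,A)\xi_\pi^\top(S,A)$, symmetrized via Assumption 5 to have minimum eigenvalue $\lambda_{\min}$), and $\widehat{b}_k = \mathbb{E}_{\mathcal{D}_k}[\sum_a\phi(S,A)R]$. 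By Assumption 4, $\theta^* = A^{-1}b$ with $b = \mathbb{E}[\sum_a\phi(S,A)R] = A\theta^*$, so $\widehat\theta_k - \theta^* = \widehat{A}_k^{-1}[(\widehat b_k - b) - (\widehat A_k - A)\theta^*]$. Since $\phi$ is uniformly bounded, $\widehat A_k - A$ is a sample average of bounded i.i.d. terms and concentrates at rate $n_k^{-1/2}\sqrt{\log\delta^{-1}} \asymp n^{-1/2}\log\delta^{-1}$ with probability $1-\delta/(3K)$, say, by Hoeffding/matrix-Bernstein; combined with the assumption $\lambda_{\min}\gg(n/\log\delta^{-1})^{-1/2}$ this gives $\|\widehat A_k^{-1}\| \lesssim \lambda_{\min}^{-1}$ on that event. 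The term $\|\widehat A_k - A\|\,\|\theta^*\| \lesssim \lambda_{\min}^{-1}n^{-1/2}\log\delta^{-1}$ is then lower-order compared to the claimed bound (since $n^{-1/2} \ll n^{-\alpha/(1+\alpha)}$ for $\alpha<1$, and for $\alpha=1$ they match up to log factors), so it can be absorbed.

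The key term is $\widehat b_k - b = \mathbb{E}_{\mathcal{D}_k}[\sum_a\phi(S,A)R] - \mathbb{E}[\sum_a\phi(S,A)R]$, which is a sample average over the $k$-th block of the heavy-tailed vector-valued quantity $\sum_a\phi(S,A)R$. Here I would \emph{not} apply a concentration inequality to each block (that would reintroduce the polynomial-in-$n$ blowup discussed after Definition~1); instead the median-of-means structure of $\widehat J^\pi_{\textrm{DM}}$ is exactly what saves us. Writing $\widehat J_k^\pi = \psi^\top\widehat\theta_k$ with $\psi := \mathbb{E}_{s\sim\mathbb{G},a\sim\pi(\bullet|s)}\phi(s,a)$ bounded, we get $\widehat J_k^\pi - J^\pi = \psi^\top A^{-1}(\widehat b_k - b) + (\text{lower order})$, so $\widehat J_k^\pi$ is (up to a controlled bias of order $\lambda_{\min}^{-1}n^{-1/2}\log\delta^{-1}$) a sample mean over block $k$ of the scalar random variable $g(S,A,R) := \psi^\top A^{-1}\sum_a\phi(S,A)R$. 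This scalar has mean $J^\pi$ (by Assumption 4), and its $(1+\alpha)$-th moment is finite and bounded by $\|\psi\|\,\lambda_{\min}^{-1}\,\|\phi\|_\infty\,\cdot C\,(\mathbb{E}|R|^{1+\alpha})^{1/(1+\alpha)}$ times a combinatorial factor, using Assumption 2 and boundedness of $\phi$. Then $\widehat J^\pi_{\textrm{DM}} = \operatorname{Median}(\{\widehat J_k^\pi\})$ is — modulo the uniform-over-blocks bias — exactly the classical MM estimator of $\mathbb{E}[g]=J^\pi$ with $K=\lceil 8\log(2/\delta)\rceil$ blocks, so Proposition~2 applies and yields $|\operatorname{Median}(\{\widehat J_k^\pi\}) - J^\pi| \lesssim \lambda_{\min}^{-1}(\mathbb{E}|R|^{1+\alpha})^{1/(1+\alpha)}(\log(1/\delta))^{\alpha/(1+\alpha)}n^{-\alpha/(1+\alpha)}$ on an event of probability $1-\delta$. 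One subtlety to handle carefully: the median operator in Algorithm~1 is applied pointwise to the \emph{Q-functions} and then integrated, whereas the cleanest argument integrates first and then takes the median (the \name{ROAM-Variant}); since LSTD gives $\widehat Q_k^\pi(s,a) = \phi^\top(s,a)\widehat\theta_k$ is linear in $\widehat\theta_k$ and $\psi^\top\widehat\theta_k$ is monotone along the one-dimensional family... actually these two orderings need not coincide in general, so I would either (a) prove the theorem for the integrate-then-median variant and remark that the stated algorithm's error is controlled by the same quantity up to the additional approximation term $\|\widehat Q^\pi_{\textrm{median}} - \operatorname{Median}(\widehat Q_k)\|$, or (b) bound $|\mathbb{E}_{s,a}\operatorname{Median}_k(\widehat Q_k(s,a)) - \operatorname{Median}_k(\mathbb{E}_{s,a}\widehat Q_k(s,a))| \le \max_k \mathbb{E}_{s,a}|\widehat Q_k - \operatorname{Median}_j \widehat Q_j|$ and show this is of the same order, which follows since on the good event every $\widehat Q_k$ is within the desired rate of $Q^\pi$ pointwise — this pointwise control again comes from $\|\widehat\theta_k - \theta^*\|$, but now we need it to hold simultaneously for all $k$, which costs only a union bound over $K = O(\log\delta^{-1})$ blocks and hence is harmless.

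Finally I would assemble the pieces via a union bound over the $K$ blocks for the matrix-concentration events plus the single MM event, replace $n_k$ by $n/K$ throughout (noting $K = \Theta(\log\delta^{-1})$ only shifts constants and log powers, which are absorbed into the $(\log(1/\delta))^{\alpha/(1+\alpha)}$ factor), and collect the dominant term $\lambda_{\min}^{-1}(\mathbb{E}|R|^{1+\alpha})^{1/(1+\alpha)}(\log(1/\delta))^{\alpha/(1+\alpha)}n^{-\alpha/(1+\alpha)}$, checking that the bias term $\lambda_{\min}^{-1}n^{-1/2}\log\delta^{-1}$ and the $(\widehat A_k - A)\theta^*$ term are both $O$ of this under the stated condition $\lambda_{\min}\gg(n/\log\delta^{-1})^{-1/2}$ and $\alpha\in(0,1]$. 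The main obstacle I anticipate is the careful bookkeeping around the median-of-Q-functions versus median-of-values discrepancy (the point (a)/(b) above) and ensuring the block-wise LSTD bias terms are genuinely dominated — in particular verifying that the invertibility condition on the \emph{empirical} $\widehat A_k$ holds on a high-probability event at the block sample size $n/K$, which is exactly where the hypothesis $\lambda_{\min}\gg(n/\log\delta^{-1})^{-1/2}$ is used.
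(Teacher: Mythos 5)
Your overall architecture is the same as the paper's: reduce the problem to showing that each block-wise LSTD estimator $\widehat{\theta}_k$ lies within $c\,\lambda_{\min}^{-1}(\mathbb{E}|R|^{1+\alpha})^{\frac{1}{1+\alpha}}(K/n)^{\frac{\alpha}{1+\alpha}}$ of $\theta^*$ with \emph{constant} probability (say $0.8$), by bounding $\|\widehat{A}_k^{-1}\|$ via $\lambda_{\min}^{-1}$ on a high-probability event (this is where $\lambda_{\min}\gg(n/\log\delta^{-1})^{-1/2}$ enters) and applying a weak, Chebyshev-type concentration bound to the heavy-tailed empirical average of $\phi$ times the Bellman residual; then boost the confidence to $1-\delta$ through the median over $K=\lceil 8\log(2/\delta)\rceil$ blocks exactly as in the Lugosi--Mendelson analysis. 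This matches the paper's proof in Appendix A1.2, and your bookkeeping of the lower-order $(\widehat{A}_k-A)\theta^*$ term is consistent with it.

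There is, however, one step in your plan that would fail as written: your option (b) for reconciling the median-of-Q-functions in Algorithm~1 with the median-of-scalars analysis. You propose to bound $\max_k \mathbb{E}_{s,a}|\widehat{Q}_k-\operatorname{Median}_j\widehat{Q}_j|$ by requiring $\|\widehat{\theta}_k-\theta^*\|$ to be small \emph{simultaneously for all} $k$ via a union bound over the $K$ blocks. This is precisely the move that MM is designed to avoid: the heavy-tailed part of $\widehat{\theta}_k-\theta^*$ only satisfies the rate $(K/n)^{\frac{\alpha}{1+\alpha}}$ with constant per-block probability, so forcing every block to succeed with probability $1-\delta/K$ inflates the per-block radius by a factor of order $(K/\delta)^{\frac{1}{1+\alpha}}$, reintroducing the polynomial dependence on $1/\delta$ that the theorem is meant to eliminate. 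The correct resolution needs no union bound: with probability $1-\delta$, \emph{more than half} of the blocks satisfy $\|\widehat{\theta}_k-\theta^*\|\le\Delta$ (Hoeffding applied to the Bernoulli indicators of block success), and since $\widehat{Q}_k(s,a)=\phi^\top(s,a)\widehat{\theta}_k$ with $\phi$ uniformly bounded, on this single event the pointwise median $\operatorname{Median}_k\widehat{Q}_k(s,a)$ lies within $\|\phi\|_\infty\Delta$ of $Q^{\pi}(s,a)$ for \emph{every} $(s,a)$ simultaneously, and integrating over $\mathbb{G}$ and $\pi$ preserves the bound. (The union bound over blocks is harmless only for the bounded-variable events, i.e., the invertibility of $\widehat{A}_k$, where exponential concentration is available.) With that correction your argument closes; the rest of your proposal, including the treatment of the integrate-then-median variant via Proposition~2, is sound.
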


Using similar arguments in the proof of Theorem \ref{thm:DM} (see Appendix \ref{sec:proofthmDM}), we can show that the estimation error of the standard LSTD estimator grows at a polynomial order of $\delta^{-1}$. This again demonstrates the advantage of our proposal. 
%of the order of magnitude $\lambda_{\min}^{-1}(\mathbb{E} |R|^{1+\alpha})^{\frac{1}{1+\alpha}}\delta^{-\alpha/(1+\alpha)} n^{-\alpha/(1+\alpha)}$.
Moreover, Theorem~\ref{thm:DM} shows that, $\lambda^{-1}_{\min}$ serves a unique factor when compared to the classical MM estimator. 
Like Theorem~\ref{thm:MIS}, the $(1+\alpha)$-moment of the reward have a scaling effect on the OPE error. And thus, Theorem~\ref{thm:DM} highlights the challenges of robustifying the Q function and quantitatively illustrates the crucial terms for controlling the error of MM-based LSTD. To the best of our knowledge, this has largely unexplored in literature. Lastly, from the proof of Theorem~\ref{thm:DM}, we can prove \name{ROAM-Variant} also possesses the same order of error bound.

Finally, the subsequent theorem shows that \name{ROOM} derives a ``robusified'' pointwise lower bound of $Q^*(s, a)$. For illustration purposes, we concentrate on a specific \name{BaseOPO} method, the LSTD-Q algorithm. 

\begin{theorem}\label{thm:pess-q}
    If Assumptions 1, 2, 4 hold and Assumption 5 holds for $\pi^*$, then for any $(s, a) \in \mathcal{S} \times \mathcal{A}$, the following event  
    \begin{align*}
        Q^*(s, a) - \lambda_{\min}^{-1}(\mathbb{E} |R|^{1+\alpha})^{\frac{1}{1+\alpha}}n^{\frac{-\alpha}{1+\alpha}}
        \geq \mathcal{Q}_{q}(\{\widehat{Q}_k(s, a)\}_{k=1}^K)
    \end{align*}
    hold with probability at least $1 - \exp(-2K(2q-1)^2)$.
\end{theorem}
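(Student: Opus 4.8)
The plan is to reduce the statement to a single-fold deviation bound followed by a binomial concentration argument over the $K$ i.i.d.\ folds. Fix the pair $(s,a)$ and abbreviate $B_n := \lambda_{\min}^{-1}(\mathbb{E}|R|^{1+\alpha})^{\frac{1}{1+\alpha}} n^{-\frac{\alpha}{1+\alpha}}$ and $Q^* := Q^*(s,a)$. Because the partition $\mathcal{D}_1,\dots,\mathcal{D}_K$ is disjoint, Assumption 1 guarantees that the fold estimators $\widehat{Q}_1(s,a),\dots,\widehat{Q}_K(s,a)$ produced by \name{BaseOPO} are i.i.d. The first step is purely combinatorial: by the definition of the $q$-th lower quantile, the event $\{\mathcal{Q}_q(\{\widehat{Q}_k(s,a)\}_{k=1}^K)\le Q^*-B_n\}$ holds if and only if at least $\lceil qK\rceil$ of the folds satisfy $\widehat{Q}_k(s,a)\le Q^*-B_n$. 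Introducing the indicators $W_k=\mathbb{I}\{\widehat{Q}_k(s,a)\le Q^*-B_n\}$, which are i.i.d.\ Bernoulli with parameter $p:=\mathbb{P}(\widehat{Q}_k(s,a)\le Q^*-B_n)$, the target event becomes $\{\sum_{k=1}^K W_k\ge \lceil qK\rceil\}$.

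The second step, which I expect to be the crux, is to establish the per-fold one-sided bound $p\ge 1-q$ with the stated radius $B_n$. Here I would reuse the LSTD machinery from the proof of Theorem~\ref{thm:DM}, now applied to a single fold and to the optimal policy $\pi^*$. Under Assumption 4 we write $Q^*=\phi^\top\theta^*$, and the fold estimator solves the empirical estimating equation on $\mathcal{D}_k$ for $\widehat{\theta}_k$; Assumption 5 for $\pi^*$ makes the associated population operator invertible with minimum eigenvalue $\lambda_{\min}$, so that $\widehat{\theta}_k-\theta^*$ is controlled, up to the factor $\lambda_{\min}^{-1}$, by the centered empirical estimating-equation residual, an average of i.i.d.\ terms carrying the reward $R$. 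Since Assumption 2 supplies only a finite $(1+\alpha)$-th moment, I would bound this residual through the von Bahr--Esseen inequality rather than a variance-based concentration; projecting onto the uniformly bounded basis $\phi(s,a)$ then converts the parameter deviation into a deviation of $\widehat{Q}_k(s,a)$ from $Q^*$ of order $B_n$. The delicate point is to extract from this a one-sided tail statement calibrated so that $\mathbb{P}(\widehat{Q}_k(s,a)\le Q^*-B_n)\ge 1-q$; this is exactly the probability level that will make the Hoeffding exponent in the final step equal to $2K(2q-1)^2$.

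The third step is a direct application of Hoeffding's inequality. The $W_k$ are i.i.d.\ in $\{0,1\}$ with mean $p$, and since $\sum_k W_k$ is integer-valued the event $\{\sum_k W_k<\lceil qK\rceil\}$ is contained in $\{\tfrac1K\sum_k W_k<q\}$; hence
\begin{align*}
\mathbb{P}\Big(\textstyle\sum_{k=1}^K W_k< \lceil qK\rceil\Big)\le \mathbb{P}\Big(\tfrac1K\textstyle\sum_k W_k-p< q-p\Big)\le \exp\!\big(-2K(p-q)^2\big),
\end{align*}
the last bound using $q\le p$ (guaranteed by $p\ge 1-q$ together with $q\le\tfrac12$, the pessimistic regime). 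Substituting $p\ge 1-q$ gives $(p-q)^2\ge(2q-1)^2$, so this failure probability is at most $\exp(-2K(2q-1)^2)$. Its complement is exactly the event $\{\mathcal{Q}_q(\{\widehat{Q}_k(s,a)\}_{k=1}^K)\le Q^*-B_n\}$ from the first step, which therefore holds with probability at least $1-\exp(-2K(2q-1)^2)$. Since $(s,a)$ was arbitrary, this is the claimed pointwise bound.

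The \textbf{main obstacle} is the per-fold bound in the second step: establishing heavy-tailed concentration for the single-fold LSTD-Q estimator through the $(1+\alpha)$-moment inequality while (i) correctly propagating the $\lambda_{\min}^{-1}$ factor from inverting the population operator for $\pi^*$, (ii) handling the optimal-policy/max-over-actions structure of \name{BaseOPO} under realizability, and (iii) calibrating the one-sided tail so that the per-fold success probability reaches the level $1-q$ needed for the Hoeffding exponent to match $2(2q-1)^2$. Once this single-fold statement is in place, the combinatorial reduction and the binomial concentration are routine.
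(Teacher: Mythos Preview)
Your three-step scheme---per-fold heavy-tailed deviation bound, indicator reduction, Hoeffding over the $K$ i.i.d.\ folds, all resting on the LSTD error decomposition of Theorem~\ref{thm:DM}---is exactly the paper's route. The paper packages the first two pieces as a standalone lemma on the quantile of i.i.d.\ sample means, invoking a Chebyshev-type moment inequality from \citet{bubeck2013bandits}, namely $\mathbb{P}(|\widehat{\mu}_l-\mu|\ge J)\le 6v\,N^{-\epsilon} J^{-(1+\epsilon)}$, in place of your von Bahr--Esseen; either tool gives the same order, and the paper then transfers the lemma to $Q$-values exactly as you propose, by reusing the argument of Theorem~\ref{thm:DM} with $\pi^*$ in place of $\pi$.

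The one genuine gap is the \emph{direction} of your per-fold target. You aim for $p=\mathbb{P}(\widehat{Q}_k\le Q^*-B_n)\ge 1-q$, but no concentration argument can force a centered fold estimator to \emph{under}shoot by $B_n$ with high probability; what the moment bound actually delivers is the opposite one-sided statement $\mathbb{P}(\widehat{Q}_k\ge Q^*-B_n)\ge 1-q$. This is precisely the indicator the paper sets up in its lemma, $Y_l=\mathbb{I}\{\mu-J\le\widehat{\mu}_l\}$, with $p=\mathbb{E}Y_l\ge 1-q$ after calibrating $J$ so that $6vN^{-\epsilon}J^{-(1+\epsilon)}\le q$. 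If you flip your $W_k$ accordingly, your Steps~1 and~3 go through verbatim and produce $\mathcal{Q}_q\ge Q^*-B_n$ with probability at least $1-\exp(-2K(2q-1)^2)$. That is the direction the paper's own proof supports; the inequality in the theorem display appears to be oriented opposite to what the argument (and the ensuing discussion of a ``bounded gap'') actually establishes. So your Step~2 difficulty is a sign issue inherited from the statement, not a structural obstacle.
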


Notably, the gap between $Q^*(s, a)$ and pessimistic Q function estimation is proportional to $\mathbb{E} |R|^{1+\alpha}$, which exists even when rewards are heavy-tailed. Therefore, compared to pessimistic methods based on subtracting standard deviation, our proposal provides a robust lower bound and addresses heavy-tailedness and data coverage simultaneously.

\section{EXPERIMENTS}\label{sec:experiments}
% The code of our proposal is available at \url{https://github.com/Mamba413/RobustORL}.
% In this section, we empirically study the performance of \name{ROAM}-type algorithms (Section \ref{sec:experiments-ope}) and \name{ROOM}-type algorithms (Section \ref{sec:experiments-opo}). 

% \subsection{Experiments for OPE}\label{sec:experiments-ope}
\textbf{Experiments for OPE.}\label{sec:experiments-ope}
We first describe the procedure to generate the offline dataset $\data$ with heavy-tailed rewards.
We first train a policy online under $\mathcal{M}$ using PPO \citep{schulman2017proximal} and denote it as $\pi^*$, the optimal policy. 
Let the behaviour policy be a $\epsilon$-greedy policy based on $\pi^*$, i.e., $\pi^b \coloneqq (1 - \epsilon)\pi^* + \epsilon \pi^r$ where $\pi^r$ is a random policy. We set $\epsilon=0.05$ in our experiments.
We use $\pi^b$ to interact with the environment to collect an offline dataset $\mathcal{D}'$ with 100 episodes. 
In $\mathcal{D}'$, we add \textit{i.i.d.} zero-mean heavy-tailed random variables $\nu_{\textup{df}}$ to the rewards and obtain the dataset $\data$. 
We set $\nu_{\textup{df}}$ as a scaled $t_{\textup{df}}$ random variable, i.e., $\nu_{\textup{df}} \coloneqq  \kappa \nu'_{\textup{df}}/\sigma^2$, in which $\nu'_{\textup{df}}$ comes from a $t_{\textup{df}}$ distribution and $\sigma$ is the standard deviation of $\alpha$-truncated $t_{\textup{df}}$ random variable where $\alpha = 0.02$. 
The degree of freedom (df) controls the degree of heavy-tailedness, and $\kappa$ controls the impact of heavy-tailed noises. 
% To control the impact of the heavy-tailed noises
% Repeat the second step $N$ times so as to get $N$ trajectories: $\tau_1, \ldots, \tau_N$, and combine them into an offline dataset $\data$.

With the offline data $\data$, we investigate the performance of \name{ROAM-DM}, \name{ROAM-MIS}, and \name{ROAM-Variant} on estimating the value of $\pi^\ast$ by comparing with the vanilla FQE algorithm on a classical OpenAI Gym tasks, Cartpole. To ablate the effect of computing $K$ functions, we also compare with mean-aggregation-based methods, named \name{MA-DM} and \name{MA-MIS}. Moreover, considering the truncated mean (TM) as a useful technique in robust statistics, we also take it into consideration by implementing TM upon MIS, denoted by \name{TM-MIS}. Finally, since the FQE algorithm iteratively performs temporal difference (TD) updates, it is natural to leverage the structure of the MDP by applying MM to the TD updates. We formulate this idea for the FQE in Algorithm~\ref{alg:fqe-mm-internal} in Appendix~\ref{sec:roam-fqe} and denote this algorithm as \name{ROAM-FQE}. For all methods, we use a linear model $\phi^\top(s, a)\theta$ to model  $Q^{\pi^*}(s, a)$, where $\phi(s, a)$ includes the main effects and two-order interactions of the feature vector $(s^\top, a^\top)^\top$, generated by the \name{PolynomialFeatures} method of \name{scikit-learn} \citep{pedregosa2011scikit}. 
% The input \name{BaseOPE} algorithm is from \citep{duan2020minimax}. 
% Out of fairness, \name{ROAM-DM} employs \name{FQE} as the \name{BaseOPE} algorithm and \name{ROAM-FQE} the same linear model in \name{FQE}. 

% We evaluate the performance of offline policy evaluation on the optimal policy $\pi^*$. 
% estimated 
% by first sampling $5000$  episodes of length 100 following $\pi^\ast$ and computing a Monte Carlo estimator $\eta^{\textup{MC}}$. 
Given the ground truth $J^{\pi^*}$ computed via Monte Carlo, the mean squared errors of all methods are reported in Figure~\ref{fig:ope-cartpole}, aggregated over 100 replicates in each case. From Figure~\ref{fig:ope-cartpole}, all methods' MSEs reasonably decrease as the degree of freedom increases. The vanilla FQE method is outperformed by our methods, due to that it is not robust to heavy-tailed noises. The differences between \name{FQE} and our methods diminish when df increases, which is reasonable. Although \name{TM-MIS} does outperform the vanilla MIS, it is generally surpassed by \name{ROAM-MIS}. Additionally, \name{ROAM-MIS}, \name{ROAM-DM} and \name{ROAM-Variant} have a tiny difference, while \name{ROAM-FQE} surpasses all of them. This implies using MM at each iteration of FQE largely mitigates the negative impact of heavy-tailed rewards such that the whole dataset can be fully utilized during iterations. Finally, the comparison between \name{ROAM-DM} (or \name{ROAM-MIS}) and \name{MA-DM} (or \name{MA-MIS}) reveals the mean-ensemble strategy cannot handle heavy-tailed rewards. 

\begin{figure}[htbp]
    \begin{subfigure}{1.0\linewidth}
        \centering
        \caption{}
        \includegraphics[width=0.895\linewidth]{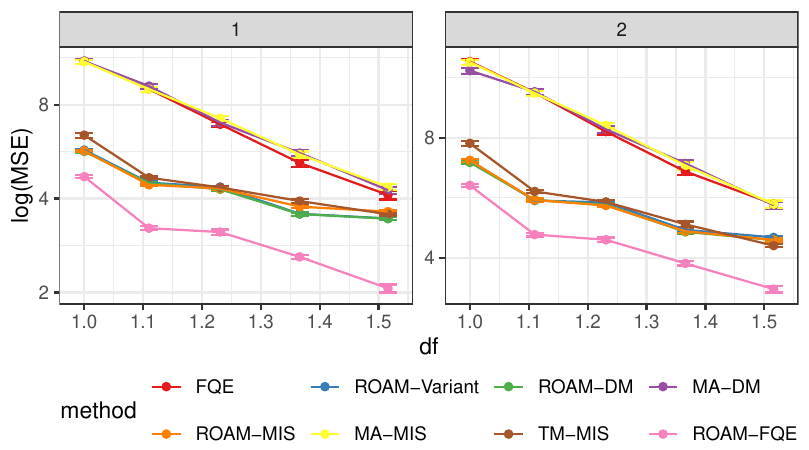}
        \label{fig:ope-cartpole}
    \end{subfigure}
    \hfill
    \begin{subfigure}{1.0\linewidth}
        \centering
        \caption{}
        \includegraphics[width=0.895\linewidth]{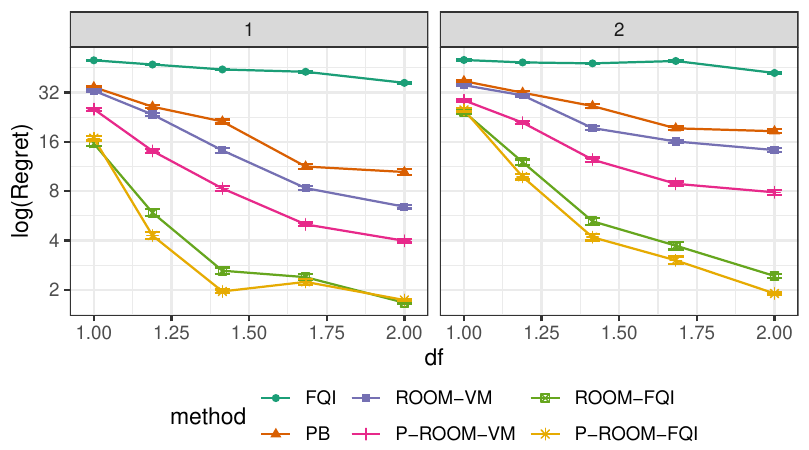}
        \label{fig:opo-cartpole}
    \end{subfigure}
    \caption{(a) OPE task: the trend of log(MSE) with the degree of freedom (DF). (b) OPO task: The trend of regret with respect to the DF. $\kappa$ takes value 1.0 (Left panel) and 2.0 (Right panel) in each subfigure. The error bar corresponds to 95\% CI. }
\end{figure}

\textbf{Compared with bootstrap-based \name{ROAM}.} Since our proposal can be interpreted as the ensemble of Q functions with a Median operator, another heuristic variant is using bootstrap instead of data partition. We conducted a comparison between this variant (denoted with the suffix ``\name{B-}'') and \name{ROAM}-type methods, adopting the same settings as in the previous section. The results are visualized in Figure~\ref{fig:boot_split_compare}. It is evident that the vertical gap between \name{ROAM-DM} and \name{B-ROAM-DM} is negligible. Furthermore, they perform significantly better than the corresponding \name{MA}-type methods. This phenomenon also holds for MIS-type methods presented in the right panel. Therefore, we can conclude that bootstrap serves as an alternative implementation for the proposed procedure.

\begin{figure}[htbp]
    \centering
    \includegraphics[width=1.0\linewidth]{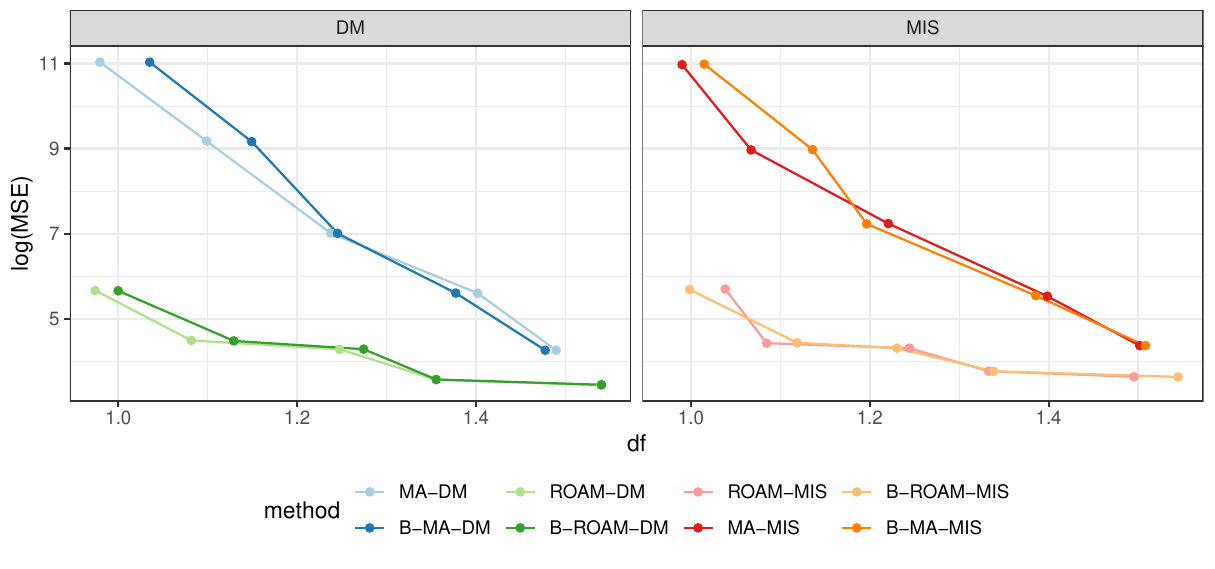}
    % \vspace{-15pt}
    \caption{The left panel presents the results for DM methods, and the right one displays the results for MIS methods. To prevent point overlap, random noise has been added to each point on the $x$-axis.}\label{fig:boot_split_compare}
\end{figure}
%Our methods have small differences; but, in general, \name{ROAM-FQE} seems to be the better one. 
%We note that the Q- and V-functions do not exist when $\textup{df} \leq 1$. We still test the empirical performance at this extreme case in which \name{ROAM-DM} and \name{ROAM-FQE} are superior to \name{FQE}.
% in the most of cases, the vanilla \name{ROAM-DM} is slightly better than its variant. 

% Then, we compute the mean square error (MSE) under $M$ replications:
% \begin{align*}
%     \textup{MSE}(\eta) = M^{-1}\sum_{m=1}^M (\widehat{\eta}_i - \eta^{\textup{MC}})^2.
% \end{align*}
% A small MSE means the offline policy evaluation method get a precise estimation. 

% \subsection{Experiments for OPO}\label{sec:experiments-opo}
% In this part, we first evaluate \name{ROOM-VM} and \name{ROOM-FQI} as well as their pessimistic variants in the Cartpole environment. Then, we turn to more complex environments.

% \subsubsection{Cartpole environment}\label{sec:experiments-opo-cartpole}
\textbf{Experiments for OPO (Cartpole environment).}\label{sec:experiments-opo-cartpole}
We study our algorithms: (i) \name{ROOM-VM} and (ii) \name{ROOM-VM} with pessimism (denoted as \name{P-ROOM-VM}), where \name{BaseOPO} are set as \name{FQI}. 
We compare two benchmark algorithms: \name{FQI} and the pessimistic-bootstrapping (\name{PB}) methods \citep{bai2022pessimistic}. 
See Appendix~\ref{sec:implement-opo-cartpole} for detailed implementations. 
Motivated by the powerful performance of \name{ROAM-FQE}, we also consider employing MM (and its pessimism version) in the TD update of \name{FQI}. We name these algorithms \name{ROOM-FQI} and \name{P-ROOM-FQI}, and defer their definitions to Algorithm~\ref{alg:mm-fqi-internal} in the Appendix.
% For the latter, we implement them upon Algorithms~\ref{alg:fqi-mm-final} and~\ref{alg:mm-fqi-internal} by following Remark~\ref{remark:pb-differ}.
% For fairness, we set $K = 5$ for all methods. 
We generate 400 episodes to form an offline dataset following the same procedure described in experiments for OPE. To evaluate the performance of a learned policy $\widehat{\pi}^*$, we compute its regret compared with the optimal policy. 
%%% the regret below needs update
% $\textup{Regret}(\widehat{\pi}^*) = \E_{s \sim \mathbb{G}}^{\pi^*} [\widehat{V}(s)] - \E^{\widehat{\pi}^*}_{s \sim \mathbb{G}} [\widehat{V}(s)]$.

% \begin{align*}
%     \textup{Regret}(\widehat{\pi}^*) = \E_{s \sim \Sspace_0}^{\pi^*} [\widehat{V}(s)] - \E^{\widehat{\pi}^*}_{s \sim \Sspace_0} [\widehat{V}(s)].
% \end{align*}
% A small regret means the $\widehat{\pi}^*$ is closed to $\pi^*$, and thus, $\widehat{\pi}^*$ is a superior policy. 

We report the numerical results of 100 replications in Figure~\ref{fig:opo-cartpole}. 
We see that the regret of \name{FQI} reasonably decreases when df goes up, but it decreases more slowly when $\kappa$ enlarges. 
We can also observe that \name{PB} improves over \name{FQI}, because it can properly address the insufficient data coverage issue in OPO. 
However, due to the existence of heavy-tailed rewards, \name{ROOM-VM} and \name{ROOM-FQI} can outperform \name{PB}. 
Even though we have no theoretical guarantee for \name{ROOM-FQI}, it shows a better numerical performance compared with \name{ROOM-VM}. 
Finally, we turn to \name{P-ROOM-VM} and \name{P-ROOM-FQI} in Figure~\ref{fig:opo-cartpole}. As expected, \name{P-ROOM-VM} (or \name{P-ROOM-FQI}) performs better than \name{ROOM-VM} (or \name{ROOM-FQI}) because it addresses the insufficient data coverage issue and the heavy-tailedness  simultaneously. 

\textbf{Experiments for OPO (D4RL datasets).}\label{sec:experiments-opo-d4rl}
We evaluate our proposed approach on the MuJoCo and Kitchen environments in the D4RL benchmarks \citep{fu2020d4rl}, which cover diverse dataset settings and domains. We generate heavy-tailed datasets by adding \textit{i.i.d.} noises into the reward observations, similar as the previous part. To show that the generality of our framework, in these datasets, we use another SOTA algorithm, sparsity Q-learning (\name{SQL}, \citet{xu2023offline}), as our \name{BaseOPO} algorithm. For the sake of fairness, we also take into account the mean aggregation (denoted as \name{MA}), which replaces the \texttt{Median} operator in Step 5 of Algorithm~\ref{alg:fqi-mm-final} with the \texttt{Mean} operator. Setting the discount factor $\gamma=0.99$, we train each algorithm for one million time steps and evaluate it every $5000$ time steps. Each evaluation consists of $10$ episodes. 
                        
We report the performance in Figure~\ref{fig:sql-results} and show learning curves in Figures~\ref{fig:sql-mm-mujoco} and~\ref{fig:sql-mm-kitchen}. It is worth noting that, in all cases, our methods are superior to the vanilla \name{SQL} algorithm. The superiority can be highly significant. For example, on the halfcheetah-expert dataset, \name{ROOM-VM} and \name{P-ROOM-VM} achieve a 30\% improvement over SQL, and in the kitchen environment, our proposal shows a 200\% improvement in \name{SQL}'s returns. This again shows that our proposal can effectively address the challenge of heavy-tailed rewards even in complex environments. The superiority of our proposal over \name{MA} reveals that the mean ensemble cannot handle heavy-tailed rewards but harm numerical performance because it has to use fewer samples to learn Q functions. Notably, in almost all cases, \name{P-ROOM-VM} surpasses \name{ROOM-VM} because \name{P-ROOM-VM} can more effectively address the issue of severe data insufficiency coverage. Furthermore, we also study the cases where \name{BaseOPO} is \name{IQL} \citep{kostrikov2022offline}. The results reported in Appendix~\ref{sec:experiments-iql} again show that our proposal has better performance than vanilla \name{IQL}, reflecting the versatility of our proposal. Finally, motivated by the success of the bootstrap-based variant and \name{P-ROOM-FQI}, we can further extend our proposal to the actor-critic paradigm and train agent with the whole dataset. Additionally, by setting $q=0.0$, this heuristic implementation leads to the exact \name{SAC-N} proposed by \citet{an2021uncertainty}, which is shown to be robust on heavy-tailed rewards in Figure~\ref{fig:sac-n} in the Appendix~\ref{sec:experiments-sacn}.

\begin{figure}[!t]
    \centering
    \includegraphics[width=\linewidth]{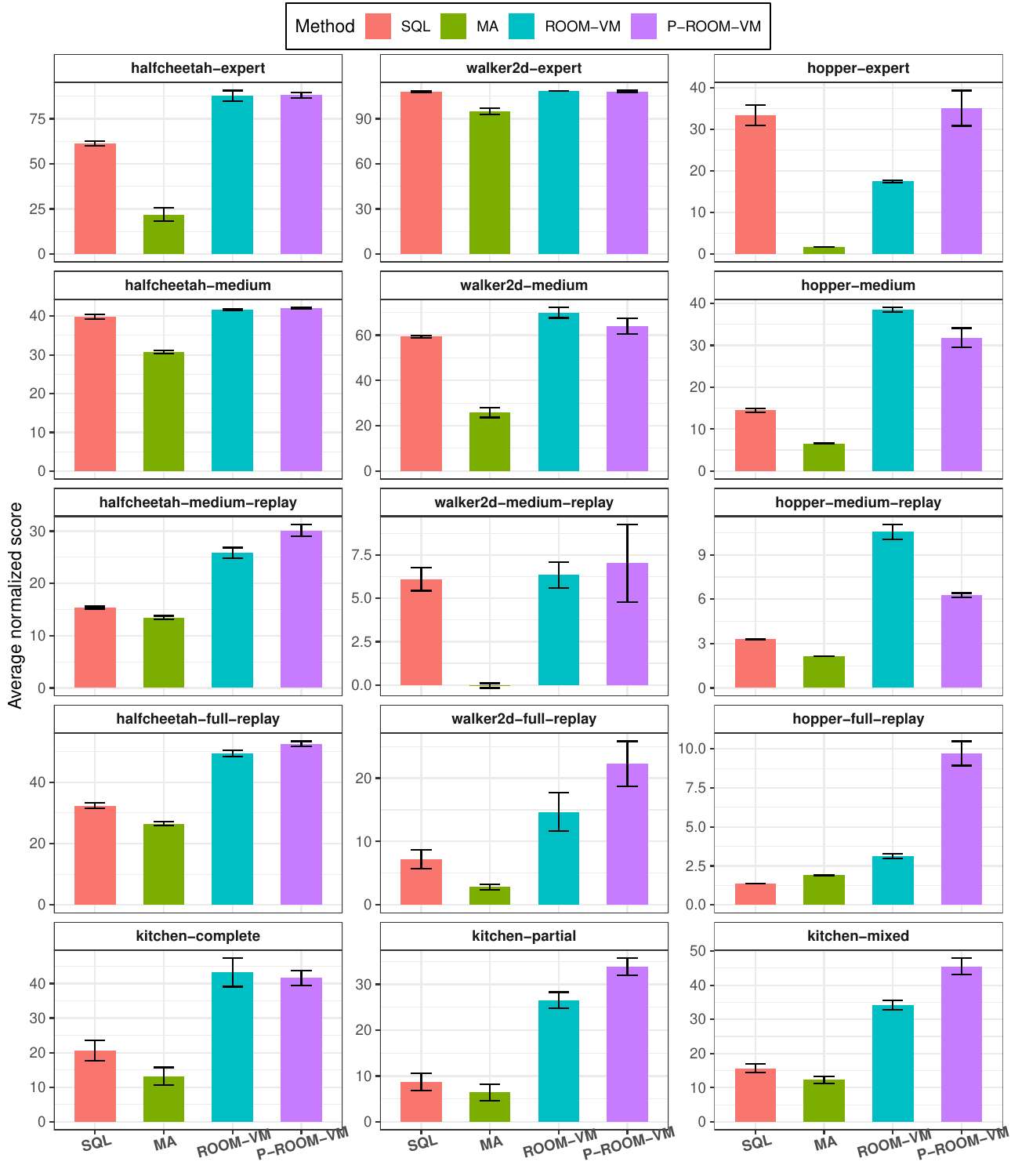}
    \caption{Results on D4RL datasets. Each bar corresponds to the average normalized score that is taken over the final 10 evaluations and 5 seeds. The error bar captures the 2 times standard error over 5 seeds.}
    \label{fig:sql-results}
\end{figure}

\textbf{Trade-off: Computation and robustness. }
We close this section with an examination of the trade-off between computation and robustness as $K$ varies. The results in Figure~\ref{fig:tradeoff_compute_robust} present the performance of \name{ROAM-FQE} in the Cartpole environment. As anticipated, the runtime of our proposal scales linearly with $K$. Yet, the computation is not demanding and can terminate in less than one second on a personal laptop. In terms of statistical performance, we observed that a moderate $K$ --- which well balances accuracy within each fold and accuracy of MM operator --- achieves the highest accuracy while requiring no more than half a second to terminate.

\begin{figure}[htbp]
    \centering
    \includegraphics[width=\linewidth]{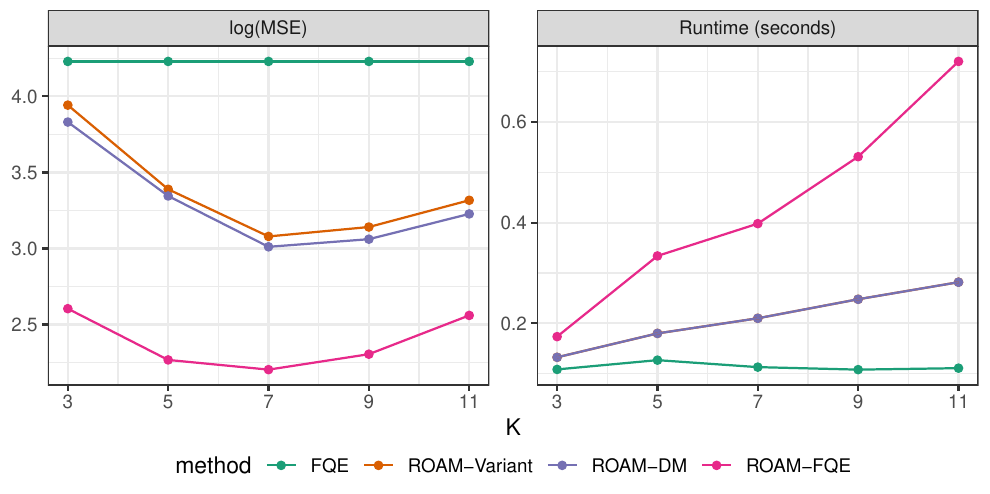}
    \caption{The $\log$(MSE) (left) and runtime (right) of Algorithm~\ref{alg:mm-final} as $K$ increases.}
    \label{fig:tradeoff_compute_robust}
\end{figure}

\section{CONCLUSIONS AND FUTURE WORKS}
Motivated by the real needs for robust offline RL methods against heavy-tailed rewards, we leverage the MM method in robust statistics to design a new frameworks that can robustify existing OPE and OPO algorithms. Our key insight is that employing MM to offline RL does more than just tackle heavy-tailed rewards--it offers valid uncertainty quantification to easily address insufficient coverage issue in offline RL as well. This insight is highly novel and, to our knowledge, has not been previously introduced in literature. Theoretical analysis demonstrates the advantages of our methods and extensive numerical studies support the empirical performance of our methods. An interesting future research direction is exploring the idea of leveraging the nature of heavy-tailed rewards and extending algorithms in \citet{ijcai2019p792} to the RL setting. 

\textbf{Acknowledgements} 

Jin Zhu and Runzhe Wan contribute equally to this article. Zhu’s and Shi’s research were supported by the EPSRC grant EP/W014971/1. The authors express their gratitude for the insightful feedback provided by the referees and the area chair, which significantly enhanced the initial version of this paper.

%Remind that, in numerical studies, our methods still show powerful performance on a extreme case that the mean of reward does not exist. In our future work, it would be interesting to develop a robust reinforcement learning framework to justify about it. It would be also worthwhile to develop methods to tackle the challenging that the transition of state including heavy-tailed observations.  

\bibliography{bib}

\begin{thebibliography}{}

\bibitem[Alon et~al., 1996]{alon1996space}
Alon, N., Matias, Y., and Szegedy, M. (1996).
\newblock The space complexity of approximating the frequency moments.
\newblock In {\em Proceedings of the twenty-eighth annual ACM symposium on
  Theory of computing}, pages 20--29.

\bibitem[An et~al., 2021]{an2021uncertainty}
An, G., Moon, S., Kim, J.-H., and Song, H.~O. (2021).
\newblock Uncertainty-based offline reinforcement learning with diversified
  q-ensemble.
\newblock {\em Advances in neural information processing systems},
  34:7436--7447.

\bibitem[Bai et~al., 2022]{bai2022pessimistic}
Bai, C., Wang, L., Yang, Z., Deng, Z.-H., Garg, A., Liu, P., and Wang, Z.
  (2022).
\newblock Pessimistic bootstrapping for uncertainty-driven offline
  reinforcement learning.
\newblock In {\em International Conference on Learning Representations}.

\bibitem[Bertsekas, 2012]{bertsekas2012dynamic}
Bertsekas, D. (2012).
\newblock {\em Dynamic programming and optimal control: Volume I}, volume~1.
\newblock Athena scientific.

\bibitem[Bhandari et~al., 2021]{bhandari2021finite}
Bhandari, J., Russo, D., and Singal, R. (2021).
\newblock A finite time analysis of temporal difference learning with linear
  function approximation.
\newblock {\em Operations Research}, 69(3):950--973.

\bibitem[Boyan, 1999]{boyan1999least}
Boyan, J.~A. (1999).
\newblock Least-squares temporal difference learning.
\newblock In {\em ICML}, pages 49--56.

\bibitem[Bradtke and Barto, 1996]{bradtke1996linear}
Bradtke, S.~J. and Barto, A.~G. (1996).
\newblock Linear least-squares algorithms for temporal difference learning.
\newblock {\em Machine learning}, 22(1):33--57.

\bibitem[Brandfonbrener et~al., 2021]{brandfonbrener2021offline}
Brandfonbrener, D., Whitney, W., Ranganath, R., and Bruna, J. (2021).
\newblock Offline rl without off-policy evaluation.
\newblock {\em Advances in Neural Information Processing Systems},
  34:4933--4946.

\bibitem[Bubeck et~al., 2013]{bubeck2013bandits}
Bubeck, S., Cesa-Bianchi, N., and Lugosi, G. (2013).
\newblock Bandits with heavy tail.
\newblock {\em IEEE Transactions on Information Theory}, 59(11):7711--7717.

\bibitem[Chen and Jiang, 2019]{chen2019information}
Chen, J. and Jiang, N. (2019).
\newblock Information-theoretic considerations in batch reinforcement learning.
\newblock In {\em International Conference on Machine Learning}, pages
  1042--1051. PMLR.

\bibitem[Chen and Qi, 2022]{chen2022well}
Chen, X. and Qi, Z. (2022).
\newblock On well-posedness and minimax optimal rates of nonparametric
  q-function estimation in off-policy evaluation.
\newblock In {\em International Conference on Machine Learning}, pages
  3558--3582. PMLR.

\bibitem[Chen et~al., 2023]{chen2023steel}
Chen, X., Qi, Z., and Wan, R. (2023).
\newblock Steel: Singularity-aware reinforcement learning.
\newblock {\em arXiv:2301.13152}.

\bibitem[Chen et~al., 2021]{chen2021improved}
Chen, Y., Du, S., and Jamieson, K. (2021).
\newblock Improved corruption robust algorithms for episodic reinforcement
  learning.
\newblock In {\em International Conference on Machine Learning}, pages
  1561--1570. PMLR.

\bibitem[Dadashi et~al., 2021]{dadashi2021offline}
Dadashi, R., Rezaeifar, S., Vieillard, N., Hussenot, L., Pietquin, O., and
  Geist, M. (2021).
\newblock Offline reinforcement learning with pseudometric learning.
\newblock In {\em International Conference on Machine Learning}, pages
  2307--2318. PMLR.

\bibitem[Dai et~al., 2020]{dai2020coindice}
Dai, B., Nachum, O., Chow, Y., Li, L., Szepesvari, C., and Schuurmans, D.
  (2020).
\newblock Coindice: Off-policy confidence interval estimation.
\newblock {\em Advances in neural information processing systems}, 33.

\bibitem[Devroye et~al., 2016]{devroye2016sub}
Devroye, L., Lerasle, M., Lugosi, G., and Oliveira, R.~I. (2016).
\newblock Sub-gaussian mean estimators.
\newblock {\em The Annals of Statistics}, 44(6):2695--2725.

\bibitem[Duan et~al., 2020]{duan2020minimax}
Duan, Y., Jia, Z., and Wang, M. (2020).
\newblock Minimax-optimal off-policy evaluation with linear function
  approximation.
\newblock In {\em International Conference on Machine Learning}, pages
  2701--2709. PMLR.

\bibitem[Dubey and Pentland, 2019]{ijcai2019p792}
Dubey, A. and Pentland, A.~S. (2019).
\newblock Thompson sampling on symmetric alpha-stable bandits.
\newblock pages 5715--5721. International Joint Conferences on Artificial
  Intelligence Organization.

\bibitem[Ernst et~al., 2005]{ernst2005tree}
Ernst, D., Geurts, P., and Wehenkel, L. (2005).
\newblock Tree-based batch mode reinforcement learning.
\newblock {\em Journal of Machine Learning Research}, 6.

\bibitem[Fan et~al., 2020]{fan2020theoretical}
Fan, J., Wang, Z., Xie, Y., and Yang, Z. (2020).
\newblock A theoretical analysis of deep q-learning.
\newblock In {\em Learning for Dynamics and Control}, pages 486--489. PMLR.

\bibitem[Farajtabar et~al., 2018]{farajtabar2018more}
Farajtabar, M., Chow, Y., and Ghavamzadeh, M. (2018).
\newblock More robust doubly robust off-policy evaluation.
\newblock In {\em International Conference on Machine Learning}, pages
  1447--1456. PMLR.

\bibitem[Fu et~al., 2020]{fu2020d4rl}
Fu, J., Kumar, A., Nachum, O., Tucker, G., and Levine, S. (2020).
\newblock D4rl: Datasets for deep data-driven reinforcement learning.
\newblock {\em arXiv:2004.07219}.

\bibitem[Fu et~al., 2022a]{fu2022closer}
Fu, Y., Wu, D., and Boulet, B. (2022a).
\newblock A closer look at offline rl agents.
\newblock {\em Advances in Neural Information Processing Systems},
  35:8591--8604.

\bibitem[Fu et~al., 2022b]{fu2022offline}
Fu, Z., Qi, Z., Wang, Z., Yang, Z., Xu, Y., and Kosorok, M.~R. (2022b).
\newblock Offline reinforcement learning with instrumental variables in
  confounded markov decision processes.
\newblock {\em arXiv:2209.08666}.

\bibitem[Fujimoto and Gu, 2021]{fujimoto2021minimalist}
Fujimoto, S. and Gu, S.~S. (2021).
\newblock A minimalist approach to offline reinforcement learning.
\newblock {\em Advances in neural information processing systems},
  34:20132--20145.

\bibitem[Georgiou et~al., 1999]{georgiou1999alpha}
Georgiou, P.~G., Tsakalides, P., and Kyriakakis, C. (1999).
\newblock Alpha-stable modeling of noise and robust time-delay estimation in
  the presence of impulsive noise.
\newblock {\em IEEE transactions on multimedia}, 1(3):291--301.

\bibitem[Gerstenberg et~al., 2022]{gerstenberg2022solutions}
Gerstenberg, J., Neininger, R., and Spiegel, D. (2022).
\newblock On solutions of the distributional bellman equation.
\newblock {\em arXiv:2202.00081}.

\bibitem[Goyal and Grand-Clement, 2023]{goyal2023robust}
Goyal, V. and Grand-Clement, J. (2023).
\newblock Robust markov decision processes: Beyond rectangularity.
\newblock {\em Mathematics of Operations Research}, 48(1):203--226.

\bibitem[Guo et~al., 2022]{guo2022model}
Guo, K., Shao, Y., and Geng, Y. (2022).
\newblock Model-based offline reinforcement learning with pessimism-modulated
  dynamics belief.
\newblock {\em NeurIPS}.

\bibitem[Hall, 1990]{hall1990asymptotic}
Hall, P. (1990).
\newblock Asymptotic properties of the bootstrap for heavy-tailed
  distributions.
\newblock {\em The Annals of Probability}, pages 1342--1360.

\bibitem[Hamza and Krim, 2001]{hamza2001image}
Hamza, A.~B. and Krim, H. (2001).
\newblock Image denoising: A nonlinear robust statistical approach.
\newblock {\em IEEE transactions on signal processing}, 49(12):3045--3054.

\bibitem[Hanna et~al., 2019]{pmlr-v97-hanna19a}
Hanna, J., Niekum, S., and Stone, P. (2019).
\newblock Importance sampling policy evaluation with an estimated behavior
  policy.
\newblock In {\em Proceedings of the 36th International Conference on Machine
  Learning}. PMLR.

\bibitem[Hao et~al., 2021]{hao2021bootstrapping}
Hao, B., Ji, X., Duan, Y., Lu, H., Szepesvari, C., and Wang, M. (2021).
\newblock Bootstrapping fitted q-evaluation for off-policy inference.
\newblock In {\em International Conference on Machine Learning}, pages
  4074--4084. PMLR.

\bibitem[Huang and Zhang, 2017]{huang2017new}
Huang, Y. and Zhang, Y. (2017).
\newblock A new process uncertainty robust student’st based kalman filter for
  sins/gps integration.
\newblock {\em IEEE Access}, 5:14391--14404.

\bibitem[Jiang and Huang, 2020]{jiang2020minimax}
Jiang, N. and Huang, J. (2020).
\newblock Minimax value interval for off-policy evaluation and policy
  optimization.
\newblock {\em Advances in Neural Information Processing Systems},
  33:2747--2758.

\bibitem[Jiang and Li, 2016]{jiang2016doubly}
Jiang, N. and Li, L. (2016).
\newblock Doubly robust off-policy value evaluation for reinforcement learning.
\newblock In {\em International Conference on Machine Learning}, pages
  652--661. PMLR.

\bibitem[Jin et~al., 2021]{jin2021pessimism}
Jin, Y., Yang, Z., and Wang, Z. (2021).
\newblock Is pessimism provably efficient for offline rl?
\newblock In {\em International Conference on Machine Learning}, pages
  5084--5096. PMLR.

\bibitem[Kallus et~al., 2022]{kallus2022doubly}
Kallus, N., Mao, X., Wang, K., and Zhou, Z. (2022).
\newblock Doubly robust distributionally robust off-policy evaluation and
  learning.
\newblock pages 10598--10632.

\bibitem[Kallus and Uehara, 2020]{kallus2020double}
Kallus, N. and Uehara, M. (2020).
\newblock Double reinforcement learning for efficient off-policy evaluation in
  markov decision processes.
\newblock {\em Journal of Machine Learning Research}, 21(167).

\bibitem[Kallus and Uehara, 2022]{kallus2019efficiently}
Kallus, N. and Uehara, M. (2022).
\newblock Efficiently breaking the curse of horizon in off-policy evaluation
  with double reinforcement learning.
\newblock {\em Operations Research}.

\bibitem[Kingma and Ba, 2014]{kingma2014adam}
Kingma, D.~P. and Ba, J. (2014).
\newblock Adam: A method for stochastic optimization.

\bibitem[Kostrikov et~al., 2021]{kostrikov2021offline}
Kostrikov, I., Fergus, R., Tompson, J., and Nachum, O. (2021).
\newblock Offline reinforcement learning with fisher divergence critic
  regularization.
\newblock In {\em International Conference on Machine Learning}, pages
  5774--5783. PMLR.

\bibitem[Kostrikov et~al., 2022]{kostrikov2022offline}
Kostrikov, I., Nair, A., and Levine, S. (2022).
\newblock Offline reinforcement learning with implicit q-learning.
\newblock In {\em International Conference on Learning Representations}.

\bibitem[Kumar et~al., 2019]{kumar2019stabilizing}
Kumar, A., Fu, J., Soh, M., Tucker, G., and Levine, S. (2019).
\newblock Stabilizing off-policy q-learning via bootstrapping error reduction.
\newblock {\em Advances in Neural Information Processing Systems}, 32.

\bibitem[Kumar et~al., 2020]{kumar2020conservative}
Kumar, A., Zhou, A., Tucker, G., and Levine, S. (2020).
\newblock Conservative q-learning for offline reinforcement learning.
\newblock {\em Advances in Neural Information Processing Systems},
  33:1179--1191.

\bibitem[Lagoudakis and Parr, 2003]{lagoudakis2003least}
Lagoudakis, M.~G. and Parr, R. (2003).
\newblock Least-squares policy iteration.
\newblock {\em Journal of Machine Learning Research}, 4:1107--1149.

\bibitem[Le et~al., 2019]{le2019batch}
Le, H., Voloshin, C., and Yue, Y. (2019).
\newblock Batch policy learning under constraints.
\newblock In {\em International Conference on Machine Learning}, pages
  3703--3712. PMLR.

\bibitem[Levine et~al., 2020]{levine2020offline}
Levine, S., Kumar, A., Tucker, G., and Fu, J. (2020).
\newblock Offline reinforcement learning: Tutorial, review, and perspectives on
  open problems.
\newblock {\em arXiv:2005.01643}.

\bibitem[Liao et~al., 2021]{liao2021off}
Liao, P., Klasnja, P., and Murphy, S. (2021).
\newblock Off-policy estimation of long-term average outcomes with applications
  to mobile health.
\newblock {\em Journal of the American Statistical Association},
  116(533):382--391.

\bibitem[Liao et~al., 2022]{liao2022batch}
Liao, P., Qi, Z., Wan, R., Klasnja, P., and Murphy, S.~A. (2022).
\newblock Batch policy learning in average reward markov decision processes.
\newblock {\em The Annals of Statistics}, 50(6):3364--3387.

\bibitem[Liu et~al., 2018]{liu2018breaking}
Liu, Q., Li, L., Tang, Z., and Zhou, D. (2018).
\newblock Breaking the curse of horizon: Infinite-horizon off-policy
  estimation.
\newblock {\em Advances in Neural Information Processing Systems}, 31.

\bibitem[Liu et~al., 2023]{liu2023online}
Liu, W., Tu, J., Zhang, Y., and Chen, X. (2023).
\newblock Online estimation and inference for robust policy evaluation in
  reinforcement learning.
\newblock {\em arXiv:2310.02581}.

\bibitem[Loshchilov and Hutter, 2017]{loshchilov2017sgdr}
Loshchilov, I. and Hutter, F. (2017).
\newblock {SGDR}: Stochastic gradient descent with warm restarts.
\newblock In {\em International Conference on Learning Representations}.

\bibitem[Lu et~al., 2019]{lu2019optimal}
Lu, S., Wang, G., Hu, Y., and Zhang, L. (2019).
\newblock Optimal algorithms for lipschitz bandits with heavy-tailed rewards.
\newblock In {\em International Conference on Machine Learning}, pages
  4154--4163. PMLR.

\bibitem[Luckett et~al., 2020]{luckett2020estimating}
Luckett, D.~J., Laber, E.~B., Kahkoska, A.~R., Maahs, D.~M., Mayer-Davis, E.,
  and Kosorok, M.~R. (2020).
\newblock Estimating dynamic treatment regimes in mobile health using
  v-learning.
\newblock {\em Journal of the American Statistical Association},
  115(530):692--706.

\bibitem[Lugosi and Mendelson, 2019]{lugosi2019mean}
Lugosi, G. and Mendelson, S. (2019).
\newblock Mean estimation and regression under heavy-tailed distributions: A
  survey.
\newblock {\em Foundations of Computational Mathematics}, 19(5):1145--1190.

\bibitem[Lykouris et~al., 2021]{lykouris2021corruption}
Lykouris, T., Simchowitz, M., Slivkins, A., and Sun, W. (2021).
\newblock Corruption-robust exploration in episodic reinforcement learning.
\newblock In {\em Conference on Learning Theory}, pages 3242--3245. PMLR.

\bibitem[Lyu et~al., 2022]{lyu2022mildly}
Lyu, J., Ma, X., Li, X., and Lu, Z. (2022).
\newblock Mildly conservative q-learning for offline reinforcement learning.
\newblock In {\em Advances in Neural Information Processing Systems}.

\bibitem[Mannor et~al., 2016]{mannor2016robust}
Mannor, S., Mebel, O., and Xu, H. (2016).
\newblock Robust mdps with k-rectangular uncertainty.
\newblock {\em Mathematics of Operations Research}, 41(4):1484--1509.

\bibitem[Min et~al., 2021]{min2021variance}
Min, Y., Wang, T., Zhou, D., and Gu, Q. (2021).
\newblock Variance-aware off-policy evaluation with linear function
  approximation.
\newblock {\em Advances in neural information processing systems},
  34:7598--7610.

\bibitem[Minsker, 2015]{minsker2015geometric}
Minsker, S. (2015).
\newblock Geometric median and robust estimation in banach spaces.
\newblock {\em Bernoulli}, 21(4):2308--2335.

\bibitem[Mo et~al., 2021]{mo2021learning}
Mo, W., Qi, Z., and Liu, Y. (2021).
\newblock Learning optimal distributionally robust individualized treatment
  rules.
\newblock {\em Journal of the American Statistical Association},
  116(534):659--674.

\bibitem[Munos and Szepesv{\'a}ri, 2008]{munos2008finite}
Munos, R. and Szepesv{\'a}ri, C. (2008).
\newblock Finite-time bounds for fitted value iteration.
\newblock {\em Journal of Machine Learning Research}, 9(5).

\bibitem[Nachum et~al., 2019]{nachum2019dualdice}
Nachum, O., Chow, Y., Dai, B., and Li, L. (2019).
\newblock Dualdice: Behavior-agnostic estimation of discounted stationary
  distribution corrections.
\newblock {\em Advances in Neural Information Processing Systems}, 32.

\bibitem[Nemirovskij and Yudin, 1983]{nemirovskij1983problem}
Nemirovskij, A.~S. and Yudin, D.~B. (1983).
\newblock Problem complexity and method efficiency in optimization.

\bibitem[Nilim and El~Ghaoui, 2005]{nilim2005robust}
Nilim, A. and El~Ghaoui, L. (2005).
\newblock Robust control of markov decision processes with uncertain transition
  matrices.
\newblock {\em Operations Research}, 53(5):780--798.

\bibitem[Pedregosa et~al., 2011]{pedregosa2011scikit}
Pedregosa, F., Varoquaux, G., Gramfort, A., Michel, V., Thirion, B., Grisel,
  O., Blondel, M., Prettenhofer, P., Weiss, R., Dubourg, V., et~al. (2011).
\newblock Scikit-learn: Machine learning in python.
\newblock {\em Journal of Machine Learning Research}, 12:2825--2830.

\bibitem[Perdomo et~al., 2022]{perdomo2022sharp}
Perdomo, J.~C., Krishnamurthy, A., Bartlett, P., and Kakade, S. (2022).
\newblock A sharp characterization of linear estimators for offline policy
  evaluation.
\newblock {\em arXiv:2203.04236}.

\bibitem[Precup, 2000]{precup2000eligibility}
Precup, D. (2000).
\newblock Eligibility traces for off-policy policy evaluation.
\newblock {\em Computer Science Department Faculty Publication Series},
  page~80.

\bibitem[Prudencio et~al., 2023]{prudencio2022survey}
Prudencio, R.~F., Maximo, M. R. O.~A., and Colombini, E.~L. (2023).
\newblock A survey on offline reinforcement learning: Taxonomy, review, and
  open problems.
\newblock {\em IEEE Transactions on Neural Networks and Learning Systems},
  pages 1--0.

\bibitem[Rowland et~al., 2023]{pmlr-v202-rowland23a}
Rowland, M., Tang, Y., Lyle, C., Munos, R., Bellemare, M.~G., and Dabney, W.
  (2023).
\newblock The statistical benefits of quantile temporal-difference learning for
  value estimation.
\newblock In {\em International Conference on Machine Learning}. PMLR.

\bibitem[Ruotsalainen et~al., 2018]{ruotsalainen2018error}
Ruotsalainen, L., Kirkko-Jaakkola, M., Rantanen, J., and M{\"a}kel{\"a}, M.
  (2018).
\newblock Error modelling for multi-sensor measurements in infrastructure-free
  indoor navigation.
\newblock {\em Sensors}, 18(2):590.

\bibitem[Schulman et~al., 2017]{schulman2017proximal}
Schulman, J., Wolski, F., Dhariwal, P., Radford, A., and Klimov, O. (2017).
\newblock Proximal policy optimization algorithms.
\newblock {\em arXiv:1707.06347}.

\bibitem[Shao et~al., 2018]{shao2018almost}
Shao, H., Yu, X., King, I., and Lyu, M.~R. (2018).
\newblock Almost optimal algorithms for linear stochastic bandits with
  heavy-tailed payoffs.
\newblock {\em Advances in Neural Information Processing Systems}, 31.

\bibitem[Shi et~al., 2023]{shi2023value}
Shi, C., Qi, Z., Wang, J., and Zhou, F. (2023).
\newblock Value enhancement of reinforcement learning via efficient and robust
  trust region optimization.
\newblock {\em arXiv preprint arXiv:2301.02220}.

\bibitem[Shi et~al., 2021]{shi2021deeply}
Shi, C., Wan, R., Chernozhukov, V., and Song, R. (2021).
\newblock Deeply-debiased off-policy interval estimation.
\newblock In {\em International Conference on Machine Learning}, pages
  9580--9591. PMLR.

\bibitem[Shi et~al., 2022a]{shi2022statistical}
Shi, C., Zhang, S., Lu, W., and Song, R. (2022a).
\newblock Statistical inference of the value function for reinforcement
  learning in infinite-horizon settings.
\newblock {\em Journal of the Royal Statistical Society: Series B (Statistical
  Methodology)}, 84(3):765--793.

\bibitem[Shi et~al., 2022b]{shi2022pessimistic}
Shi, L., Li, G., Wei, Y., Chen, Y., and Chi, Y. (2022b).
\newblock Pessimistic q-learning for offline reinforcement learning: Towards
  optimal sample complexity.
\newblock {\em International Conference on Machine Learning}.

\bibitem[Si et~al., 2020]{si2020distributionally}
Si, N., Zhang, F., Zhou, Z., and Blanchet, J. (2020).
\newblock Distributionally robust policy evaluation and learning in offline
  contextual bandits.
\newblock In {\em International Conference on Machine Learning}, pages
  8884--8894. PMLR.

\bibitem[Sutton and Barto, 2018]{sutton2018reinforcement}
Sutton, R.~S. and Barto, A.~G. (2018).
\newblock {\em Reinforcement learning: An introduction}.
\newblock MIT press.

\bibitem[Sutton et~al., 2008]{sutton2008convergent}
Sutton, R.~S., Szepesv{\'a}ri, C., and Maei, H.~R. (2008).
\newblock A convergent o(n) algorithm for off-policy temporal-difference
  learning with linear function approximation.
\newblock {\em Advances in neural information processing systems},
  21(21):1609--1616.

\bibitem[Tang et~al., 2019]{tang2019doubly}
Tang, Z., Feng, Y., Li, L., Zhou, D., and Liu, Q. (2019).
\newblock Doubly robust bias reduction in infinite horizon off-policy
  estimation.
\newblock {\em arXiv:1910.07186}.

\bibitem[Thomas and Brunskill, 2016]{thomas2016data}
Thomas, P. and Brunskill, E. (2016).
\newblock Data-efficient off-policy policy evaluation for reinforcement
  learning.
\newblock In {\em International Conference on Machine Learning}, pages
  2139--2148. PMLR.

\bibitem[Thomas et~al., 2015]{thomas2015high}
Thomas, P., Theocharous, G., and Ghavamzadeh, M. (2015).
\newblock High-confidence off-policy evaluation.
\newblock In {\em Proceedings of the AAAI Conference on Artificial
  Intelligence}, volume~29.

\bibitem[Uehara et~al., 2020]{uehara2020minimax}
Uehara, M., Huang, J., and Jiang, N. (2020).
\newblock Minimax weight and q-function learning for off-policy evaluation.
\newblock In {\em International Conference on Machine Learning}, pages
  9659--9668. PMLR.

\bibitem[Uehara et~al., 2022]{uehara2022review}
Uehara, M., Shi, C., and Kallus, N. (2022).
\newblock A review of off-policy evaluation in reinforcement learning.
\newblock {\em arXiv:2212.06355}.

\bibitem[Uehara and Sun, 2022]{uehara2022pessimistic}
Uehara, M. and Sun, W. (2022).
\newblock Pessimistic model-based offline reinforcement learning under partial
  coverage.
\newblock In {\em International Conference on Learning Representations}.

\bibitem[Voloshin et~al., 2021]{voloshin2019empirical}
Voloshin, C., Le, H.~M., Jiang, N., and Yue, Y. (2021).
\newblock Empirical study of off-policy policy evaluation for reinforcement
  learning.
\newblock In {\em Thirty-fifth Conference on Neural Information Processing
  Systems Datasets and Benchmarks Track (Round 1)}.

\bibitem[Wang et~al., 2022]{wang2022reliable}
Wang, J., Gao, R., and Zha, H. (2022).
\newblock Reliable off-policy evaluation for reinforcement learning.
\newblock {\em Operations Research}.

\bibitem[Wang et~al., 2021a]{wang2021projected}
Wang, J., Qi, Z., and Wong, R.~K. (2021a).
\newblock Projected state-action balancing weights for offline reinforcement
  learning.
\newblock {\em arXiv:2109.04640}.

\bibitem[Wang et~al., 2018]{wang2018exponentially}
Wang, Q., Xiong, J., Han, L., Liu, H., Zhang, T., et~al. (2018).
\newblock Exponentially weighted imitation learning for batched historical
  data.
\newblock {\em Advances in Neural Information Processing Systems}, 31.

\bibitem[Wang et~al., 2021b]{wang2021instabilities}
Wang, R., Wu, Y., Salakhutdinov, R., and Kakade, S. (2021b).
\newblock Instabilities of offline rl with pre-trained neural representation.
\newblock In {\em International Conference on Machine Learning}, pages
  10948--10960. PMLR.

\bibitem[Wiesemann et~al., 2013]{wiesemann2013robust}
Wiesemann, W., Kuhn, D., and Rustem, B. (2013).
\newblock Robust markov decision processes.
\newblock {\em Mathematics of Operations Research}, 38(1):153--183.

\bibitem[Wu et~al., 2020]{wu2020behavior}
Wu, Y., Tucker, G., and Nachum, O. (2020).
\newblock Behavior regularized offline reinforcement learning.

\bibitem[Xie et~al., 2021]{xie2021bellman}
Xie, T., Cheng, C.-A., Jiang, N., Mineiro, P., and Agarwal, A. (2021).
\newblock Bellman-consistent pessimism for offline reinforcement learning.
\newblock {\em Advances in neural information processing systems},
  34:6683--6694.

\bibitem[Xie et~al., 2019]{xie2019towards}
Xie, T., Ma, Y., and Wang, Y.-X. (2019).
\newblock Towards optimal off-policy evaluation for reinforcement learning with
  marginalized importance sampling.
\newblock {\em Advances in Neural Information Processing Systems}, 32.

\bibitem[Xu et~al., 2023]{xu2023offline}
Xu, H., Jiang, L., Li, J., Yang, Z., Wang, Z., Chan, V. W.~K., and Zhan, X.
  (2023).
\newblock Offline {RL} with no {OOD} actions: In-sample learning via implicit
  value regularization.
\newblock In {\em The Eleventh International Conference on Learning
  Representations}.

\bibitem[Xu et~al., 2022]{xu2022quantile}
Xu, Y., Shi, C., Luo, S., Wang, L., and Song, R. (2022).
\newblock Quantile off-policy evaluation via deep conditional generative
  learning.
\newblock {\em arXiv:2212.14466}.

\bibitem[Yin et~al., 2021]{yin2021nearoptimal}
Yin, M., Bai, Y., and Wang, Y.-X. (2021).
\newblock Near-optimal offline reinforcement learning via double variance
  reduction.
\newblock In {\em Advances in Neural Information Processing Systems}.

\bibitem[Yu et~al., 2021]{yu2021combo}
Yu, T., Kumar, A., Rafailov, R., Rajeswaran, A., Levine, S., and Finn, C.
  (2021).
\newblock Combo: Conservative offline model-based policy optimization.
\newblock {\em Advances in neural information processing systems},
  34:28954--28967.

\bibitem[Zhan et~al., 2022]{zhan2022offline}
Zhan, W., Huang, B., Huang, A., Jiang, N., and Lee, J. (2022).
\newblock Offline reinforcement learning with realizability and single-policy
  concentrability.
\newblock In {\em Conference on Learning Theory}, pages 2730--2775. PMLR.

\bibitem[Zhang et~al., 2023]{zhang2023insample}
Zhang, H., Mao, Y., Wang, B., He, S., Xu, Y., and Ji, X. (2023).
\newblock In-sample actor critic for offline reinforcement learning.
\newblock In {\em The Eleventh International Conference on Learning
  Representations}.

\bibitem[Zhang et~al., 2021]{zhang2021robust}
Zhang, X., Chen, Y., Zhu, X., and Sun, W. (2021).
\newblock Robust policy gradient against strong data corruption.
\newblock In {\em International Conference on Machine Learning}, pages
  12391--12401. PMLR.

\bibitem[Zhang et~al., 2022]{zhang2022corruption}
Zhang, X., Chen, Y., Zhu, X., and Sun, W. (2022).
\newblock Corruption-robust offline reinforcement learning.
\newblock In {\em International Conference on Artificial Intelligence and
  Statistics}, pages 5757--5773. PMLR.

\bibitem[Zhang and Liu, 2021]{zhang2021median}
Zhang, Y. and Liu, P. (2021).
\newblock Median-of-means approach for repeated measures data.
\newblock {\em Communications in Statistics-Theory and Methods},
  50(17):3903--3912.

\bibitem[Zhong et~al., 2021]{zhong2021breaking}
Zhong, H., Huang, J., Yang, L., and Wang, L. (2021).
\newblock Breaking the moments condition barrier: No-regret algorithm for
  bandits with super heavy-tailed payoffs.
\newblock {\em Advances in Neural Information Processing Systems},
  34:15710--15720.

\bibitem[Zhou, 2023]{zhou2023bilevel}
Zhou, W. (2023).
\newblock Bi-level offline policy optimization with limited exploration.
\newblock In {\em Thirty-seventh Conference on Neural Information Processing
  Systems}.

\bibitem[Zhou et~al., 2023]{zhou2022optimizing}
Zhou, Y., Qi, Z., Shi, C., and Li, L. (2023).
\newblock Optimizing pessimism in dynamic treatment regimes: A bayesian
  learning approach.
\newblock In {\em International Conference on Artificial Intelligence and
  Statistics}.

\bibitem[Zhuang and Sui, 2021]{zhuang2021no}
Zhuang, V. and Sui, Y. (2021).
\newblock No-regret reinforcement learning with heavy-tailed rewards.
\newblock In {\em International Conference on Artificial Intelligence and
  Statistics}, pages 3385--3393. PMLR.

\end{thebibliography}

\newpage
\onecolumn
\aistatstitle{Robust Offline Reinforcement Learning with Heavy-Tailed Rewards:\\ Supplementary Materials}

\setcounter{section}{0}
\setcounter{figure}{0}
\setcounter{table}{0}
\renewcommand{\thesection}{A\arabic{section}}
\renewcommand{\thefigure}{A\arabic{figure}}
\renewcommand{\thetable}{A\arabic{table}}

\section{THEORETICAL PROOF}\label{sec:proof}
We use $c$ and $C$ to denote some general constants whose values are allowed to vary over time. Under Assumption~1, let $\{(S_i,A_i,R_i,S_i')\}_i$ denote the \textit{i.i.d.} transition tuples. 

\subsection{Proof of Theorem \ref{thm:MIS}}
\begin{proof}
Let $m=n/K$. The key to prove Theorem \ref{thm:MIS} is to show that for some properly chosen constant $c(\alpha)$ that depends only on $\alpha$ and 
\begin{eqnarray}\label{eqn:thm51proofeq1}
    \Delta\ge c(\alpha) (\mathbb{E} |R|^{1+\alpha})^{\frac{1}{1+\alpha}}\Big[\|\widehat{\omega}^{\pi}-\omega^{\pi}\|_{\infty}+\|\omega^{\pi}\|_{\infty}\Big(\frac{1}{m}\Big)^{\frac{\alpha}{1+\alpha}}\Big],
\end{eqnarray}
then
\begin{eqnarray}\label{eqn:thm51proofeq2}
    \mathbb{P}\Big(\Big|\frac{1}{m}\sum_{i=1}^m \widehat{\omega}^{\pi}(S_i,A_i) R_i-J^{\pi}\Big| \ge\Delta \Big)\le 0.2. 
\end{eqnarray}
The rest of the proof can similarly be established based on the arguments in the proof of Theorem 1 of \citet{lugosi2019mean} and we omit the details to save space. 

We focus on proving \eqref{eqn:thm51proofeq2} below. We begin by considering the following decomposition, 
\begin{eqnarray*}
    \frac{1}{m}\sum_{i=1}^m \widehat{\omega}^{\pi}(S_i,A_i) R_i-J^{\pi}=\frac{1}{m}\sum_{i=1}^m \omega^{\pi}(S_i,A_i) R_i-J^{\pi}+\frac{1}{m}\sum_{i=1}^m [\widehat{\omega}^{\pi}(S_i,A_i)-\omega^{\pi}(S_i,A_i)] R_i.
\end{eqnarray*}
For the first term, under Assumptions 2 and 3, the $(1+\alpha)$th moment of $\omega^{\pi}(S,A) R$ is upper bounded by $\|\omega^{\pi}\|_{\infty} \Mean |R|^{1+\alpha}$. Using the results in \citet{bubeck2013bandits} and \citet{devroye2016sub}, we can show that there exists some constant $c>0$ that depends only $\alpha$ such that
\begin{eqnarray}\label{eqn:thm51proofeq3}
    \mathbb{P}\Big\{\Big|\frac{1}{m}\sum_{i=1}^m \omega^{\pi}(S_i,A_i) R_i-J^{\pi}\Big| \ge c (\mathbb{E} |R|^{1+\alpha})^{\frac{1}{1+\alpha}} \|\omega^{\pi}\|_{\infty}  \Big(\frac{1}{m}\Big)^{\frac{\alpha}{1+\alpha}}\Big\} \le 0.1. 
\end{eqnarray}
As for the second term, it is upper bounded by $\|\widehat{\omega}^{\pi}-\omega^{\pi}\|_{\infty} (m^{-1}\sum_{i=1}^m |R_i|)$. Consider $m^{-1}\sum_{i=1}^m |R_i|$. We decompose it into the sum of $m^{-1}\sum_{i=1}^m (|R_i|-\mathbb{E} |R|)$ and $\mathbb{E} |R|$. Similar to \eqref{eqn:thm51proofeq3}, we can show $m^{-1}\sum_{i=1}^m (|R_i|-\mathbb{E} |R|)$ satisfies the following, 
\begin{eqnarray}\label{eqn:thm51proofeq4}
    \mathbb{P}\Big\{\Big|\frac{1}{m}\sum_{i=1}^m |R_i|-\mathbb{E} |R| \Big| \ge c (\mathbb{E} |R|^{1+\alpha})^{\frac{1}{1+\alpha}} \Big(\frac{1}{m}\Big)^{\frac{\alpha}{1+\alpha}}\Big\} \le 0.1. 
\end{eqnarray}
In addition, according to H{\"o}lder's inequality, we have $\mathbb{E} |R|\le (\mathbb{E} |R|^{1+\alpha})^{\frac{1}{1+\alpha}}$. This together with \eqref{eqn:thm51proofeq4} implies that the second term can be upper bounded by $C\|\widehat{\omega}^{\pi}-\omega^{\pi}\|_{\infty}(\mathbb{E} |R|^{1+\alpha})^{\frac{1}{1+\alpha}}$ for some constant $C>0$, with probability at least $0.9$. Combining this together with \eqref{eqn:thm51proofeq1} and \eqref{eqn:thm51proofeq3} yields \eqref{eqn:thm51proofeq2}. 
\end{proof}

\subsection{Proof of Theorem \ref{thm:DM}}\label{sec:proofthmDM}
%Under the realizability assumption and the boundedness assumption on $\phi$, the estimation error is proportional to 
\begin{proof}
Similar to the proof of Theorem \ref{thm:MIS}, it suffices to show that each base OPE  estimator satisfies \eqref{eqn:thm51proofeq2} for any 
\begin{eqnarray*}
    \Delta\ge c(\alpha) \lambda_{\min}^{-1}(\mathbb{E} |R|^{1+\alpha})^{\frac{1}{1+\alpha}} \Big(\frac{1}{m}\Big)^{\frac{\alpha}{1+\alpha}}.
\end{eqnarray*}
Under the realizability assumption and the boundedness assumption on $\phi$, the estimation error of the base OPE estimator is of the same order of magnitude as that of the based LSTD estimator $\widehat{\theta}$. It suffices to show that each base $\widehat{\theta}$ satisfies \eqref{eqn:thm51proofeq2} for any 
\begin{eqnarray*}
    \Delta\ge c \lambda_{\min}^{-1}(\mathbb{E} |R|^{1+\alpha})^{\frac{1}{1+\alpha}} \Big(\frac{1}{m}\Big)^{\frac{\alpha}{1+\alpha}},
\end{eqnarray*}
where $c$ denotes some positive constant that depends only on $\alpha$. 

By definition, $\widehat{\theta}-\theta^*$ equals
\begin{eqnarray*}
    &\Big[ \frac{1}{m}\sum_{i=1}^m \phi(S_i, A_i)\{\phi(S_i, A_i)-\gamma \sum_{a}\pi(a|S_i')\phi(S_i', a)\}^{\top} \Big]^{-1} \\
    \times
    &\Big[ \frac{1}{m}\sum_{i=1}^m \phi(S_i, A_i)\{R_i+\gamma \sum_{a}\pi(a|S_i')Q^{\pi}(S_i',a)-Q^{\pi}(S_i,A_i)\} \Big].
\end{eqnarray*}
Under the matrix invertibility assumption, using similar arguments in the proof of Lemma 3 of \citet{shi2022statistical}, we can show that the $\ell_2$ norm of the matrix
\begin{eqnarray*}
    \Big[ \frac{1}{m}\sum_{i=1}^m \phi(S_i, A_i)\{\phi(S_i, A_i)-\gamma \sum_{a}\pi(a|S_i')\phi(S_i', a)\}^{\top} \Big]^{-1}
\end{eqnarray*}
can be upper bounded by $C\lambda_{\min}^{-1}$ with probability approaching $1$. As for the second term, using the results in \citet{bubeck2013bandits} and \citet{devroye2016sub}, we can show that its $\ell_2$ norm is upper bounded by $\mathbb{E} (|R|^{1+\alpha})^{\frac{1}{1+\alpha}} m^{-\frac{\alpha}{1+\alpha}}$ with probability at least $0.9$. Combining these results yield that 
\begin{eqnarray*}
    \mathbb{P}\Big( \|\widehat{\theta}-\theta^*\|_2 \ge c (\mathbb{E} |R|^{1+\alpha})^{\frac{1}{1+\alpha}}  \Big(\frac{1}{m}\Big)^{\frac{\alpha}{1+\alpha}}\Big) \le 0.2,
\end{eqnarray*}
for some constant $c>0$. The proof is hence completed. 
\end{proof}

\subsection{Proof of Theorem~\ref{thm:pess-q}}
We first the following Lemma that would be used in our proof.
\begin{lemma}[\citet{bubeck2013bandits}]\label{lemma:bubeck}
    Let $\epsilon \in (0, 1]$ and $X_1, \ldots, X_n$ be i.i.d. random variable with mean $\mu$ and $(1+\epsilon)$-th moment $\Mean|X - \mu|^{1+\epsilon} = v$. Suppose each fold has $N$ observations such that $n = NK$, then for each $l \in \{1, \ldots, K\}$, the sample mean $\widehat{\mu}_{l} = \frac{1}{N}\sum\limits_{t=(l-1)N+1}^{lN}X_t$ satisfies
    \begin{align*}
        \prob\left( |\mu - \widehat{\mu}_l| \geq J \right) \leq \frac{6v}{n^\epsilon J^{1+\epsilon}}
    \end{align*}
    where for any $J > 0$.
\end{lemma}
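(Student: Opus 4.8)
The plan is to prove this fold-wise concentration bound by a direct moment-and-Markov argument, which is the cleanest route once one recognizes that $\widehat{\mu}_l$ is simply the empirical mean of the $N=n/K$ i.i.d.\ observations in fold $l$. Since a heavy-tailed average with only a finite $(1+\epsilon)$-th moment cannot be controlled by exponential (Chernoff) methods, I would instead bound a \emph{fractional} moment of the centered sum and feed it into Markov's inequality.

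Concretely, first I would center: set $Y_t = X_t - \mu$, so that $\Mean Y_t = 0$ and $\Mean|Y_t|^{1+\epsilon}=v$. Writing $p=1+\epsilon\in(1,2]$, the target event is $\{|\widehat{\mu}_l-\mu|\ge J\}=\{|\sum_{t}Y_t|\ge NJ\}$, where the sum runs over the $N$ indices in fold $l$. Markov's inequality applied at exponent $p$ then gives
\[
\prob\big(|\widehat{\mu}_l-\mu|\ge J\big)\le \frac{\Mean\big|\sum_t Y_t\big|^{p}}{(NJ)^{p}}.
\]
The key step is to bound the numerator by a sum of marginal moments. Since the $Y_t$ are independent and mean zero with $1\le p\le 2$, the von Bahr--Esseen inequality yields $\Mean|\sum_t Y_t|^{p}\le 2\sum_t \Mean|Y_t|^{p}=2Nv$. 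Substituting back gives $\prob(|\widehat{\mu}_l-\mu|\ge J)\le 2Nv/(NJ)^{p}=2v/(N^{\epsilon}J^{1+\epsilon})$, i.e.\ the fold-wise tail bound of the stated order, with $N=n/K$ the per-fold sample size governing the concentration and the displayed constant $6$ comfortably absorbing the (at most $2$) von Bahr--Esseen constant.

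The main obstacle is precisely this fractional-moment bound on the sum, i.e.\ justifying $\Mean|\sum_t Y_t|^{p}\lesssim \sum_t\Mean|Y_t|^{p}$ for non-integer $p\in(1,2]$, where the usual variance/orthogonality bookkeeping is unavailable. The von Bahr--Esseen inequality (equivalently Marcinkiewicz--Zygmund in this range) is the right black box: it uses only independence and mean zero, both of which hold here by Assumption~1 and by the centering, and requires neither symmetry nor a second moment. If one prefers a self-contained derivation in the spirit of \citet{bubeck2013bandits}, the alternative is a truncation argument: split each $Y_t$ at a threshold calibrated to $NJ$, control the truncated part through its second moment (bounded via $\Mean|Y_t|^{1+\epsilon}$ divided by the truncation level) and the residual tail part by Markov at exponent $1+\epsilon$, then optimize the threshold. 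This reproduces the same $v/(N^{\epsilon}J^{1+\epsilon})$ rate at the cost of a larger explicit constant, which is exactly the slack the factor $6$ provides.
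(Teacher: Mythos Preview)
The paper does not prove this lemma; it is quoted as a result of \citet{bubeck2013bandits} and used as a black box in the proof of Lemma~\ref{lemma:quantile-LB}. Your von Bahr--Esseen route is correct and is in fact the cleanest way to obtain the bound: centering, Markov at exponent $p=1+\epsilon$, and the inequality $\Mean\big|\sum_t Y_t\big|^{p}\le 2\sum_t \Mean|Y_t|^{p}$ for independent mean-zero summands with $p\in[1,2]$ give precisely $2v/(N^{\epsilon}J^{1+\epsilon})$, which the displayed constant $6$ absorbs. You are also right that the concentration is governed by the per-fold size $N$ rather than the total $n$; the $n^{\epsilon}$ in the stated bound appears to be a typo for $N^{\epsilon}$, consistent with how the lemma is subsequently applied (the threshold $J$ there is calibrated to $N^{-\epsilon/(1+\epsilon)}$, and the conclusion of Lemma~\ref{lemma:quantile-LB} is stated in terms of $N$).
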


Next, we prove Lemma~\ref{lemma:quantile-LB}, once it holds, we can follow a very similar proof for Theorem~\ref{thm:DM} to obtain the conclusion in Theorem~\ref{thm:pess-q}. 
% Thus, we only present the proof of Lemma~\ref{lemma:quantile-LB}. 
\begin{lemma}\label{lemma:quantile-LB}
    Under the same notations and conditions in Lemma~\ref{lemma:bubeck}, then $\widehat{\mu}^\mathcal{Q}_{q}$, the $q$-th quantile of $\{\widehat{\mu}_1, \ldots, \widehat{\mu}_K\}$, satisfies
    \begin{align*}
        \prob \left( \widehat{\mu} - (12v)^{\frac{1}{1+\epsilon}} \left(\frac{1}{N} \right)^{\frac{\epsilon}{1 + \epsilon}} > \widehat{\mu}_{q} \right) \geq 1 - \exp(-2K(2q-1)^2).
    \end{align*}
\end{lemma}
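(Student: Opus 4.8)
The plan is to reduce Lemma~\ref{lemma:quantile-LB} to a standard order-statistic-plus-Hoeffding argument applied to the $K$ i.i.d.\ block means, exactly as in the classical analysis of the median-of-means estimator but tracking the general quantile level $q$ rather than the median level $q=1/2$. First I would fix the deviation level $J=(12v)^{1/(1+\epsilon)}(1/N)^{\epsilon/(1+\epsilon)}$ and feed it into Lemma~\ref{lemma:bubeck}. Since $J^{1+\epsilon}=12v\,N^{-\epsilon}$, the tail bound of Lemma~\ref{lemma:bubeck} evaluates to at most $1/2$ for every block, so each block mean obeys $\prob(|\mu-\widehat{\mu}_l|\ge J)\le 1/2$, where $\mu$ denotes the population mean (written $\widehat{\mu}$ in the statement). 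In particular the one-sided bound $\prob(\widehat{\mu}_l<\mu-J)\le 1/2$ holds, equivalently $\prob(\widehat{\mu}_l\ge\mu-J)\ge 1/2$; this per-block probability is the only input extracted from the moment condition.

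Second, I would convert the quantile inequality into a binomial-counting event. Writing the $q$-th lower quantile as $\widehat{\mu}^{\mathcal{Q}}_q=\widehat{\mu}_{(\lceil qK\rceil)}$ for the order statistics $\widehat{\mu}_{(1)}\le\cdots\le\widehat{\mu}_{(K)}$, the target event $\{\mu-J>\widehat{\mu}^{\mathcal{Q}}_q\}$ is exactly the event that at least $\lceil qK\rceil$ of the block means fall below $\mu-J$. Introducing the i.i.d.\ Bernoulli indicators $Y_l=\I(\widehat{\mu}_l\ge\mu-J)$, which are independent because the folds $\mathcal{D}_1,\dots,\mathcal{D}_K$ are disjoint, this rewrites as $\{\sum_{l=1}^K Y_l\le K-\lceil qK\rceil\}$, i.e.\ at most roughly $(1-q)K$ blocks land on the favourable side of the threshold.

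Third, I would control this count with Hoeffding's inequality. Each $Y_l\in\{0,1\}$ is i.i.d.\ with mean $p:=\prob(\widehat{\mu}_l\ge\mu-J)$, for which the chosen $J$ certifies the worst-case value $p=1/2$. Bounding the complementary upper-tail event $\{\sum_l Y_l>(1-q)K\}$ by $\exp(-2K((1-q)-p)^2)$ and inserting $p=1/2$ converts the separation $(1-q)-1/2=1/2-q$ between the quantile level $q$ and the median breakdown level $1/2$ into the sub-Gaussian exponent, producing a bound of the form $\exp(-2K(2q-1)^2)$ after the constant bookkeeping; taking complements then delivers the claimed high-probability statement.

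The hard part will be the third step. The counting argument produces the favourable direction only when the per-block exceedance probability $p$ does not exceed $1-q$, whereas Lemma~\ref{lemma:bubeck} pins $p$ only at its worst-case boundary value $1/2$; making the argument rigorous therefore hinges on inserting exactly this boundary probability supplied by the choice of $J$, and on carefully matching the order-statistic index $\lceil qK\rceil$ to the Hoeffding deviation so that the separation $|2q-1|$ appears with the stated constant. Once this scalar statement is established, Theorem~\ref{thm:pess-q} follows by replacing the abstract block means $\widehat{\mu}_l$ with the LSTD-Q estimators $\widehat{Q}_k(s,a)$ and substituting the per-block error rate of Theorem~\ref{thm:DM} for $J$, exactly as in the proof of that theorem.
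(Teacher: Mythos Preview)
Your three-step plan---per-block tail bound via Lemma~\ref{lemma:bubeck}, reduction of the quantile event to a Bernoulli count, then Hoeffding---is exactly the skeleton the paper uses. The substantive gap is in how you choose $J$, and it costs you the stated exponent.

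You set $J=(12v)^{1/(1+\epsilon)}N^{-\epsilon/(1+\epsilon)}$, the constant appearing in the lemma \emph{statement}; feeding this into Lemma~\ref{lemma:bubeck} gives only $p:=\prob(\widehat\mu_l\ge\mu-J)\ge 1/2$. The paper's \emph{proof}, however, takes $J=(6v/q)^{1/(1+\epsilon)}N^{-\epsilon/(1+\epsilon)}$ (so the statement and proof in fact carry different constants), which yields the sharper lower bound $p\ge 1-q$. This is what drives the exponent: the paper gets $p-q\ge 1-2q$, hence $2K(p-q)^2\ge 2K(2q-1)^2$. Your bound $p\ge 1/2$ gives a gap of only $(1-2q)/2$, so the Hoeffding exponent is at best $2K\cdot(1-2q)^2/4=K(2q-1)^2/2$---a factor of four short of the claim. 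The ``constant bookkeeping'' you invoke cannot recover this.

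The concern you flag in step three is also real, and your proposed fix does not work: you cannot insert the boundary value $p=1/2$ into an upper-tail Hoeffding bound of the form $\prob\bigl(\sum_l Y_l>(1-q)K\bigr)\le\exp\bigl(-2K((1-q)-p)^2\bigr)$, because that inequality requires $1-q>p$, and a \emph{lower} bound on $p$ goes the wrong way (larger $p$ makes the event more likely, not less). The paper sidesteps this by writing the relevant event as $\{\sum_l Y_l\ge qK\}$ and then using $p\ge 1-q\ge q$; with the threshold $qK$ sitting below the mean $Kp$, the Hoeffding bound becomes monotone in $p$ in the needed direction, and substituting the worst case $p=1-q$ delivers $\exp(-2K(2q-1)^2)$.
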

\begin{proof}
    According to Lemma~\ref{lemma:bubeck}, for each $l \in \{1, \ldots, K\}$, we have
    $$\prob\left(\mu - J \leq \widehat{\mu}_l\right) \geq \prob\left( |\widehat{\mu}_l - \mu| \leq J \right) \geq 1 - \frac{6v}{n^{\epsilon} J^{1+\epsilon}}.$$
    Let $Y_l = I(\mu - J \leq \widehat{\mu}_l)$ with $p \coloneqq \Mean(Y_l) \geq 1 - \frac{6v}{n^{\epsilon} J^{1+\epsilon}}$. Then, according to the definition of $\widehat{\mu}_{q}$, we have
    \begin{equation}\label{eqn:quantile-bound}
        \prob \left(\mu -J \leq \widehat{\mu}_q \right) 
        % &
        = \prob \left( \sum_{l=1}^K Y_l \geq qK \right) 
        % \\&
        \leq \exp\left(-2K(q - p)^2\right),
    \end{equation}
    where the second inequality comes from the Hoeffding inequality. 
    Note that for
    \begin{align*}
        J = \left(\frac{6v}{q} \right)^{\frac{1}{1+\epsilon}} \left(\frac{1}{N} \right)^{\frac{\epsilon}{1+\epsilon}},
    \end{align*}
    we can easily see that $p \geq 1 - q \geq q$ (due to $q \leq 0.5$), and thus, Equation~\eqref{eqn:quantile-bound} can be simplified to:
    \begin{align*}
        \prob \left(\mu -J \leq \widehat{\mu}_q \right) \leq \exp\left(-2K(2q - 1)^2\right).
    \end{align*}
    % This finally leads to the conclusion.
\end{proof}
% Upon on the proof of Lemma~\ref{lemma:quantile-LB}, we turn the proof for Theorem~\ref{thm:pess-q}.
% \begin{proof}
    % We divide the proof into two parts. In the first part, we show that, when LSPI converges, for the estimated optimal policy $\widehat{\pi}^{opt}$,  
    % \begin{align*}
    %     Q^{\widehat{\pi}^{opt}} - J \geq \mathcal{Q}_{q}
    % \end{align*}
% \end{proof}

\setcounter{algorithm}{2}
\section{ALGORITHM DETAILS}
\subsection{The ROAM-MIS Algorithm}\label{sec:mis}
\begin{algorithm}
    \caption{Robust Off-policy Evaluation via Median-of-means based Marginal Important Sampling (\name{ROAM-MIS})}\label{alg:roam-mis}
    \begin{algorithmic}[1]
        \INPUT Policy $\pi$, data $\mathcal{D}$, number of data partitions $K$, decay rate $\gamma$,  base marginal important sampling ratio estimation algorithm \name{BaseMIS}
        \STATE Partition trajectories $\mathcal{D}$ into $K$ parts: $\data_1, \ldots, \data_K$.
        \FOR{$k = 1, \dots, K$}
        \STATE $\widehat{\omega}^{\pi}_k \leftarrow \name{BaseMIS}(\pi, \mathcal{D}_k, \gamma)$ 
        \ENDFOR
        \STATE $\widehat{J}^\pi \leftarrow  \median ( \left\{ \mathbb{E}_{\data_k} [\widehat{\omega}^{\pi}_k(S, A) R] \right\}_{k=1}^K )$
        \OUTPUT $\widehat{J}^\pi$
    \end{algorithmic}
\end{algorithm}

\subsection{The ROOM-FQE Algorithm}\label{sec:roam-fqe}

Algorithm~\ref{alg:fqe-mm-internal} derives robust intermediate estimators by replacing the heavy-tailed target $Y=R+\gamma \mathbb{E}_{a \sim \pi(S^{\prime})} Q\left(S^{\prime}, a\right)$ with a MM-type target $Y=R+\gamma \mathbb{E}_{a \sim \pi (S^{\prime})} \operatorname{Median}(\{\widehat{Q}^\pi_{k}\left(S^{\prime}, a\right)\}_{k=1}^K)$. However, one issue is that, these estimators $\{\widehat{Q}_{k}^\pi\}_{k=1}^K$ (and all estimators after this update including the final ones) in Algorithm~\ref{alg:fqe-mm-internal} are not independent any longer. Therefore, it is not clear whether or not MM can still have theoretical benefits. Thus we only study this variant empirically.

\begin{algorithm}[H]
\caption{\underline{R}obust \underline{O}ff-policy Ev\underline{a}luation via \underline{M}edian-of-means based \underline{F}itted \underline{Q}-\underline{E}valuation (\name{ROAM-FQE})}\label{alg:fqe-mm-internal}
\begin{algorithmic}[1]
\INPUT Policy $\pi$, Data $\mathcal{D}$, decay rate $\gamma$, number of iterations $M$, number of partitions $K$.
\STATE Partition data $\mathcal{D}$ into $K$ disjoint parts: $\mathcal{D}_1, \ldots, \mathcal{D}_K$
\STATE Initialize $K$ Q-functions $\widehat{Q}^\pi_1, \ldots, \widehat{Q}^\pi_K$ with corresponding parameters $w_1, \ldots, w_K$
\FOR {$m=1, \ldots, M$}
\FOR {$k=1, \ldots, K$}
\STATE For each $\left(S, A, R, S^{\prime}\right) \in \mathcal{D}_k$, compute:
% \begin{align*}
% Y \leftarrow R+\gamma \operatorname{Median}(\{\mathbb{E}_{a \sim \pi\left(S^{\prime}\right)} \widehat{Q}_{k}^\pi\left(S^{\prime}, a\right)\}_{k=1}^K)
% \end{align*}
$Y \leftarrow R+\gamma \operatorname{Median}(\{\mathbb{E}_{a \sim \pi\left(S^{\prime}\right)} \widehat{Q}_{k}^\pi\left(S^{\prime}, a\right)\}_{k=1}^K)$
% \STATE  $\quad$ Update $Q_{w_k}:$
% \begin{align*}
% w_k \leftarrow \underset{w_k}{\arg \min } \mathbb{E}_{\mathcal{D}_k}(Y-\widehat{Q}_{w_k}^\pi(S, A))^2
% \end{align*}
\STATE  Update $\widehat{Q}^\pi_{k}$ by:
$w_k \leftarrow \underset{w_k}{\arg \min } \mathbb{E}_{\mathcal{D}_k}(Y-\widehat{Q}_{k}^\pi(S, A; w_k))^2$
\ENDFOR
\ENDFOR
\STATE $\widehat{J}^\pi \leftarrow \mathbb{E}_{s \sim \mathbb{G}, a \sim \pi(s)} \operatorname{Median}(\{\widehat{Q}_{k}^\pi(s, a)\}_{k=1}^K)$
\OUTPUT $\widehat{J}^\pi$
\end{algorithmic}
\end{algorithm}

\subsection{The ROOM-FQI Algorithm}\label{sec:room-fqi-algorithm}
Analogous to Algorithm~\ref{alg:fqe-mm-internal}, for iterative OPO algorithms, we can apply MM in every TD update. Take FQI as an example, we replace the definition of $Y$ in the Step 5 of Algorithm~\ref{alg:fqe-mm-internal} by:
% \begin{align*}
% Y \leftarrow R+\gamma \operatorname{Median}(\{\max _a \widehat{Q}_{w_k}^*\left(S^{\prime}, a\right)\}_{k=1}^K),
% \end{align*}
$Y \leftarrow R+\gamma \operatorname{Median}(\{\max_a \widehat{Q}_{k}^*\left(S^{\prime}, a\right)\}_{k=1}^K)$,
then we can obtain a robust FQI algorithm. 
% We refer to the resulting algorithm as $\name{ROOM-FQI}$ and defer its details to Algorithm~\ref{alg:mm-fqi-internal} in Appendix~\ref{sec:room-fqi-algorithm}. 
% Similar as \name{ROAM-FQE}, we mainly study two variants empirically. 
\begin{algorithm}[H]
\caption{\underline{R}obust \underline{O}ffline-Policy \underline{O}ptimization via \underline{M}edian-of-mean based \underline{F}itted \underline{Q}-\underline{I}teration (\name{ROOM-FQI})}\label{alg:mm-fqi-internal}
\begin{algorithmic}[1]
\INPUT Data $\mathcal{D}$, decay rate $\gamma$, number of iterations $M$, number of data partitions $K$.
% , uncertainty function $\Gamma^c(\cdot)$}
\STATE Partition data $\mathcal{D}$ into $K$ disjoint parts: $\data_1, \ldots, \data_K$. 
\STATE Initialize Q-functions $\widehat{Q}^*_1, \ldots, \widehat{Q}^*_K$ with parameters $w_1, \ldots, w_K$.
\FOR{$m = 1, \dots, M$}
\FOR{$k = 1, \dots, K$}
\STATE For each $\left(S, A, R, S^{\prime}\right) \in \mathcal{D}_k$, compute:
$Y= R + \gamma \max\limits_a \operatorname{Median}(\{\widehat{Q}^*_{k}(S', a)\}_{k=1}^K)$.
\STATE Update $Q_{k}$:
$w_k \leftarrow \argmin\limits_{w_k} \mathbb{E}_{\mathcal{D}_k}(Y - \widehat{Q}^*_{k}(S, A; w_k))^2$.
% \begin{align*}
%     w_k \leftarrow \argmin_{w_k} \sum_{(S, A, R, S') \in \mathcal{D}_k}(y - \widehat{Q}^\pi_{w_k}(S, A))^2,
% \end{align*}
\ENDFOR
% \STATE Sample $(S, A, R, S')$ from $\mathcal{D}$ and compute $y = \textup{Median}(\{Q_{w_k}(S, A)\}_{k=1}^K)$. 
% \STATE Update $Q_{w_0}$ via $w_0 \leftarrow w_0 + \left(y - Q_{w_0}(S, A)\right)\nabla_{w_0}Q_{w_0}(S, A).$
\ENDFOR
\STATE $\widehat{\pi}^*(s) \leftarrow \argmax\limits_a \textup{Median}(\{\widehat{Q}^*_{k}(s, a)\}_{k=1}^K)$.
\OUTPUT Policy $\widehat{\pi}^*$
\end{algorithmic}
\end{algorithm}	
Moreover, we also consider its pessimistic variant by using a quantile operator $\mathcal{Q}_q(\cdot)$ rather than the median operator in Step~5 of Algorithm~\ref{alg:mm-fqi-internal}. We denoted such a variant as \name{P-ROOM-FQI}.

\section{EXPERIMENTS: DETAILS}

% This rule of thumb also avoids the high computational burden caused by cross validation. Besides, this choice consumes fewer computational resources compared with the other large choice for $K$. 

\subsection{Settings for OPE}\label{sec:implement-ope-cartpole}
In the experiments for OPE (see Section~\ref{sec:experiments-ope}), we implement the minimax-optimal off-policy evaluation algorithm \citep{duan2020minimax} as the benchmarke \name{FQI} algorithm. 
Specifically, we use \name{Ridge} in \name{scikit-learn} with $\ell_2$-penalty fixed at 0.01. The implemented \name{FQI} algorithm serves as the \name{BaseOPE} algorithm for \name{ROAM-DM}. The implementation of \name{ROAM-FQE} also uses the same \name{Ridge} to update $Q_{w_k}$ in the Step 6 in Algorithm~\ref{alg:fqe-mm-internal}. The maximum number of iterations of all algorithms in Section~\ref{sec:experiments-ope} are fixed at 100. 

Next, we discuss the tuning of our algorithms. The only additional tuning parameter of \name{ROAM}-type algorithms is the number of partitions $K$, compared with its corresponding base algorithm. In our experiments, fixing $K=5$ already provides the desired performance and maintains a high computational efficiency. 
% In Algorithms~\ref{alg:mm-final} and~\ref{alg:fqe-mm-internal}, $K$ is the number of partitions for  dataset $\data$, which is an important hyperparameter. 
% \red{Even for the MM estimator of the population mean, the optimal value of $K$ depends on the data size and underlying distribution $F$, where the explicit expression }  
% \red{In theory, the optimal value of $K$ depends on the size of offline data and the underlying MDP, and it has no closed-form expression in general. Consequently, it is hard to decide. }
% One approach for approximating the optimal one is by cross validation although it may be too computationally intensive. 
In Appendix~\ref{sec:select-K}, we try a range of values for $K$ and find that our algorithms are  insensitive to this tuning parameter. 
% Our ablation studies for $K$ show that, we choose .. in most cases,  can lead to fairly good performance. 
One may choose this parameter via an adaptive method \citep{lugosi2019mean} as well.  

\subsection{Settings for OPO}
\subsubsection{Cartpole environment}\label{sec:implement-opo-cartpole}
For the experiments for OPO at Section~\ref{sec:experiments-opo-cartpole}, we implement the ridge-regression-based \name{FQI} algorithm as the benchmarked algorithm and the \name{BaseOPO} algorithm for \name{ROOM-VM}.
The \name{FQI} uses \name{Ridge} in \name{scikit-learn} to solve the optimal Q-function. We implement the \name{ROOM-FQI} with the same ridge regression with the same settings. For pessimistic variant of \name{ROOM}-type algorithms, we need an additional tuning parameter $q$, i.e., the quantile level. As argued in \citet{zhou2022optimizing}, the fact that one quantile \textit{explicitly} corresponds to one confidence level makes the tuning much easier than most existing methods where the relationship between the pessimism parameters and the confidence level is implicit and unknown. According to empirical results in Appendix~\ref{sec:selection-pess}, we find $q \in [0.1, 0.4]$ perform fairly well.  
We fix $q=0.1$ in our experiments.

We also implement a pessimistic-bootstapping OPO method, \name{PB}, to give a more comprehensive comparison. It is the same as \name{ROOM-VM} except the two modifications:
\begin{itemize}
    \item the Step 1 in Algorithm~\ref{alg:fqi-mm-final} is changed to: ``Sample $K$ bootstrapped samples from $\data$: $\data_1, \ldots, \data_K$'';
    \item the Step 5 in Algorithm~\ref{alg:fqi-mm-final} is modified to: 
    $$\widehat{\pi}(s) \leftarrow \argmax\limits_{a} \left[ \textup{Mean}(\{\widehat{Q}^*_k(s, a)\}_{k=1}^K) - 2\times \textup{Std}(\{\widehat{Q}^*_k(s, a)\}_{k=1}^K) \right] \;\textup{for any}\; s.$$
\end{itemize}
% The first version, called \name{PB}, is the same as \name{ROOM-VM} except the following two line replacements:
% The second version, named as \name{PB-FQI}, adopts Algorithm~\ref{alg:mm-fqi-internal} and performs the same modifications above. 

% \subsubsection{D4RL benchmark}\label{sec:implement-opo-cartpole}

\subsubsection{Mujoco environments}
\textbf{Datasets.} All D4RL datasets \citep{fu2020d4rl} on MuJoCo environments in the experiments are the ``v2'' version. The datasets on the Kitchen environment are the ``v0'' version.

\textbf{Network architecture.} The implementations of \name{SQL} is based on an open-source implementations from GitHub\footnote{\url{https://github.com/gwthomas/IQL-PyTorch}}, which largely reproduce the results in \citet{kostrikov2022offline}. Following the same architecture in SQL, both the critic and value networks are two-layer multi-layer perceptron (MLP) with 256 hidden nodes and ReLU activations. We recruit a deterministic policy network whose architecture is the same as critic network. 

The implementation of \name{N-SAC} is upon a public Github repository for SAC\footnote{\url{https://github.com/pranz24/pytorch-soft-actor-critic}}. Our implementation completely adopt the same actor-critic architecture in \citet{an2021uncertainty}. Specifically, the critic network is a three-layer MLPs with 256 hidden nodes and ReLU activations. The policy in \name{SAC-N} is a Gaussian policy network which enables automatic entropy tuning. As for \name{SAC-N (STD)} to be introduced in Section~\ref{sec:experiments-sacn}, it inherits the same architecture and hyperparameters as \name{SAC-N}. 

% We implement \name{SQL} upon an open-source implementation. Notably, the simplicity of \name{ROOM-VM} requires minimal modifications of the existing implementation. 

\textbf{Hyperparameters.} For the behavior-regularized term $\alpha$ in \name{SQL}, we set $\alpha = 10$ since Table~3 in \citet{xu2023offline} reports $\alpha=10$ leads to the best average result in MuJoCo environment. For the remaining parameters in \name{SQL}, we set them as their default hyperparameters, which are listed in Table~\ref{tab:hyperparameters}. Notice that, once we complete training \name{ROOM-VM}, the learned critics can be reused for \name{MA} and \name{P-ROOM-VM}. We recruit this programming trick to reduce the time for experiments. 
\begin{table}[htbp]
    \centering
    \caption{The hyperparameters of \name{SQL} used in the experiments for D4RL tasks.}
    \begin{tabular}{c|cc}
        \toprule
        & Hyperparameter & Value \\
        \midrule
        \multirow{7}{*}{\name{SQL}} 
        & Optimizer & Adam \citep{kingma2014adam} \\
        & Value/Critic learning rate & $3 \time 10^{-4}$ \\
        & Actor learning rate & Cosine schedule \citep{loshchilov2017sgdr} \\ 
        & Critic target update rate & $5 \times 10^{-3}$ \\
        & Mini-batch size & 256 \\
        & behavior-regularized $\alpha$ & 10 \\
        \midrule
        \name{ROOM-VM} \& \name{MA}   & Data partition $K$ & 5 \\
        \midrule
        \name{P-ROOM-VM} & Quantile $q$ & 0.0 \\
        \bottomrule
    \end{tabular}
    \label{tab:hyperparameters}
\end{table}

We summarized the hyperparameters for train \name{SAC-N} in Table~\ref{tab:nsac-hyperparameters}. 
\begin{table}[htbp]
    \centering
    \caption{The hyperparameters of \name{SAC-N} used in the experiments for D4RL tasks.}
    \begin{tabular}{cc}
        \toprule
        Hyperparameter & Value \\
        \midrule
        Critic/actor learning rate & $3 \time 10^{-4}$ \\
        Critic target update rate & $5 \times 10^{-3}$ \\
        Mini-batch size & 256 \\
        Ensemble number (a.k.a., $K$) & 10 \\
        Temperature     & 0.2 \\
        \bottomrule
    \end{tabular}
    \vspace{3pt}
    \label{tab:nsac-hyperparameters}
\end{table}

% \begin{figure}[H]
%     \centering
%     \includegraphics[width=\linewidth]{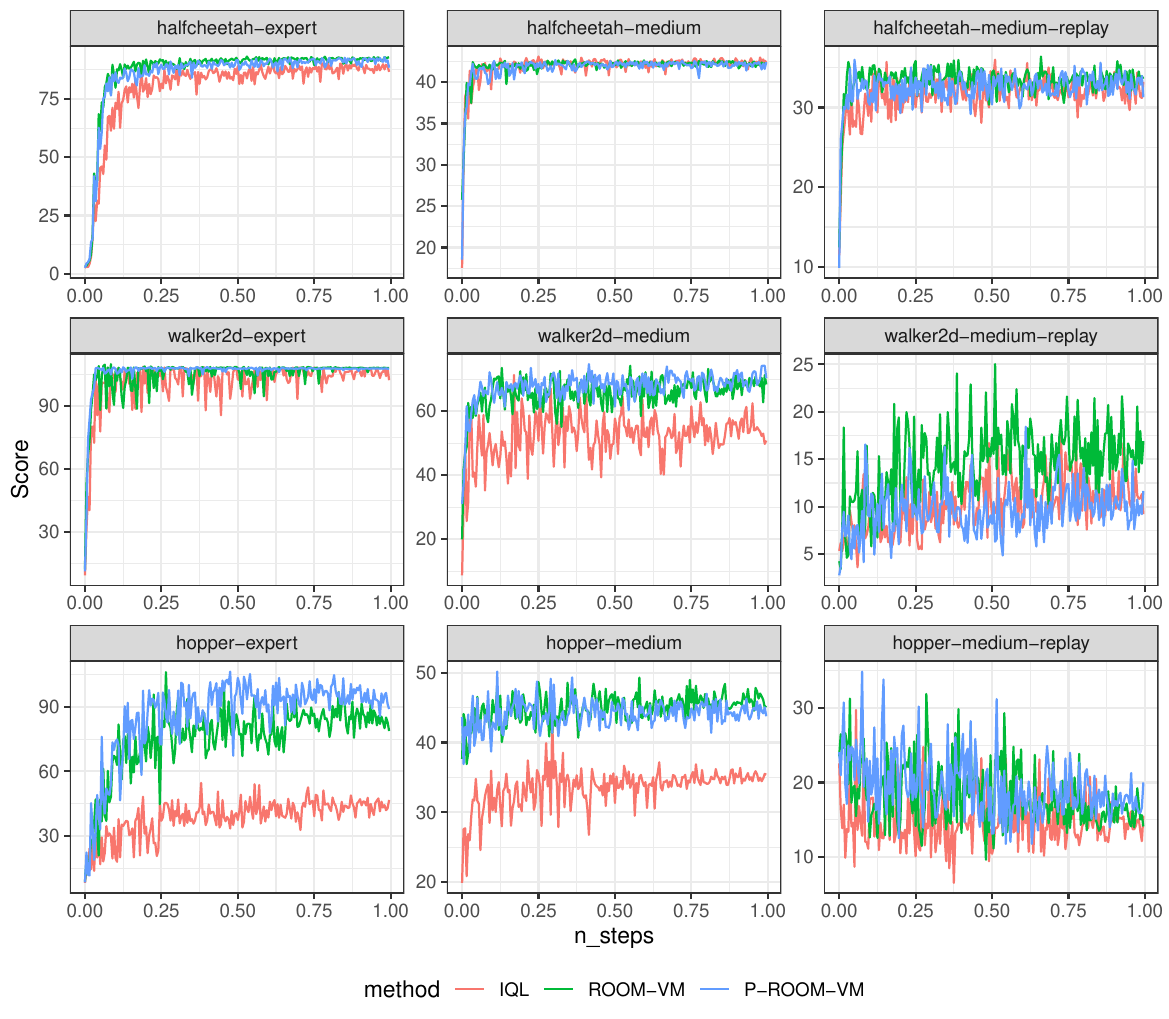}
%     \vspace{-12pt}
%     \caption{Learning curves comparing the performance of \name{ROOM-VM} against \name{IQL} in the D4RL datasets with heavy-tailed noises on the rewards. Curves are averaged over 5 seeds and are smoothed by simple moving averages over three periods. }
%     \label{fig:sql-mm-mujoco}
% \end{figure}

\subsection{Computation Details}
\textbf{Hardware infrastructure.} The experiments in Carpole environment will finish in 5 hours on a personal laptop with 2.6 GHz 6-Core Intel Core i7 and 16 GB memory. 

As for the experiments for D4RL datasets, \name{ROOM-VM} generally consumes 14 hours to train a agent on a task with a machine with GPU Tesla P-100, while \name{SAC-N} roughly takes round 30 hours to train on the same device. 

\textbf{Time complexity analysis.} Let the computational cost of the base algorithm be $c(N)$ for a dataset with $N$ transition tuples. Our algorithm in general yields $\mathcal{O}(K \times c(N/K))$. For those based methods that have a linear computational cost in $N$ (e.g., FQE and FQI; see derivations in \citet{shi2021deeply}), our computational cost is at the same order. Moreover, our method is embarrassingly parallel.

\section{ADDITIONAL EXPERIMENTS AND RESULTS}\label{sec:addition-experiments}
\subsection{Learning Curves of \name{SQL}}\label{sec:lr-sql}
Figures~\ref{fig:sql-mm-mujoco} and~\ref{fig:sql-mm-kitchen} present the learning curves on MuJoCo and Kitchen environments, where the evaluations is conducted every 5000 training steps. Each curve is averaged over 5 seeds and is smoothed by simple moving averages over three periods. 
\begin{figure}[!t]
    \centering
    \includegraphics[width=\linewidth]{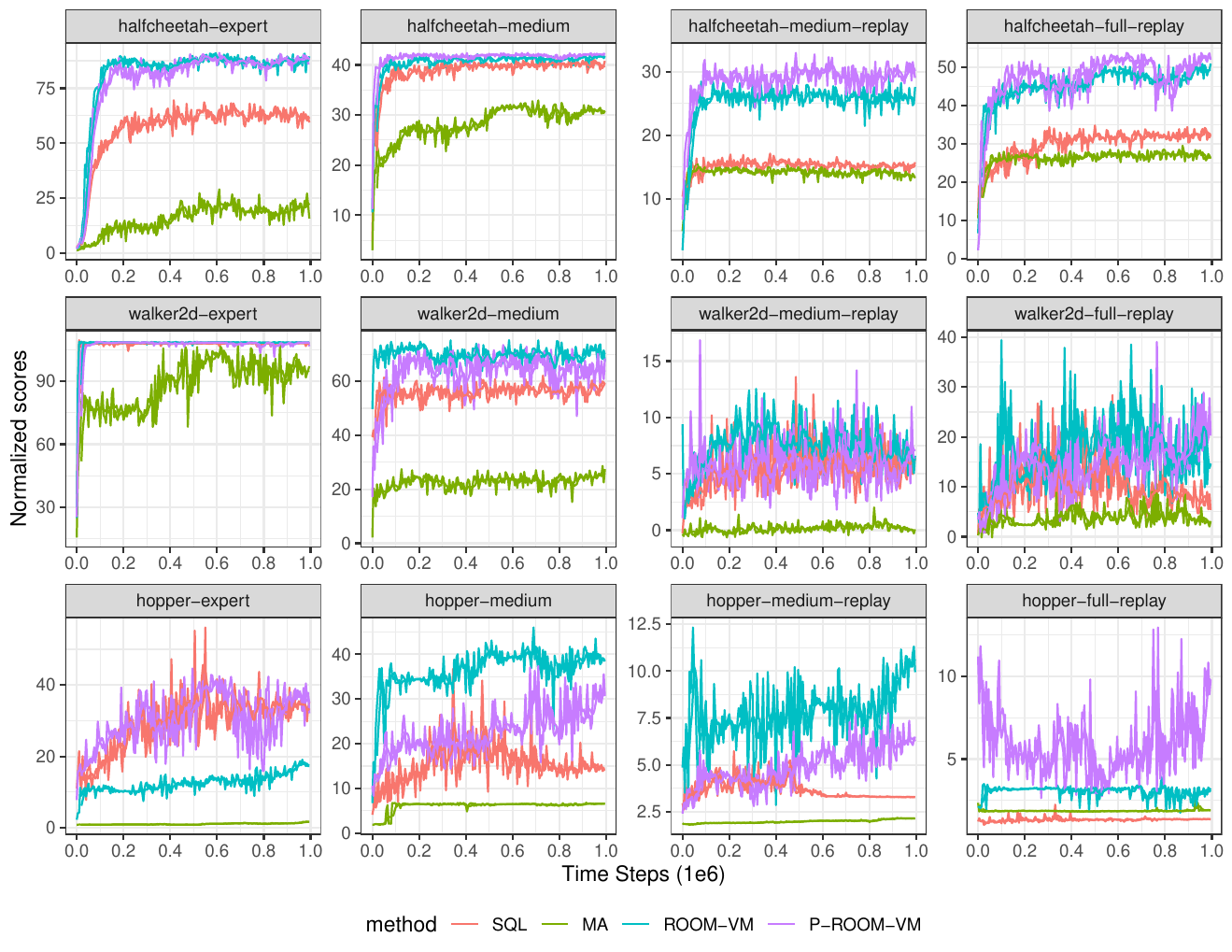}
    \caption{Learning curves of \name{SQL}, \name{MA}, \name{ROOM-VM}, \name{P-ROOM-VM} on D4RL MuJoCo locomotion datasets. }
    \label{fig:sql-mm-mujoco}
\end{figure}
\begin{figure}[!t]
    \centering
    \includegraphics[width=0.8\linewidth]{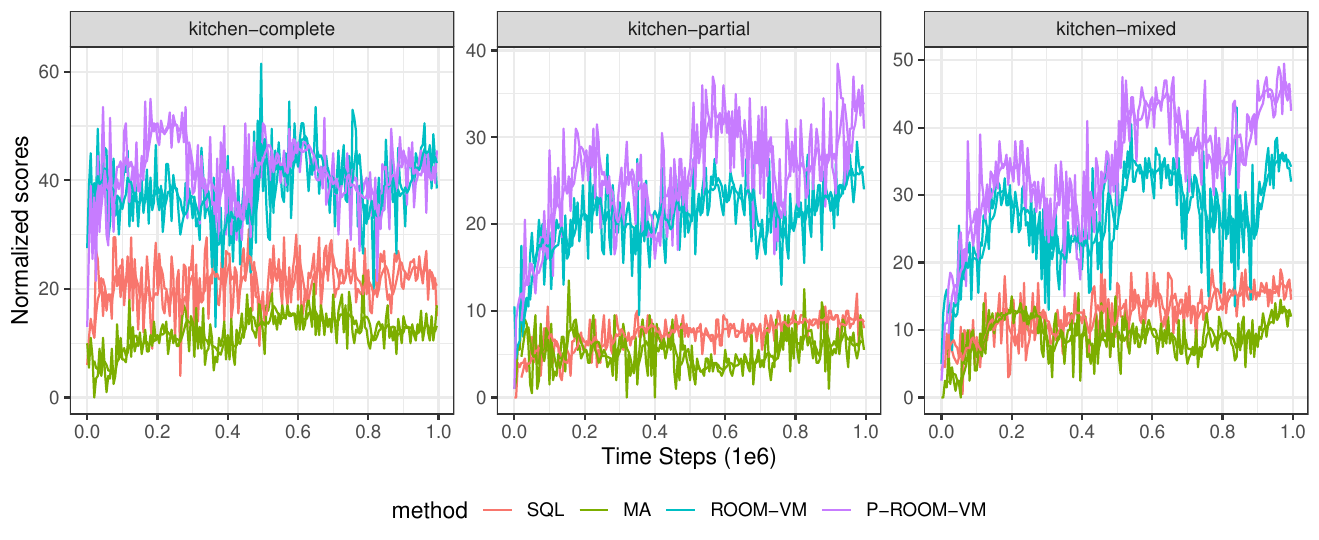}
    \caption{Learning curves of \name{SQL}, \name{MA}, \name{ROOM-VM}, \name{P-ROOM-VM} on D4RL Kitchen datasets. }
    \label{fig:sql-mm-kitchen}
\end{figure}

\subsection{Comparison with \name{IQL}}\label{sec:experiments-iql}
To show that our framework is general, in these datasets, we use another SOTA algorithm, implicit Q-learning (\name{IQL}, \citet{kostrikov2022offline}), as our \name{BaseOPO} algorithm.
We run IQL using the open-source implementation. Notably, the simplicity of ROOM-VM requires minimal modifications of the existing implementation. We adopt the data generation and model evaluation in Section~\ref{sec:experiments}. 

We report the performance in Table~\ref{tab:iql-results}. In almost all cases, our method is significantly superior to \name{IQL}. Notably, \name{ROOM-VM} can tackle the insufficient coverage challenge by employing \name{IQL} that can cope with insufficient coverage \citep{xu2023offline}. Moreover, the superiority of \name{ROOM-VM} and \name{P-ROOM-VM} are clearer in the walker and hopper environments, because these two environments are more challenging than others. For example, the returns of \name{ROOM-VM} and \name{P-ROOM-VM} on walker-medium are 20\% higher than that of \name{IQL}, and the returns of \name{ROOM-VM} and \name{P-ROOM-VM} on hopper-expert are about 200\% of  \name{IQL}'s returns. It is also worth noting that \name{P-ROOM-VM} shows comparable performance with \name{ROOM-VM} in most cases, and \name{P-ROOM-VM} has superior performance in the expert generating datasets because this setup has more severe data insufficient coverage issue. 
% \name{ROOM-VM} can tackle the insufficient coverage challenge by employing \name{SQL} that can cope with insufficient coverage \citep{xu2023offline}. 
% Moreover, the superiority of \name{ROOM-VM} and \name{P-ROOM-VM} are clearer in the walker and hopper environments, because these two environments are more challenging than others. 
% Finally, Figure~\ref{fig:sql-mm-mujoco} in the appendix shows that our methods generally converges more quickly than \name{IQL}.

\begin{table}[htbp]
    \centering
    \caption{Results for D4RL datasets. Each result is the division of average normalized score of \name{ROOM-VM} (or \name{P-ROOM-VM}) and \name{IQL}. We takes over the final 10 evaluations and 5 seeds. $\pm$ captures the 2 times standard error over 5 seeds.}
    \begin{tabular}{l|cc}
        \toprule
        Task Name	               & \name{ROOM-VM} 	  & \name{P-ROOM-VM}    \\
        \midrule
        halfcheetah-expert	       & 1.04 $\pm$ 0.02      & 1.03 $\pm$ 0.02     \\
    walker2d-expert	           & 1.02 $\pm$ 0.02      & 1.02 $\pm$ 0.02     \\
        hopper-expert	           & 1.89 $\pm$ 0.12      & 2.14 $\pm$ 0.13     \\
        \hline
        halfcheetah-medium	       & 0.99 $\pm$ 0.01      & 0.98 $\pm$ 0.01     \\
        walker2d-medium	           & 1.31 $\pm$ 0.08      & 1.40 $\pm$ 0.06     \\
        hopper-medium	           & 1.32 $\pm$ 0.03      & 1.28 $\pm$ 0.02     \\
        \hline
        halfcheetah-medium-replay  & 1.04 $\pm$ 0.04      & 1.02 $\pm$ 0.05     \\
        walker2d-medium-replay	   & 1.56 $\pm$ 0.26      & 0.95 $\pm$ 0.16     \\
        hopper-medium-replay	   & 1.15 $\pm$ 0.14      & 1.20 $\pm$ 0.11     \\
        \bottomrule
    \end{tabular}
    \label{tab:iql-results}
\end{table}
% \begin{wraptable}{r}{0.6\textwidth}
%     \centering
%     \begin{tabular}{l|cc}
%         \toprule
%         Task Name	               & \name{ROOM-VM} 	  & \name{P-ROOM-VM}    \\
%         \midrule
%         halfcheetah-expert	       & 1.04 $\pm$ 0.02      & 1.03 $\pm$ 0.02     \\
%     walker2d-expert	           & 1.02 $\pm$ 0.02      & 1.02 $\pm$ 0.02     \\
%         hopper-expert	           & 1.89 $\pm$ 0.12      & 2.14 $\pm$ 0.13     \\
%         \hline
%         halfcheetah-medium	       & 0.99 $\pm$ 0.01      & 0.98 $\pm$ 0.01     \\
%         walker2d-medium	           & 1.31 $\pm$ 0.08      & 1.40 $\pm$ 0.06     \\
%         hopper-medium	           & 1.32 $\pm$ 0.03      & 1.28 $\pm$ 0.02     \\
%         \hline
%         halfcheetah-medium-replay  & 1.04 $\pm$ 0.04      & 1.02 $\pm$ 0.05     \\
%         walker2d-medium-replay	   & 1.56 $\pm$ 0.26      & 0.95 $\pm$ 0.16     \\
%         hopper-medium-replay	   & 1.15 $\pm$ 0.14      & 1.20 $\pm$ 0.11     \\
%         \bottomrule
%     \end{tabular}
%     \caption{Results for D4RL datasets. Each result is the division of average normalized score of \name{ROOM-VM} (or \name{P-ROOM-VM}) and \name{IQL}. We takes over the final 10 evaluations and 5 seeds. $\pm$ captures the 2 times standard error over 5 seeds.}
%     \label{tab:mujoco-env}
% \end{wraptable}

\subsection{Robustness of \name{SAC-N}}\label{sec:experiments-sacn}

It's noteworthy that \name{SAC-N} \citep{an2021uncertainty} can be interpret to \name{P-ROOM-FQI}, as it assesses uncertainty by taking the pointwise minimum (i.e., setting $q=0.0$) of multiple Q-functions but trained on the entire dataset with an soft-actor-critic (SAC) paradigm. Hence, we can be regarded as a heuristic implementation of our approach, and we can expect that it enjoys robustness on heavy-tailed rewards. 

To illustrate, we implement \name{SAC-N} and compare with \name{SAC-N (STD)}, a method that achieves pessimistic estimation for Q function by pointwisely subtracting two times standard deviation of $N$ functions. To rephrase, \name{SAC-N (STD)} replaces the pointwise quantile $\mathcal{Q}_q(\{Q_k(s, a)\}_{k=1}^{N})$ with $\textup{Mean}(\{Q_k(s, a)\}_{k=1}^{N}) - 2 \times \textup{Std}(\{Q_k(s, a)\}_{k=1}^{N})$. We demonstrate the numerical performance \name{SAC-N} and \name{SAC-N (STD)} on halfcheetah-medium-v2 in Figure~\ref{fig:sac-n}. From the left panel of Figure~\ref{fig:sac-n}, we can see that the results of \name{SAC-N} are highly resembles to the results of Figure 1 in \citet{an2021uncertainty}. More importantly, despite \name{SAC-N} and \name{SAC-N (STD)} shares almost the same learning behavior when datasets has no heavy-tailed rewards (see left panel of Figure~\ref{fig:sac-n}), we can see that \name{SAC-N} is shown to be much robust to the heavy-tailed noises while \name{SAC-N (STD)} completely fails at this case. Therefore, it is also highly recommended to use \name{SAC-N} in environments with heavy-tailed rewards. 

\begin{figure}[htbp]
    \centering
    \includegraphics[width=0.8\linewidth]{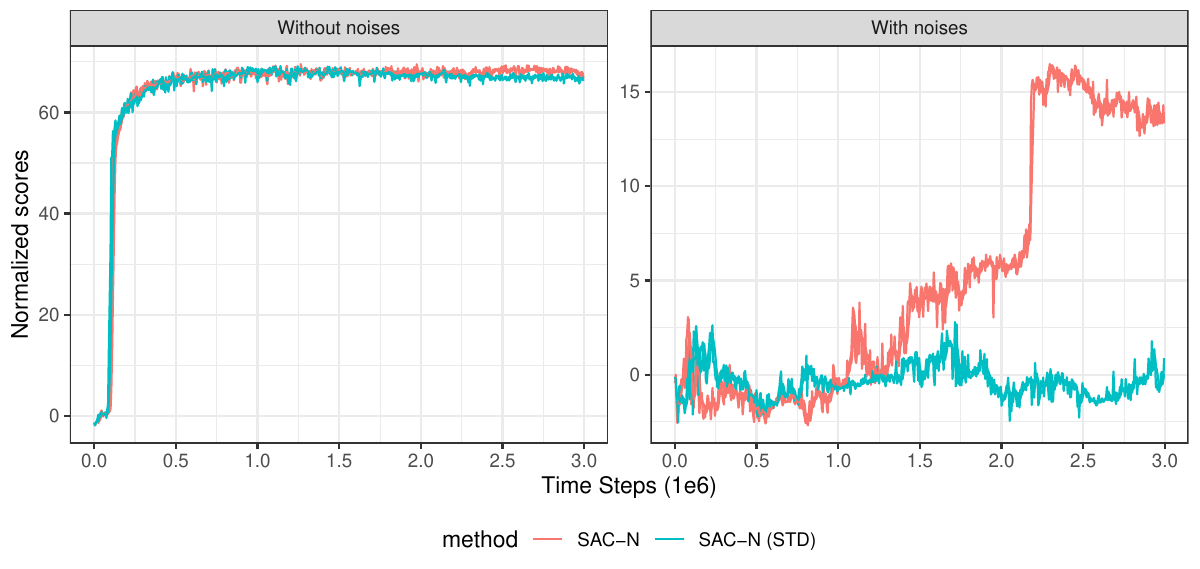}
    \caption{Learning curves comparing the performance of \name{SAC-N} against \name{SAC-N-STD} on the halfcheetah-medium-v2 dataset where the left panel corresponds to the case that heavy-tailed noises are not taken into consideration and the right one vice versa. Curves are averaged over 5 seeds and are smoothed by simple moving averages over three periods. }
    \label{fig:sac-n}
\end{figure}

\subsection{Selection of $K$}\label{sec:select-K}
In this part, we aim to study how the selection of $K$ influence the performance of \name{ROAM-DM}. Out of simplicity, we consider values for $K \in \{3, 4, \ldots, 10\}$, while the other settings adopt that in Section~\ref{sec:experiments-ope}. The estimation error of each algorithm is visualized in Figure~\ref{fig:selection-k}. 
From Figure~\ref{fig:selection-k}, we can see that, our methods exceeds \name{FQE} for all $K \in \{3, 4, \ldots, 10\}$. This implies that, whatever $K$ is taken, our methods are more suitable than \name{FQE} for offline data with heavy-tailed rewards. It is also worthy to note that, the optimal value of $K$ varies across algorithms and the degree of freedom of the heavy-tailed rewards. In terms of degree of freedom, since it is unknown in practice, there has no general criteria to decide the optimal~$K$. We suggest $K=5$ as a rule-of-thumb selection for all of our methods because this selection can result in a good performance. Notice that the comparison between ROAM-based methods and mean aggregation methods reveals taking the median operator is crucial for robustness --- the mean aggregation achieves a poor performance. 
% Moreover, for most of the methods, it seems that different choice for $K$ does not have a heavy impact on the regret. 

\begin{figure}[!t]
    \centering
    \includegraphics[width=0.52\linewidth]{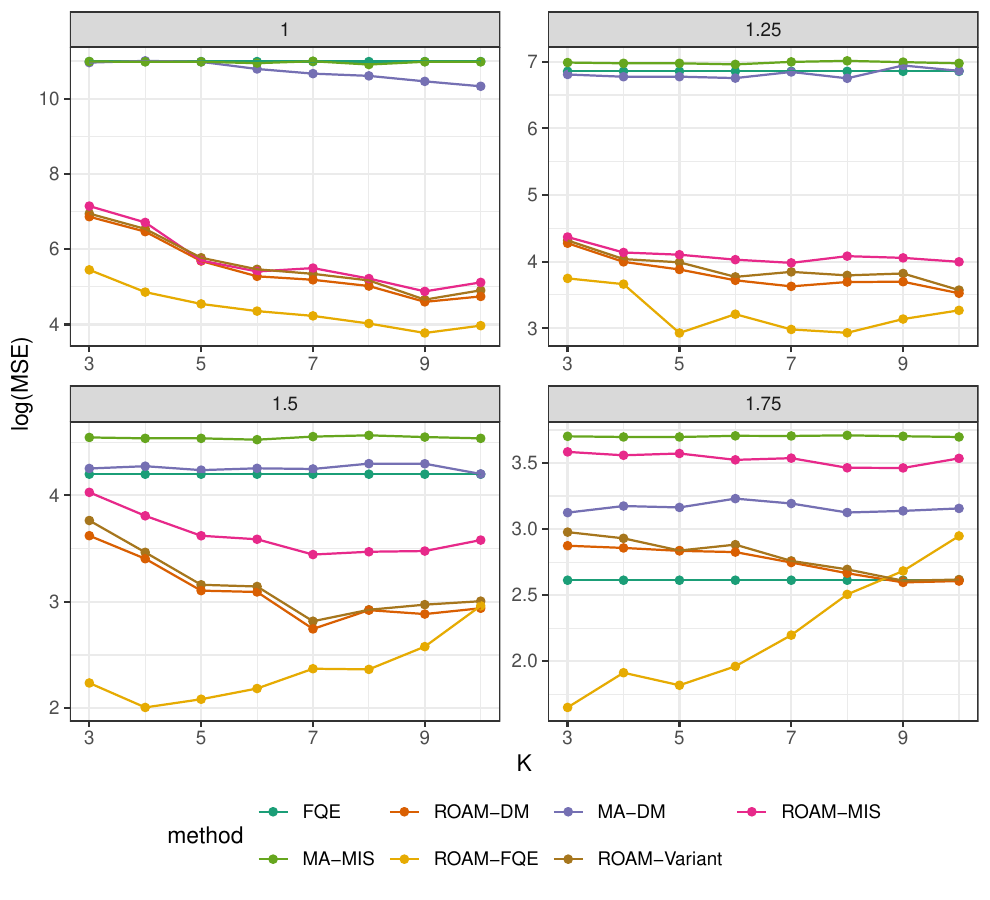}
    \caption{The Ablation on the tuning parameter $K$ for the OPE problem at the Cartpole environment. Each panel corresponds to a certain degree of freedom.}\label{fig:selection-k}
\end{figure}

\subsection{Selection of Pessimistic}\label{sec:selection-pess}
In this part, we aim to study how the selection of $q$ influence the performance of \name{ROAM-DM} and \name{ROOM-FQI}. We consider values for $q \in \{0.1, 0.2, 0.3, 0.4, 0.5\}$, while the other settings adopt that in the experiments on Cartpole environment. We visualize the regret of each algorithm in Figure~\ref{fig:selection-k}, from which we see that, our methods surpass \name{FQI} whatever the value of $q$. Besides, like the choice for $K$, both algorithms and the degree of freedom of the heavy-tailed rewards have an impact on the optimal value of $q$. Figure~\ref{fig:selection-pess} shows $q \in [0.1, 0.4]$ is a rule-of-thumb criterion for the guarantee of admirable numerical performance. 
\begin{figure}[!t]
    \centering
    \includegraphics[width=0.52\linewidth]{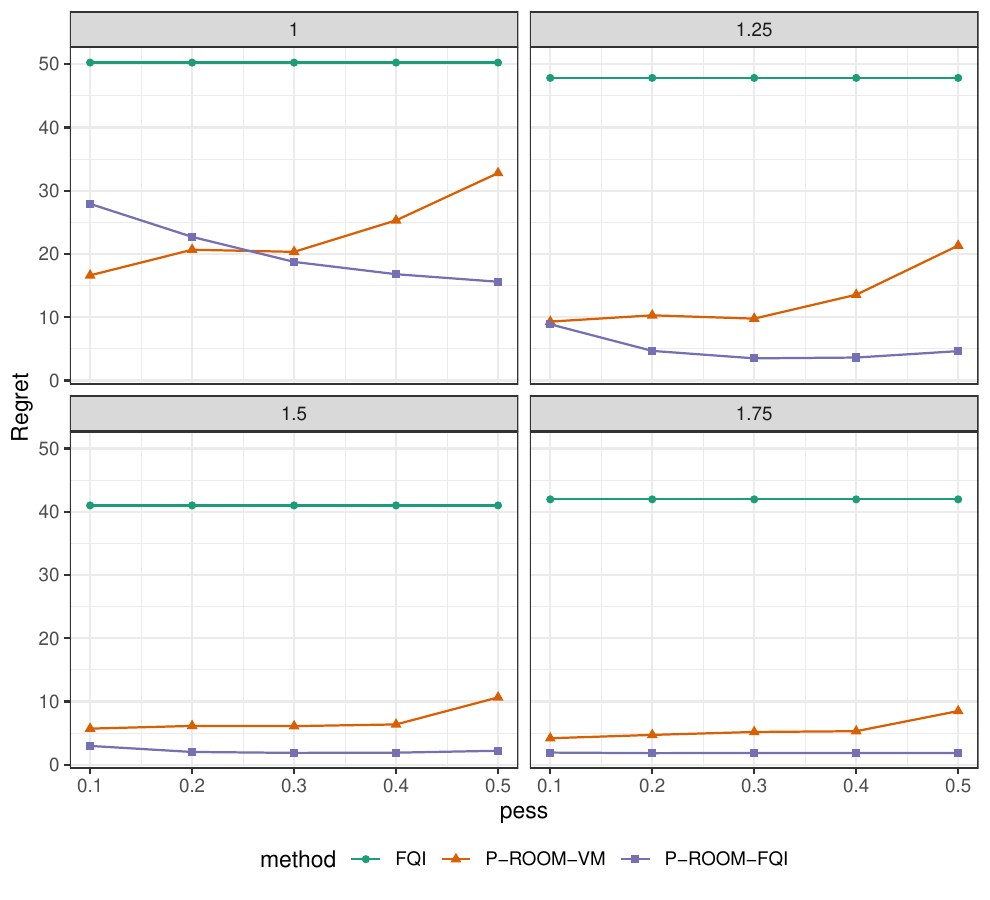}
    \caption{The ablation on the tuning parameter $q$ for the OPE problem at the Cartpole environment. Each panel corresponds to a certain degree of freedom.}\label{fig:selection-pess}
\end{figure}

\section{BROADER IMPACT STATEMENT}

Our approach provides offline RL methods to be applied to systems with heavy-tailed rewards. While our method can properly handle heavy-tailed rewards, it may also neglect the potential societal impact. For instance, heavy-tailed rewards in finance system may arise from fraudulent transactions or attacks on banking systems, which deserves adequate attention. One possible approach to monitor heavy-tailed rewards involves measures the gap between the two sides of Bellman equation. If the resulting value exhibits high variance, then the rewards warrants monitoring.

\end{document}